\def\mA{{\mathcal A}}
\newcommand{\din}{d_{\text{in}}}
\newcommand{\dout}{d_{\text{out}}}
\newcommand{\defeq}{\stackrel{\text{def}}{=}}
\newcommand{\naman}[1]{}
\def\R{{\mathcal R}}
\newcommand{\A}{\mathcal{A}}
\newcommand{\K}{\ensuremath{\mathcal K}}
\def\regret{\mbox{{Regret}}}
\theoremstyle{plain}
\newtheorem{theorem}{Theorem}
\newtheorem{lemma}[theorem]{Lemma}
\newtheorem*{theorem*}{Theorem}
\newtheorem*{lemma*}{Lemma}
\newtheorem*{corollary*}{Corollary}
\newtheorem*{proposition*}{Proposition}
\newtheorem*{claim*}{Claim}
\newtheorem*{fact*}{Fact}
\newtheorem*{observation*}{Observation}
\newtheorem*{assumption*}{Assumption}
\theoremstyle{definition}
\newtheorem{definition}[theorem]{Definition}
\newtheorem*{definition*}{Definition}
\newtheorem*{remark*}{Remark}
\newtheorem*{example*}{Example}
\theoremstyle{plain}
\newtheorem*{theoremaux}{\theoremauxref}
\gdef\theoremauxref{1}
\DeclareMathAlphabet{\mathbfsf}{\encodingdefault}{\sfdefault}{bx}{n}
\DeclareMathOperator*{\argmin}{arg\,min}
\def\mA{{\mathcal A}}
\newcommand{\abs}[1]{|#1|}
\newcommand{\norm}[1]{\|#1\|}
\newcommand{\reals}{\mathbb{R}}
\renewcommand{\leq}{~\le~}
\renewcommand{\geq}{~\ge~}
\let\oldtfrac\tfrac
\renewcommand{\tfrac}[2]{\smash{\oldtfrac{#1}{#2}}}
\let\nablaold\nabla
\renewcommand{\nabla}{\nablaold\mkern-2.5mu}
\newcommand{\id}{I}
\newcommand{\mtrue}{M^{\textrm{true}}}
\newcommand{\mtrueconst}{r}
\newcommand{\proj}{\textrm{Proj}}
\newcommand{\unfairregrettext}{Asymmetric-Regret}
\newcommand{\unfairregret}{\mathrm{Regret}_{\mathrm{Asymmetric},T}}
\title{Provable Length Generalization in Sequence Prediction via Spectral Filtering}
\begin{document}

\author{
    Annie Marsden  \thanks{Equal contribution} \\ 
    \And
    Evan Dogariu  \thanks{Equal contribution}\\ 
    \And
    Naman Agarwal \\
    \And
    Xinyi Chen \\
    \AND
    Daniel Suo  \\
    \And
    Elad Hazan \thanks{Google DeepMind, \texttt{\{anniemarsden,dogariu,xinyic,namanagarwal,dsuo,ehazan\}@google.com} }\\
}

\maketitle

\begin{abstract}
We consider the problem of length generalization in sequence prediction. We define a new metric of performance in this setting -- the \unfairregrettext -- which measures regret against a benchmark predictor with longer context length than available to the learner. We continue by studying this concept through the lens of the spectral filtering algorithm. We present a gradient-based learning algorithm that provably achieves length generalization for linear dynamical systems. We conclude with proof-of-concept experiments which are consistent with our theory.
\end{abstract}

\section{Introduction}

Sequence prediction is a fundamental problem in machine learning with widespread applications in natural language processing, time-series forecasting, and control systems. In this setting, a learner observes a sequence of tokens and iteratively predicts the next token, suffering a loss that measures the discrepancy between the predicted and the true token. Predicting future elements of a sequence based on historical data is crucial for tasks ranging from language modeling to autonomous control.

A key challenge in sequence prediction is understanding the role of {\it context length}—the number of previous tokens used to make the upcoming prediction—and designing predictors that perform well with limited context due to computational and memory constraints. These resource constraints become particularly significant during the training phase of a predictor, where the computational cost of using long sequences can be prohibitive. Consequently, it is beneficial to design predictors that can learn from a smaller context length while still generalizing well to longer sequences. This leads us to the central question of our investigation: Can we develop algorithms that learn effectively using short contexts but perform comparably to models that use longer contexts?

To address this question, we introduce a new performance metric—{\unfairregrettext}—which measures the difference in total prediction loss between an online predictor with limited context length and a benchmark predictor with a longer context. Unlike classical regret, which assumes both the learner and the benchmark operate under the same conditions, {\unfairregrettext} accounts for the asymmetry in context lengths, providing a more realistic assessment of performance in resource-constrained settings. With a formal and well-defined notion of {\unfairregrettext} in hand, we begin our investigation with the following question: are there algorithms that can attain non-trivial bounds on the {\unfairregrettext} for natural sequences? 

We explore this concept through the lens of spectral filtering algorithms \citep{hazan2017learning,hazan2018spectral}. Spectral filtering has emerged as a robust method for learning linear dynamical systems when the system is unknown and the hidden state is unobserved. Beyond their theoretically sound properties, spectral filtering-based predictors have proven practical in recent applications. Notably, the Spectral Transform Unit \citep{agarwal2023spectral}, a neural architecture built using spectral filtering, has recently shown promise on sequence prediction over a range of modalities \citep{liu2024flash}.

In this work, we extend the theoretical understanding of spectral filtering by demonstrating that these predictors can achieve length generalization. Specifically, we present a gradient-based online learning algorithm for spectral filtering and show that we can train on a smaller context length while still achieving the same regret bounds as if we had trained on a longer context length. Formally, we prove that this algorithm guarantees sublinear {\unfairregrettext}, indicating that the performance gap diminishes as the sequence length increases.

Beyond theoretical interest, our work is practically motivated by challenges in length generalization faced by large language models (LLMs). Current LLMs often struggle to generalize to longer sequences than those seen during training \citep{abbe2023generalization, anil2022exploring, jelassi2023length, zhou2023algorithms, deletang2022neural, dziri2024faith, zhou2024transformers} and a significant body of empirical research has been dedicated to addressing this limitation \citep{kazemnejad2024impact, shen2023positional, dai2019transformer, chi2022kerple, li2023functional, press2021train}. Despite its importance and extensive empirical research, provable theoretical results on length generalization remain largely elusive. We view our work as a step toward addressing this gap. More practically, most methods introduced to improve length generalization are task-specific. Our work suggests that neural architectures that incorporate spectral filtering, like the Spectral Transform Unit, have the potential to provide robust length generalization.

\subsection{Our Contributions}
Consider {\bf online sequence prediction} in which the predictor iteratively receives input $u_t \in \R^{d_{\textrm{in}}}$ and then makes a prediction $\hat{y}_t \in \R^{d_{\textrm{out}}}$ of the output, after which the true output $y_t$ is revealed. The goal of the predictor is to minimize error according to a given convex and Lipschitz loss function $\ell_t(y_t, \hat{y}_t)$. In this work we consider the class of \emph{spectral filtering} predictors, introduced by \citet{hazan2017learning}. A spectral filtering predictor is characterized by parameters $(T, {M_i}_{i=1}^k, k)$ and outputs predictions $\hat{y}_t$ of the form
\begin{equation*}
    \hat{y}_t = y_{t-1} + \sum_{i = 1}^k M_i u_{(t-1):0} \phi_i,
\end{equation*}
where $u_{(t-1):0} \in \mathbb{R}^{d_{\textrm{in}} \times T}$ is a matrix whose columns are the previous inputs $u_{t-1}, u_{t-2}, \dots, u_0$ (possibly zero-padded as necessary), $\{\phi_j\}_{j=1}^k$ are the $T$-dimensional spectral filters, $\left \{ M_i \right \}_{i = 1}^k \subset \R^{d_{\textrm{out}} \times d_{\textrm{in}}}$ are matrices which are learned online, and $k$ is the number of filters used.  \citet{hazan2017learning} provide an algorithm to learn $\left \{ M_i \right \}_{i = 1}^k$ and show this achieves nearly optimal regret bounds when measured against the best Linear Dynamical System (LDS) predictor. We investigate whether it is necessary to use the entire history $u_{(t-1):0}$ to learn the optimal set of matrices $\left \{ M_i \right \}_{i = 1}^k$. More broadly, we explore whether predictor classes and corresponding online learning algorithms exist that can achieve context length generalization—that is, they use only a short recent history during learning but perform nearly as well as if they had used the full, much longer history length. Of course, predictors which perform poorly on systems that require long memory can trivially achieve context length generalization if their performance is poor regardless of the context length used. Therefore, it is important to note that one of the key features of spectral filtering predictors is that they are able to perform well on systems that have long memory \citep{hazan2017learning}.

To properly understand context length generalization, we introduce the notion of \emph{\unfairregrettext}. The idea is to consider the regret of learning a predictor from a class which is only allowed to use context length $L'$ against the best predictor which is allowed to use (potentially much longer and therefore asymmetric) context length $L$. Let $\prod_{L}$ denote the class of predictors in $\prod$ which use context length $L$. Given an algorithm $\mA(L')$ which learns over predictors from some class $\prod_{L'}$, the {\unfairregrettext} over horizon $T$ is
\begin{equation*}
    \unfairregret \left( \mA (L'), \prod_L \right) \defeq \sum_{t = 1}^T \ell_t(y_t, \hat{y}_t^{\mA(L')}) - \min_{\pi \in \prod_{L}} \ell_t(y_t, \hat{y}_t^{\pi}).
\end{equation*}
Our main result shows that spectral filtering generalizes from a history of $T^q$, where $q \in [0,1]$, to $T$ for certain linear dynamical systems. It is formally given in the following theorem.

\begin{theorem}
\label{thm:lengthgeneralizationintro}
Let $T \in \mathbb{Z}_{\geq 0}$ and $q \in [0, 1]$. Consider a sequence $(y_1, \dots, y_T)$ generated by an unknown and noiseless linear dynamical system defined by matrices $(A,B,C,D)$ as per Eq.~\ref{eqn:lds_equations}. Assume the input sequence $u_{0:(t-1)}$ is sufficiently well-conditioned, satisfying $\sum_{t = 0}^{T-1} (T - t) u_t u_t^{\top} \succeq \left( \frac{2 |C| |B|}{\sqrt{T}} \right) I$. Suppose the eigenvalues of $A$ lie within the range $\left[ 0, 1 - \frac{\log(T)}{8T^q} \right] \cup \left[ 1 - \frac{1}{2T^{5/4}}, 1 \right]$.

Let $\mathcal{A}(L)$ denote Algorithm~\ref{alg:ogd_short_length} operating with context length $L$, and let $\prod^{\mathrm{SF}}_L$ denote the class of spectral filtering predictors using context length $L$. For the squared loss $\ell_t(y, y') = | y - y' |^2$ and sufficiently large $T$, it holds that:
\begin{equation*}
    \unfairregret \left( \mA(T^q), \prod^{\mathrm{SF}}_{T} \right) \leq \tilde{O}(\sqrt{T}).
\end{equation*}
\end{theorem}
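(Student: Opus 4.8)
Because the squared loss is nonnegative, $\min_{\pi\in\prod^{\mathrm{SF}}_{T}}\sum_t\ell_t(y_t,\hat y_t^{\pi})\ge 0$, so it suffices to bound the \emph{total} loss of the short-context learner: the plan is to show that running Algorithm~\ref{alg:ogd_short_length} with context length $T^q$ already achieves $\sum_{t=1}^{T}\ell_t\bigl(y_t,\hat y_t^{\mA(T^q)}\bigr)\le\tilde O(\sqrt T)$. I would split this total loss as
\[
\textstyle\sum_{t=1}^{T}\ell_t\bigl(y_t,\hat y_t^{\mA(T^q)}\bigr)\;\le\;\Bigl(\sum_t\ell_t(y_t,\hat y_t^{\mA(T^q)})-\sum_t\ell_t(y_t,\hat y_t^{M^{\st}})\Bigr)\;+\;\sum_t\ell_t\bigl(y_t,\hat y_t^{M^{\st}}\bigr),
\]
where $M^{\st}=\{M_i^{\st}\}_{i=1}^{k}$ is the fixed, offline choice of spectral-filtering matrices, built from the length-$T^q$ filters (eigenvectors of the $T^q\times T^q$ Hankel matrix), that best fits the given LDS, with $k=\poly\log T$. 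The first bracket is the regret of online gradient descent against a fixed comparator; the second is the realizability error of the best short-context spectral filter.

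\textbf{The online term.} Since $\hat y_t$ is affine in the $M_i$'s and the squared loss is convex and Lipschitz on the (bounded) range of predictions, each $\ell_t$ is convex in the parameters, so OGD over a convex set of diameter $D$ has regret $O(DG\sqrt T)$ against any comparator in the set, where $G$ bounds the per-step gradient. Here $G=\tilde O(1)$ from boundedness of the inputs and of the $\ell_2$-norms of the spectral filters, and $D=\tilde O(1)$ because the norm of a parameter set that fits the noiseless LDS outputs is controlled; this is exactly the role of the well-conditioning hypothesis $\sum_{t=0}^{T-1}(T-t)u_tu_t^{\top}\succeq\bigl(\tfrac{2|C||B|}{\sqrt T}\bigr)I$, a persistence-of-excitation condition making the feature Gram matrix non-degenerate (the weight $T-t$ reflecting that $u_t$ influences the $T-t$ outputs $y_{t+1},\dots,y_T$). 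Hence the online term is $\tilde O(\sqrt T)$.

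\textbf{The realizability term (the crux).} Writing $A=\sum_j\alpha_j v_jv_j^{\top}$ (WLOG symmetric PSD, $\alpha_j\in[0,1]$), the target of the spectral part of the predictor is $y_t-y_{t-1}=CBu_{t-1}+\sum_{s\ge 1}CA^{s-1}(A-I)Bu_{t-1-s}$, so mode $\alpha_j$ enters this target through the feature $\mu_{\alpha_j}(s)=(1-\alpha_j)\alpha_j^{s-1}$, with a crucial $(1-\alpha_j)$ damping. I would then split the spectrum along the hypothesis. (i) For the \emph{decaying} modes $\alpha_j\in[0,\,1-\tfrac{\log T}{8T^q}]$, replacing $\mu_{\alpha_j}$ by its restriction to the first $T^q$ coordinates loses at most $(1-\alpha_j)\sum_{s>T^q}\alpha_j^{s-1}=\alpha_j^{T^q}\le T^{-\Omega(1)}$ per step, so each such mode is, up to this negligible tail, a \emph{horizon-}$T^q$ spectral-filtering problem; the approximation theorem of \citet{hazan2017learning} applied at horizon $T^q$ then reconstructs these truncated features with $k=\poly\log T$ filters up to error $1/\poly(T)$. (ii) For the \emph{near-unit} modes $\alpha_j\in[1-\tfrac1{2T^{5/4}},1]$, the damping $1-\alpha_j\le\tfrac1{2T^{5/4}}$ makes the entire contribution of such a mode to $y_t-y_{t-1}$ at most $(1-\alpha_j)\sum_{s\ge1}\alpha_j^{s-1}\|u\|_\infty\le\tfrac{1}{2T^{1/4}}\|u\|_\infty$ per step, i.e.\ $O(T^{-1/2})$ per step after squaring and hence $\tilde O(\sqrt T)$ over the horizon even if one simply zeroes out those coordinates (the $y_{t-1}$ autoregressive term of the predictor carrying the bulk of these nearly constant modes). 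Summing (i) and (ii) and handling the direct $CBu_{t-1}$ term (which lives inside any context of length $\ge1$) gives $\sum_t\ell_t(y_t,\hat y_t^{M^{\st}})\le\tilde O(\sqrt T)$, and combining with the online term finishes the proof.

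\textbf{Main obstacle.} The delicate part is (i)--(ii): one must tune and justify the two spectral thresholds so that the squared-and-summed truncation error from the decaying modes and the squared-and-summed residual from the near-unit modes \emph{both} land at $\tilde O(\sqrt T)$ — in particular verifying that the endpoint $1-\tfrac{\log T}{8T^q}$ drives $\alpha^{T^q}$ small enough after squaring and summing over $T$ steps — while the forbidden gap $\bigl(1-\tfrac{\log T}{8T^q},\,1-\tfrac1{2T^{5/4}}\bigr)$ is arranged to contain no eigenvalue of $A$. Threading these constraints, together with careful bookkeeping of the $\poly\log T$ factors from the filter count $k$ and the step-size tuning, is where the real work lies; given the comparator-norm bound (the only place persistence of excitation is needed), the online-learning step is routine.
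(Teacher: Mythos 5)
Your high-level strategy -- reduce to bounding the learner's total loss via nonnegativity of the benchmark, then decompose into an OGD regret term plus the realizability error of a fixed short-context comparator, with the spectrum split at exactly the two thresholds in the hypothesis -- is in the right spirit, and your tail estimates for the decaying modes ($\alpha^{T^q}\le T^{-\Omega(1)}$) and the near-unit modes ($1-\alpha^{t}\le T(1-\alpha)\le \tfrac{1}{2}T^{-1/4}$) are essentially the same calculations the paper performs. The nonnegativity reduction is also valid here, and it lets you bypass the paper's detour through the full-context argmin.

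The genuine gap is the feature basis of your comparator. Algorithm~\ref{alg:ogd_short_length} predicts with $\sum_i M_i\, u_{(t-1):(t-L)}(\sigma_i^{1/4}\phi_i)$ where $\phi_{1:k}$ are the top eigenvectors of the \emph{full} $T\times T$ Hankel matrix $H_T$ and the input matrix is zero-padded beyond context $L=T^q$; the OGD regret bound controls the learner only against fixed matrices applied to \emph{these} features. Your comparator $M^{\st}$ is built from the eigenvectors of the $T^q\times T^q$ Hankel matrix $H_{T^q}$, which span a different $k$-dimensional subspace of $\reals^{T^q}$ than the truncations of the top-$k$ eigenvectors of $H_T$, so the regret bound does not apply against it, and the approximation theorem of \citet{hazan2017learning} at horizon $T^q$ says nothing about the algorithm's actual feature class. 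The paper avoids this by defining the comparator $\mtrue$ directly in the algorithm's basis, $\mtrue_i=\sum_n\sigma_{i-1}^{-1/4}\phi_{i-1}^{\top}\mu_{\alpha_n}\,C_nB_n^{\top}$, so that $\sum_i\mtrue_i u_{(t-1):0}v_i$ resums exactly (via $\sum_i\phi_i\phi_i^{\top}=I$) to the LDS increment $y_t-y_{t-1}$, and the context-$L$ error becomes a pure \emph{input-truncation} tail $\sum_{s\ge L}CA^{s}(A-I)Bu_{t-1-s}$, which your spectral estimates then bound (Lemmas~\ref{lemma:general_ltM_L} and~\ref{lemma:application_of_vanilla_to_general}). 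Your argument is repairable by redoing the realizability step for this object. Two further remarks: the paper's benchmark is the full-context minimizer over $\K_r$, and its route establishes the stronger chain $\sum_t\ell_t(M^t,L)\le\sum_t\ell_t(M_T^{*},T)+O(\sqrt T)$ by showing, via strong convexity of the full-context objective, that $M_T^{*}$ is close to $\mtrue$ -- \emph{this} is where the persistence-of-excitation condition $\sum_t(T-t)u_tu_t^{\top}\succeq(2\|C\|\|B\|/\sqrt T)I$ is used (Lemma~\ref{lemma:minimizing_loss_is_recovery_general}), not to bound the comparator diameter $D$, which in both the paper and your route follows from the explicit projection set $\K_r$ with $r\ge\|B\|\|C\|$ and the bound $\|\mtrue_i\|\le\|C\|\|B\|$.
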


This theorem indicates that for any $q \in [0,1]$, the {\unfairregrettext} is bounded by $\tilde{O}(\sqrt{T})$. However, as $q$ decreases, the class of linear dynamical systems for which this bound holds becomes more restricted due to the eigenvalue conditions on $A$. The spectrum of $A$ determines the memory of the system; when the eigenvalues of $A$ are $1$, the system is only marginally-stable and standard predictors which aim to use low memory typically fail. 
Critically, Theorem~\ref{thm:lengthgeneralizationintro} holds even for these marginally-stable systems. When interpreting this result, it's important to note that the class of spectral filtering predictors $\prod^{\mathrm{SF}}_{T}$ which use the full context length are provably able to predict well on marginally-stable Linear Dynamical Systems \citep{hazan2017learning}\footnote{The only LDS's for which there can be any useful results are those with $A$'s eigenvalues in $[-1, 1]$, i.e. marginally-stable systems. We recall that the spectral filtering principle can be readily applied to handle negative eigenvalues in $[-1, 0]$ (see Appendix D of \cite{agarwal2023spectral}, for example). For ease of presentation, we focus on capturing the length generalization effects of eigenvalues in $[0, 1]$ in the sequel, and so we suppose without loss of generality that $A \succeq 0$.}. Therefore, this result implies that spectral filtering predictors are able to context length generalize in a nontrivial way.

Inspired by particular spectrum of $A$ that is required for the classical Spectral Filtering algorithm to achieve length generalization, we develop a novel variation on the Spectral Filtering algorithm, presented in Algorithm~\ref{alg:sf_two}, which achieves length generalization without added assumptions on the spectrum of $A$ (whenever the context-length is at least $T^{1/3}$). Algorithm~\ref{alg:sf_two} achieves this by using two autoregressive components $y_{t-1}$ and $y_{t-2}$ to construct its prediction $\hat{y}_t$ of $y_t$. We provide the following theoretical result. 
\begin{theorem}
\label{thm:lengthgeneralizationintro_two}
Let $T \in \mathbb{Z}_{\geq 0}$ and $q \in [0, 1]$. Consider a sequence $(y_1, \dots, y_T)$ generated by an unknown and noiseless linear dynamical system defined by matrices $(A,B,C,D)$ as per Eq.~\ref{eqn:lds_equations}. Assume the input sequence $u_{0:(t-1)}$ is sufficiently well-conditioned, satisfying $\sum_{t = 0}^{T-1} (T - t) u_t u_t^{\top} \succeq \left( \frac{2 |C| |B|}{\sqrt{T}} \right) I$. Suppose the eigenvalues of $A$ lie within the range $\left[ 0, 1 - \frac{\log(T)}{8T^q} \right] \cup \left[ 1 - \frac{1}{2T^{1/4}}, 1 \right]$.

Let $\mathcal{A}(L)$ denote Algorithm~\ref{alg:sf_two} operating with context length $L$, and let $\prod^{\mathrm{SF}}_L$ denote the class of spectral filtering predictors using context length $L$. For the squared loss $\ell_t(y, y') = | y - y' |^2$ and sufficiently large $T$, it holds that:
\begin{equation*}
    \unfairregret \left( \mA(T^q), \prod^{\mathrm{SF}}_{T} \right) \leq \tilde{O}(\sqrt{T}).
\end{equation*}
\end{theorem}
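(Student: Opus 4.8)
The plan is to bound $\unfairregret\!\left(\mA(T^q),\prod^{\mathrm{SF}}_{T}\right)$ by the standard two-term split
\[
\unfairregret\!\left(\mA(T^q),\textstyle\prod^{\mathrm{SF}}_{T}\right)\;\le\;\underbrace{\sum_{t=1}^{T}\!\Big(\ell_t(y_t,\hat y_t^{\mA(T^q)})-\ell_t(y_t,\hat y_t^{\theta^\st})\Big)}_{(\mathrm I)}\;+\;\underbrace{\Big(\sum_{t=1}^{T}\ell_t(y_t,\hat y_t^{\theta^\st})-\min_{\pi\in\prod^{\mathrm{SF}}_{T}}\sum_{t=1}^{T}\ell_t(y_t,\hat y_t^{\pi})\Big)}_{(\mathrm{II})},
\]
where $\theta^\st$ denotes any fixed instantiation of the parameters of Algorithm~\ref{alg:sf_two} at context length $T^q$ (the two autoregressive scalars together with the $k=\tO(1)$ filter matrices $\{M_i\}$). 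Since the squared loss is nonnegative, $(\mathrm{II})$ is at most the total loss of the comparator $\theta^\st$ on the trajectory, so it suffices to (a) bound $(\mathrm I)$ by $\tO(\sqrt T)$ via online convex optimization, and (b) exhibit one $\theta^\st$ whose total squared prediction error on the noiseless LDS of Eq.~\ref{eqn:lds_equations} is $\tO(\sqrt T)$.

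For (a): the prediction $\hat y_t$ is affine in $\theta$ and the squared loss is convex, so Algorithm~\ref{alg:sf_two}, being online gradient descent with projection onto a bounded parameter set, obeys the textbook $\O(GD\sqrt T)$ regret bound, with $D$ the domain radius and $G$ a bound on the loss gradients. The exponential decay of the Hankel spectrum controls $\|\phi_i\|$ and hence $\|u_{(t-1):(t-T^q)}\phi_i\|$; together with the boundedness of $y_t$ and $u_t$ along a marginally stable trajectory and the well-conditioning hypothesis (which keeps the minimum-norm good comparator in a polylogarithmic-radius ball), this gives $G,D=\tO(1)$, hence $(\mathrm I)=\tO(\sqrt T)$.

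For (b) — the crux — I would use $A\succeq 0$ and the prescribed spectral window to split the state space into a \emph{fast} block with eigenvalues $\le 1-\tfrac{\log T}{8T^q}$ and a \emph{marginal} block with eigenvalues $\ge 1-\tfrac{1}{2T^{1/4}}$, and pick the two autoregressive scalars so that the associated companion polynomial has a \emph{double} root at $1$; equivalently the residual the spectral part must fit is $(I-S)^2$ applied to the output, which carries a factor $(A-I)^2$ in its impulse response. On the marginal block this makes the per-step residual, after the geometric sum over lags, of size $\|(A-I)^2\|\cdot\tfrac{1}{1-\lambda}=\O(\delta^2/\delta)=\O(\delta)=\tO(T^{-1/4})$ where $\delta=1-\lambda\le\tfrac{1}{2T^{1/4}}$; squaring and summing over $t\le T$ costs only $\tO(\sqrt T)$. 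This is exactly the second-order improvement over the first-order $(I-S)$ scheme behind Theorem~\ref{thm:lengthgeneralizationintro}, whose residual is $\O(\delta\cdot T)$ and thus forces $\delta\lesssim T^{-5/4}$; the extra factor $\delta$ is what widens the admissible near-$1$ window to $T^{-1/4}$. On the fast block the residual impulse response decays geometrically at rate bounded away from $1$ by $\tfrac{\log T}{8T^q}$, so the tail beyond the context window $T^q$ is polynomially negligible per step, while the in-window portion is matched to $1/\poly(T)$ accuracy by $k=\tO(1)$ spectral filters via the Hankel approximation guarantee; both contributions sum to at most $\tO(\sqrt T)$. Assembling the block-wise filter coefficients and the (in-window) correction terms into a single $\theta^\st$ yields the required predictor.

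The main obstacle is step (b). The delicate point is that a single pair of autoregressive scalars (and a single set $\{M_i\}$) must simultaneously (i) drive the marginal-block residual to second order in $I-A$ and (ii) not corrupt the fast-block approximation — the $(I-S)^2$ scheme must still leave a geometrically decaying, short-context-learnable residual there — while (iii) keeping every parameter norm polylogarithmic so that step (a) closes with the claimed constant. Carrying the per-step error bounds through the squaring and the sum over $t=1,\dots,T$, and verifying that the truncation tails, the in-window spectral error, and the second-order autoregressive residual all land below $\tO(\sqrt T)$, is where the precise exponents in the eigenvalue condition and the $1/\sqrt T$ in the well-conditioning hypothesis are consumed.
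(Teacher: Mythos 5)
Your proposal is correct and follows the same core strategy as the paper's proof (Theorem~\ref{thm:general_length_gen} instantiated via Lemma~\ref{lemma:application_two_auto}): an OGD regret bound at context $T^q$, plus a demonstration that the LDS-derived comparator $\mtrue$ --- whose residual after the $2y_{t-1}-y_{t-2}$ autoregression carries the factor $(A-I)^2$ in its impulse response --- suffers only $\tO(T^{-1/4})$ per-step truncation error on both the fast and marginal spectral blocks, hence $\tO(\sqrt T)$ total loss at context $T^q$. Your quantitative accounting ($(1-\alpha)^2\sum_{i\ge L}\alpha^i \le 1-\alpha \le \tfrac{1}{2}T^{-1/4}$ on the marginal block; geometric decay on the fast block; exponentially small spectral tail for $k=\tO(\log^2 T)$ filters) matches Lemma~\ref{lemma:general_ltM_L}. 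The one genuine structural difference: the paper routes through the full-context minimizer $M_T^*$, and therefore needs the well-conditioning hypothesis to invoke strong convexity (Lemma~\ref{lemma:minimizing_loss_is_recovery_general}) and conclude that $M_T^*$ lies in a small ball around $\mtrue$ before applying the length-generalization lemma to it. You instead take regret against the fixed comparator directly and lower-bound the benchmark term by zero; this is valid, slightly cruder in principle (it discards the benchmark rather than tracking it, though here the benchmark is $O(1/T)$ anyway), and has the notable consequence that the well-conditioning assumption is never actually consumed in your argument --- your parenthetical that it ``keeps the minimum-norm good comparator in a polylogarithmic-radius ball'' is not its role in the paper either. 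Two small inaccuracies worth fixing: the autoregressive coefficients $(2,-1)$ are hardwired into Algorithm~\ref{alg:sf_two}, not free parameters of $\theta^\st$ to be ``picked'' (fortunately they are exactly the double-root-at-one choice you want, so nothing breaks); and the gradient bound in step (a) should be argued through the boundedness of the residual $y_t-2y_{t-1}+y_{t-2}$ rather than of $y_t$ itself, which can grow with $t$ on a marginally stable system.
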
 
Observe that if $q \geq 1/3$, then $[0, 1 - \log(T)/(8T^q)] \cup [1-1/T^{1/4}, 1] = [0,1]$ for any $T>0$ and so we do not constrain the spectrum of $A$ to get length generalization (aside from assuming it has nonnegative eigenvalues).

Our next contribution is the development of a new class of predictors we call tensorized spectral filters. Tensorized spectral filters possess more structure than their original counterparts and are provably more expressive—they can learn a select class of time-varying linear dynamical systems that vanilla spectral filtering cannot. We develop a novel context-length dependent algorithm for tensorized spectral filtering which, similar to Algorithm~\ref{alg:sf_two}, requires two autoregressive components.

\begin{theorem}
\label{thm:lengthgeneralizationintro_tensors}
Let $T \in \mathbb{Z}_{\geq 0}$ and $q \in [0, 1]$. Consider a sequence $(y_1, \dots, y_T)$ generated by an unknown and noiseless linear dynamical system defined by matrices $(A,B,C,D)$ as per Eq.~\ref{eqn:lds_equations}. Assume the input sequence $u_{0:(t-1)}$ is sufficiently well-conditioned, satisfying $\sum_{t = 0}^{T-1} (T - t) u_t u_t^{\top} \succeq \left( \frac{2 |C| |B|}{\sqrt{T}} \right) I$. Suppose the eigenvalues of $A$ lie within the range $\left[ 0, 1 - \frac{\log(T)}{8T^q} \right] \cup \left[ 1 - \frac{1}{2T^{1/4}}, 1 \right]$.
Let $\mathcal{A}(L)$ denote Algorithm~\ref{alg:tensor_sf} operating with context length $L$, and let $\prod^{\mathrm{SF}}_L$ denote the class of spectral filtering predictors using context length $L$. For the squared loss $\ell_t(y, y') = | y - y' |^2$ and sufficiently large $T$, it holds that:
\begin{equation*}
    \unfairregret \left( \mA(T^q), \prod^{\mathrm{SF}}_{T} \right) \leq \tilde{O}(\sqrt{T}).
\end{equation*}
\end{theorem}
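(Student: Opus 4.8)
The plan is to mirror the argument for Theorem~\ref{thm:lengthgeneralizationintro_two}, replacing the two-autoregressive spectral filtering predictor with the tensorized one, and tracking how the tensor structure affects the approximation error. The proof decomposes the asymmetric regret into three pieces, in this order: (i) a \emph{representation} step, showing that the sequence $(y_1,\dots,y_T)$ produced by the LDS $(A,B,C,D)$ is approximated to within $\tilde O(1/\mathrm{poly}(T))$ per step by \emph{some} fixed tensorized spectral filtering predictor using only context length $T^q$ — call its parameters $M^\star$; (ii) an \emph{online learning} step, bounding the regret of Algorithm~\ref{alg:tensor_sf} against that fixed comparator $M^\star$ over the $T^q$-truncated feature sequence by $\tilde O(\sqrt T)$ via a standard online convex optimization (OGD) analysis for the squared loss on a bounded domain; and (iii) a \emph{comparator} step, showing the loss of the best full-context ($L=T$) spectral filtering predictor in $\prod^{\mathrm{SF}}_T$ is not much smaller than that of $M^\star$ — indeed, since the original (non-tensorized) spectral filters of length $T$ already fit the LDS to within $\tilde O(1/\mathrm{poly}(T))$ per step by \cite{hazan2017learning}, both the learner's comparator and the benchmark are within $\tilde O(1/\mathrm{poly}(T))$ of the true sequence, so their cumulative loss difference is $\tilde O(1)$. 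Summing (i)–(iii), with the $\tilde O(\sqrt T)$ term from (ii) dominating, yields the claimed bound.

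The technical heart is step (i), and it is where the tensor structure enters. For a single eigenvalue $\alpha\in[0,1]$ of $A$, the LDS contribution is a convolution of the inputs with the geometric sequence $(\alpha^j)_j$; spectral filtering works because the family $\{(\alpha^j)_{j<L} : \alpha\in[0,1]\}$ lies (up to exponentially small error) in the span of the top $k=\tilde O(1)$ eigenvectors of the associated Hankel matrix. For length generalization one needs: if $\alpha \le 1 - \tfrac{\log T}{8T^q}$, then $\alpha^{T^q}$ is already $\le 1/\mathrm{poly}(T)$, so truncating the convolution to the last $T^q$ terms costs only $\tilde O(1/\mathrm{poly}(T))$ — this is exactly why that part of the spectrum is handled by the short context; whereas if $\alpha \ge 1 - \tfrac{1}{2T^{1/4}}$, i.e. $\alpha$ is very close to $1$, one uses the two autoregressive terms $y_{t-1}, y_{t-2}$ to ``peel off'' the near-marginal mode, reducing it to an effective system whose remaining spectrum is bounded away from $1$ and hence again truncatable. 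The tensorized filters must be shown to represent both the two-step autoregressive correction and the truncated convolution; I expect this to follow from a tensor (Kronecker) factorization of the length-$T^q$ Hankel structure, so that the tensorized filter basis spans the relevant geometric sequences with the same $\tilde O(1)$ number of components — this is presumably the content of the earlier sections defining tensorized spectral filters and establishing their expressivity.

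For step (ii), I would invoke that the per-step loss is convex and Lipschitz on the (bounded) domain of $M$ — using that $y_{t-1},y_{t-2}$ and the filtered features are bounded given the well-conditioning hypothesis $\sum_{t}(T-t)u_tu_t^\top \succeq \tfrac{2|C||B|}{\sqrt T}I$ and the LDS parameters — so online gradient descent on Algorithm~\ref{alg:tensor_sf} with the standard step size $\eta_t = \Theta(1/\sqrt t)$ attains $\mathrm{Regret}_T \le \tilde O(\sqrt T)$ against any fixed comparator in the domain, in particular $M^\star$. The well-conditioning assumption also guarantees the features span enough directions that the truncation error from (i) does not get amplified when passing to the comparator in (iii).

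The main obstacle I anticipate is step (i) for the tensorized class: one must verify that the extra structure imposed on tensorized filters does not destroy the ability to approximate the ordinary geometric-convolution kernels that the LDS produces — i.e., that the LDS sequence still lies close to the (more constrained) tensorized spectral filtering hypothesis class at context length $T^q$, uniformly over the allowed spectrum of $A$, and that the two autoregressive terms suffice to absorb the near-unit eigenvalues in this tensorized setting. Once that uniform approximation bound is in hand, the remaining steps are routine OCO and triangle-inequality arguments identical in form to the proof of Theorem~\ref{thm:lengthgeneralizationintro_two}.
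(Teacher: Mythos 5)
Your three-part decomposition (representation, OGD, comparator) matches the paper's high-level strategy, but your comparator step takes a genuinely different — and simpler — route. The paper's chain is: OGD gives $O(\sqrt T)$ regret against the best \emph{short-context} predictor; that is at most the short-context loss of $M_T^*$, the minimizer of the \emph{full-context} loss; and the hard step is showing $\sum_t\ell_t(M_T^*,L)\le\sum_t\ell_t(M_T^*,T)+O(\sqrt T)$, which the paper proves via a ``minimization is recovery'' lemma: the well-conditioning assumption $\sum_t(T-t)u_tu_t^\top\succeq(2\|C\|\|B\|/\sqrt T)I$ makes the full-context loss strongly convex, so $M_T^*$ lies in a small ball around the LDS-induced matrices $M^{\mathrm{true}}$, and everything in that ball has small short-context loss. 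You instead run OGD directly against the fixed comparator $M^\star=M^{\mathrm{true}}$ (valid, since the OGD bound holds against any fixed point of the constraint set and $\|M^{\mathrm{true}}_i\|\le\|C\|\|B\|\le r$), bound $\sum_t\ell_t(M^\star,L)$ by the truncation analysis, and lower-bound the benchmark by zero. This bypasses the recovery lemma entirely and in fact never needs the well-conditioning assumption (your stated reason for it — preventing amplification of truncation error — is not where the paper uses it either). Both routes give $\tilde O(\sqrt T)$; the paper's buys the extra information that the learned iterates track the full-context optimum, yours buys brevity.

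Two caveats. First, your bookkeeping in steps (i) and (iii) is slightly off: the per-step truncation error of $M^\star$ at context $T^q$ is of order $\|C\|\|B\|T^{-1/4}$ in norm, so its cumulative squared loss is $\Theta(\sqrt T)$, not $\tilde O(1)$; this is the same order as the OGD term and does not change the conclusion. Second, the technical heart you correctly identify — that the tensorized class with two autoregressive terms represents the LDS at short context — is only sketched. The paper's mechanism is the exact Kronecker identity $\tilde\mu_\alpha^{T}=\frac{1-\alpha}{1-\alpha^{\sqrt T}}\,\mu_\alpha^{\sqrt T}\otimes\mu_{\alpha^{\sqrt T}}^{\sqrt T}$ applied to the twice-differenced impulse response (the tensor components have length $\sqrt T$, not $T^q$ as you suggest), together with exponential decay of $\phi_i^\top\mu_\alpha$ so that $k^2$ tensor filters suffice. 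Moreover, the near-unit eigenvalues are handled not by producing an effective system with spectrum bounded away from $1$, but by the factor $(1-\alpha)$ that survives in $(1-\alpha)\alpha^{L-3}(1-\alpha^{T-L+1})$ after double differencing, which is at most $T^{-1/4}$ once $\alpha\ge 1-1/(2T^{1/4})$. These are the pieces you would need to supply to complete the argument.
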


Finally, we experimentally confirm the results of Theorem \ref{thm:lengthgeneralizationintro} on synthetic data generated by an LDS. Interestingly, we find that Theorem~\ref{thm:lengthgeneralizationintro} accurately predicts when length generalization is possible; indeed, when the data is generated by an LDS which has eigenvalues in the ``bad'' range $[1 - \log(T)/(8T^q), 1-1/(2T^{5/4})]$ we find that the limited context length spectral filtering predictors are unable to length generalize. However, when the data is generated by and LDS which has eigenvalues ``hugging'' this bad range (i.e. either just smaller than $1 - \log(T)/(8T^q)$ or just larger than $1-1/(2T^{5/4})$), the limited context length spectral filtering predictors successfully length generalize. Next, we conduct experiments using the STU neural architecture to test the hypothesis that this architecture should simply length generalize without any task-specific engineering. We consider the induction heads synthetic task and find that the out-of-the-box STU neural architecture does indeed enjoy some level of length generalization. This suggests that incorporating spectral filtering into neural architectures, like the STU, may provide improved length generalization in deep learning applications. We leave further empirical study on this for future work.

\subsection{Related Work}

The literature for sequence prediction is too broad to survey in detail, so we give a few highlights of the recent rapid advancements. The most notable progress includes the Transformer model \citep{vaswani2017attention} that incorporates an attention mechanism for accurate sequence prediction in many domains \citep{brown2020language, dosovitskiy2020image, jumper2021highly}. Transformer models and their attention layers have memory/computation requirements that scale quadratically with context length. Many approximations have been proposed (see \cite{10.1145/3530811} for a recent survey). 

Motivated by the high memory and compute requirements of transformers, state space models were revisited starting from \citep{NEURIPS2020hippo,gu2021combining} who propose and develop the HiPPO theory.  \citet{gu2021efficiently} develop the S4 parameterization to address the bottlenecks of training efficiency, performance and numerical stability. Further works in the area show SOTA performance and include  \cite{gupta2022diagonal,smith2023simplified,orvieto2023resurrecting,gu2023mamba}. 

State space models are very efficient for training and inference, but can suffer in long-context applications. This motivated the use of spectral filtering technique for learning marginally-stable linear dynamical systems \citep{hazan2017learning,hazan2018spectral}.  This technique was incorporated to a neural architecture in \cite{agarwal2023spectral}, that was recently shown to perform well across several modalities \citep{liu2024flash}.

From an applied perspective, generalization in sequence prediction has been recently studied in \cite{hou2024universal} through the theoretical lens of Turing programs. They propose a methodology that empirically improves length generalization across a diverse set of tasks. There are also architecture-specific approaches to length generalization such as ALiBi positional embeddings for transformers \citep{press2022trainshorttestlong}, but such methods lack provable guarantees and can have varying empirical performance \citep{kazemnejad2024impact}.

In contrast, our investigation starts from the theory of regret minimization in games and online learning. Regret minimization has the advantage that it implies generalization in the statistical learning setting (see e.g. \cite{cesa2004generalization}) and is usually accompanied by efficient algorithms such as online gradient descent (see e.g. \cite{hazan2016introduction}). Our new notion of {\unfairregrettext} incorporates asymmetric information access between the online learner and the benchmark class.

\section{Background and Setting}

In the {\bf online sequence prediction} setting the predictor iteratively receives input $u_t$ and makes prediction $\hat{y}_t$ of the output, after which the true output $y_t$ is revealed. The goal is to minimize error according to a given (convex Lipschitz) loss function $\ell_t(y_t, \hat{y}_t)$.

In online learning, we usually do not make statistical assumptions about the generation of the input sequence. As such, performance is measured relative to a certain benchmark class of predictors. For example, a linear predictor predicts according to the rule 
$$ \pi_{M_{1:k},N_{1:l}} (u_{0:(t-1)},y_{1:t-1}) = \sum_{i=1}^k M_i u_{t-i} + \sum_{j=1}^l N_j y_{t-j}.$$
A prediction algorithm $\mA$ is measured by regret, or difference in total loss, vs. a class of predictors $\prod$ (such as linear predictors), i.e.
$$ \regret_T(\mA,\prod) = \sum_{t=1}^T \ell_t( y_t , \hat{y}_t^\mA ) - \min_{\pi \in \prod} \sum_{t=1}^T \ell_t( y_t , \hat{y}_t^\pi ). $$

This formulation is valid for online sequence prediction of any signal. 
We are particularly interested in signals that are generated by dynamical systems. 
A time-invariant linear dynamical system is given by the dynamics equations 
\begin{equation} \label{eqn:lds_equations}
x_{t+1} = A x_t + B u_t + w_t \  \ , \ \ y_{t+1} = C x_t + D u_t + \zeta_t , 
\end{equation}
where $x_t$ is the (hidden) state, $u_t$ is the input or control to the system, and $y_t$ is the observation. The terms $w_t, \zeta_t$ are noise terms, and the matrices $A,B,C,D$ are called the system matrices. 
A linear dynamical predictor with parameters $A,B,C,D$ predicts according to 
\begin{equation*} 
\pi_{ABCD} (u_{0:(t-1)},y_{1:t-1}) = \hat{y}_t^{\pi} = \sum_{i=1}^t C A^i B u_{t-i} + D u_t .  
\end{equation*}
The best such predictor for a given sequence is also called the optimal open loop predictor, and it is accurate if the signal is generated by a LDS without noise.

\subsection{Context Length Generalization and the {\unfairregrettext} metric}

We say that an online predictor has context length $L$ if it bases its prediction $\hat{y}_t$ only on information from the previous $L$ timesteps, i.e. $u_{t:t-L}$ and $y_{t:t-L}$. Open loop predictors base their prediction only on $u_{t:t-L}$, whereas closed loop predictors can also use $y_{t:t-L}$. 

For example, the class of all linear open loop predictors with context lengths $L$ is given by
$$ \prod_L^{OL} = \left\{ \pi_{M_{1:L}}\; |\; \pi_{M_{1:L}}(u_{(t-1):0})   = \sum_{i=1}^L M_i u_{t-i} \right\} .  $$

The key question in our work is whether there are predictor classes with corresponding online learning algorithms which context length generalize in the sense that they learn the best predictor in the class using a short context length, but they perform well compared to the best predictor which is allowed to use long context length. To formalize this notion, we introduce {\unfairregrettext} whose definition we restate here:
\begin{definition}[\unfairregrettext]
    Let $\prod_{L'}^{\textrm{learn}}$ be a class of predictors which use context length $L'$ and let $\prod_{L}^{\textrm{ref}}$ be a reference class of predictors which use context length $L$. The \emph{{\unfairregrettext}} with respect to (convex Lipschitz) loss $\ell_t$ over horizon $T$ of an algorithm $\mA(L')$ which tries to learn a predictor from $\prod_{L'}^{\textrm{learn}}$ is
    \begin{equation*}
        \unfairregret \left( \A(L'), \prod_{L}^{\textrm{ref}} \right) \defeq \sum_{t=1}^T \ell_t( y_t , \hat{y}_t^{\mA(L')} ) - \min_{\pi \in \prod_L} \sum_{t=1}^T \ell_t( y_t , \hat{y}_t^\pi ).
    \end{equation*}
\end{definition}

\subsection{Spectral Filtering}

Spectral filtering is a notable deviation from the standard theory of linear dynamical systems that allows efficient learning in the presence of arbitrarily long memory \citep{hazan2017learning}. The idea is to project the sequence of inputs to a small subspace that is constructed using the special structure of discrete linear dynamical systems. The output of the spectral filtering predictor is represented  as 
\begin{equation}
\label{eqn:SFbasic}
\hat{y}_{t} = y_{t-1} + \sum_{i=1}^k M_{i} {u}_{(t-1):0}  \phi_i,
\end{equation}
where $u_{(t-1):0} \in \mathbb{R}^{d_{\textrm{in}} \times T}$ is a matrix whose columns are the previous inputs $u_{t-1}, \dots, u_0$ (possibly zero-padded as necessary), $\{\phi_j\}_{j=1}^k$ are the $T$-dimensional spectral filters that can be computed offline given the target sequence length $T$, and $\left \{ M_i \right \}_{i = 1}^k \subset \R^{d_{\textrm{out}} \times d_{\textrm{in}}}$ are the matrices parameterizing the model. These spectral filters are the eigenvectors of the matrix constructed as the average of outer products of the discrete impulse-response functions as we now detail.

Let $\mu_{\alpha,T} = (1-\alpha)[1 , \alpha, \alpha^2 , ..., \alpha^T]$ be the (weighted) impulse-response vector corresponding to a one dimensional linear dynamical system with parameter $\alpha$ unfolded to $T$ time steps, and consider the symmetric matrix 
\begin{equation}
    \label{eqn:Hankel_def}
    H_T \defeq \int_{0}^1 \mu_{\alpha,T} \mu_{\alpha,T}^{\top} d \alpha.
\end{equation}
Since $H_T$ is a real PSD matrix, it admits a real spectral decomposition, and the (non-negative) eigenvalues can be ordered naturally by their value. Let 
$ \{(\sigma_j \in \reals, \phi_j \in \reals^{L})\}_{j=1}^{L}$ be the eigenvalue-eigenvector pairs of $H_T$ ordered to satisfy $\sigma_1 \geq \sigma_2 \geq \ldots \geq \sigma_d$. The spectral filters $\phi_1,...,\phi_k$ are exactly those first $k$ eigenvectors corresponding to the largest eigenvalues. 
The spectral filtering class is further parameterized by matrices  $M_1,...,M_k \in \reals^{\dout \times \din}$. The output at time $t$ is then given by equation \eqref{eqn:SFbasic}.

The following theorem establishes that the spectral filtering class of predictors approximately contains bounded linear dynamical systems with positive semi-definite $A$. The exact constants are left out for simplicity of presentation, but appear in the original work. 

\begin{theorem}[Simplified from \cite{hazan2017efficient}]
\label{thm:hszthm}
Given any linear dynamical system parametrized by $A,B,C,D$ such that $A$ is a PSD matrix with $\|A\| \leq 1$, there exists matrices $M_1,...,M_K$, such that for all $L$ and all sequences $u_{1:L}, \|u_t\| \leq 1$, the following holds. Let $y^{\mathrm{LDS}}_{1:L}$ be the sequence generated by execution of the LDS via \eqref{eqn:lds_equations} and $y^{\mathrm{SF}}_{1:L}$ be the sequence generated by Spectral Filtering via \eqref{eqn:SFbasic}. Then for all $t \in [L]$,
\[ \|y^{\mathrm{LDS}}_{t} - y^{\mathrm{SF}}_{t}\| \sim e^{-  \frac{k}{\log(L)} } . \]
\end{theorem}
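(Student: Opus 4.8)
The plan is to reduce the statement to a single quantitative fact about the filters --- that every weighted impulse vector $\mu_{\alpha,T}$, $\alpha\in[0,1]$, lies up to exponentially small error in the span of $\phi_1,\dots,\phi_k$ --- and then to exhibit explicit matrices $M_1,\dots,M_k$ realizing the approximation, with the autoregressive term of \eqref{eqn:SFbasic} supplying the rest. \emph{Step 1 (reduction to geometric modes and differencing).} Since $A\succeq0$ with $\|A\|\le1$, diagonalize $A=\sum_j\alpha_jv_jv_j^\top$, $\alpha_j\in[0,1]$; then the noiseless output $y_t^{\mathrm{LDS}}=\sum_{i}(CA^{i-1}B)u_{t-i}+Du_{t-1}$ is a combination of pure geometric sequences $(\alpha_j^{\,i})_i$ with rank-one coefficients $(Cv_j)(v_j^\top B)$. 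The profile $(\alpha^i)_i$ for $\alpha$ near $1$ is essentially the all-ones vector, which is \emph{not} in the span of the $\phi_l$; this is exactly why \eqref{eqn:SFbasic} carries the term $y_{t-1}$. Differencing the Markov parameters, $CA^{i}B-CA^{i-1}B=-\sum_j(Cv_j)(v_j^\top B)(1-\alpha_j)\alpha_j^{\,i-1}$, which is a rank-one combination of the coordinates of the \emph{weighted} impulse vectors $\mu_{\alpha_j,T}$ appearing in \eqref{eqn:Hankel_def}. Because \eqref{eqn:SFbasic} predicts from the true $y_{t-1}$ rather than from its own previous output, it suffices to match each one-step increment $y_t^{\mathrm{LDS}}-y_{t-1}^{\mathrm{LDS}}$ by $\sum_l M_lu_{(t-1):0}\phi_l$, with no error accumulation across $t$; the feedthrough contribution is itself a multiple of $\mu_{0,T}=e_1$ and is absorbed into the same estimate.

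\emph{Step 2 (the filters capture the weighted impulse vectors).} Up to its leading shift, $H_T$ is the Hankel matrix with entries proportional to $\big((i{+}j{+}1)(i{+}j{+}2)(i{+}j{+}3)\big)^{-1}$, a Hilbert-type matrix whose singular values are known to decay geometrically, $\sigma_j\le\Gamma\,\rho^{-j}$ with $\rho=e^{\Theta(1/\log T)}$ (classical; or the self-contained bound in \cite{hazan2017efficient}). The identity $\int_0^1\langle\mu_{\alpha,T},\phi_j\rangle^2\,d\alpha=\phi_j^\top H_T\phi_j=\sigma_j$ controls $\langle\mu_{\alpha,T},\phi_j\rangle$ only on average over $\alpha$; since $\alpha\mapsto\langle\mu_{\alpha,T},\phi_j\rangle$ is a polynomial of degree $T{+}1$ with $\poly(T)$-bounded derivative on $[0,1]$ and $\|\mu_{\alpha,T}\|\le1$, the average bound upgrades to a pointwise one, $\sup_{\alpha\in[0,1]}\langle\mu_{\alpha,T},\phi_j\rangle^2\le\poly(T)\,\sigma_j^{2/3}$. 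Summing the geometric tail, for every $\alpha\in[0,1]$,
\[
\Big\|\mu_{\alpha,T}-\sum_{l=1}^{k}\langle\mu_{\alpha,T},\phi_l\rangle\,\phi_l\Big\|^{2}=\sum_{l>k}\langle\mu_{\alpha,T},\phi_l\rangle^{2}\le\poly(T)\,\rho^{-\Omega(k)}=e^{-\Theta(k/\log T)},
\]
the polynomial prefactor being absorbed by the ``$\sim$'' in the statement.

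\emph{Step 3 (assembling the predictor).} Set $M_l\defeq-\sum_j(Cv_j)(v_j^\top B)\,\langle\mu_{\alpha_j,T},\phi_l\rangle$, together with one coordinate handling the $\alpha=0$ feedthrough mode. By Step 1 this makes $\sum_{l\le k}M_lu_{(t-1):0}\phi_l$ equal to $y_t^{\mathrm{LDS}}-y_{t-1}^{\mathrm{LDS}}$ up to one Step-2 residual per mode; since $\|u_t\|\le1$ and the $\phi_l$ are orthonormal, Cauchy--Schwarz over the $\le L$ lags converts a per-mode $\ell_2$ residual $\varepsilon_j$ into $\|u_{(t-1):0}\varepsilon_j\|\le\sqrt{L}\,\|\varepsilon_j\|_2$, so the prediction error $\|y_t^{\mathrm{LDS}}-y_t^{\mathrm{SF}}\|$ (where the SF prediction conditions on the true $y_{t-1}$, so there is no recursion) is at most a $\poly(L)$ factor times $\|C\|\,\|B\|$ (times a rank factor) times $e^{-\Theta(k/\log L)}$, i.e.\ $\sim e^{-k/\log L}$ for every $t\in[L]$.

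\emph{Main obstacle.} The substantive content is the pair of estimates feeding Step 2: the geometric singular-value decay of the Hilbert-type Hankel matrix (which pins down the $1/\log T$ rate), and the average-to-pointwise conversion for $\langle\mu_{\alpha,T},\phi_j\rangle$; everything else is bookkeeping. A secondary point worth verifying is that eigenvalues of $A$ arbitrarily close to $1$ do not blow up the coefficients $M_l$ --- the $(1-\alpha)$ weight in $\mu_{\alpha,T}$ vanishes there and is exactly the factor produced by differencing the Markov parameters, so each $\langle\mu_{\alpha_j,T},\phi_l\rangle$ stays uniformly bounded and the marginally-stable regime is handled on the same footing as the rest.
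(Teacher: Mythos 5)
Your proposal is correct and is essentially the argument of the cited work: the paper itself gives no proof of Theorem~\ref{thm:hszthm} (it is imported from \cite{hazan2017efficient}), and your three steps — differencing the Markov parameters to expose the weighted impulse vectors $\mu_{\alpha,T}$, combining the geometric decay of the Hankel eigenvalues with an average-to-pointwise bound on $\alpha\mapsto\langle\mu_{\alpha,T},\phi_j\rangle$, and defining $M_l$ via the projections $\langle\mu_{\alpha_j,T},\phi_l\rangle$ — are exactly the machinery this paper redeploys in Appendix~B (Lemma~\ref{lemma:application_of_vanilla_to_general} and Lemmas~\ref{lemma:nu_alpha_phi_small}--\ref{lemma:exp_decay_nu_eigs}). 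The only cosmetic deviation is your $\sigma_j^{2/3}$ exponent in the pointwise upgrade, where the Lipschitz argument actually yields $\sqrt{6\sigma_j}$; either power suffices given the exponential decay of $\sigma_j$, so nothing is affected.
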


Theorem~\ref{thm:hszthm} establishes that Spectral Filtering can predict long memory sequences since the statements holds even over marginally stable linear dynamical systems.

\section{Learning with a Short Context—Provable Length Generalization for Linear Dynamical Systems}

In Algorithm~\ref{alg:ogd_short_length}, we modify the classical online learning algorithm for spectral filtering to use a shorter context window. To properly define our notion of length generalization, we need to distinguish between context lengths. Thus we introduce the notation for the loss observed with a context length $L$:
\begin{equation*}
    \ell_t(M, L)  \defeq \| y_t - y_{t-1} - \sum_{i=1}^k M_{i}  u_{(t-1):(t-L)} \phi_i \|^2,
\end{equation*}
where $\left \{ M_i \right \}_{i = 1}^k$ and $\left \{ \phi_i \right \}_{i = 1}^k$ are as defined for Eq.~\ref{eqn:SFbasic} and $u_{(t-1):(t-L)} \in \mathbb{R}^{d_{\textrm{in}} \times T}$ is the matrix $u_{(t-1):0}$ but with the columns corresponding to the inputs after context length $L$, i.e. $u_{t - L - 1}, \dots, u_0$, zeroed out.  
Note that this is overloaded notation compared with $\ell_t(y,y')$ which measures the loss of the true $y$ with the predicted $y'$ as used in our definition of regret. The context length specific loss can be written equivalently as
\begin{equation*}
     \ell_t(M, L) =  \| \hat{y}(M^t, L) - y_t \|^2,
\end{equation*}
where $\hat{y}(M^t, L)$ denotes the prediction of $y_t$ using iterate $M^t$ and context window size $L$ as in Eq.~\ref{eqn:shalom4} of Algorithm~\ref{alg:ogd_short_length}.
\begin{algorithm}[ht]
\caption{Spectral Filtering with Limited Context} \label{alg:ogd_short_length}
\begin{algorithmic}[1]
\STATE {\bf Input:} $k > 0, T >0$, $L > 0$, $r > 0$. Initialize $M_{i}^1 \in \R^{d_{\textrm{out}} \times d_{\textrm{in}}}$ for $i \in [k]$ and set $M^1 = [M_1^1, \dots, M_k^1]$. Let $\phi_{1:k}$ be the largest eigenvectors of $H_T$ defined in Eq.~\ref{eqn:Hankel_def} with corresponding eigenvalues $\sigma_{1:k}$, and let $\pi_{\K}(\cdot)$ denote the projection to convex set $\K$.
\FOR {$t = 1,2,...,T$}
\STATE Compute and predict  
\begin{equation} \label{eqn:shalom4}
\hat{y}_t = y_{t-1} + \sum_{i =1}^{k} M_{i}^t u_{(t-1):(t-L)}  (\sigma_i^{1/4} \phi_{i}) . 
\end{equation}
\STATE Observe $y_t$, denote $\ell_t(M^t,L) = \|\hat{y_t} - y_t\|^2$ and update and project onto the low Frobenius norm ball
$$ \hat{M}^{t+1} \leftarrow M^{t} - \eta_t \nabla_{M} \ell_t( M^t ) $$
$$ M^{t+1} = \pi_{\K} \left( \hat{M}^{t+1} \right), $$
where $\K_{\mtrueconst} = \left\{ M \in \reals^{k \times d_{\textrm{out}} \times d_{\textrm{in}}} \textrm{ s.t. } \norm{M_{i}} \leq \mtrueconst  \textrm{ for all } i \in [k] \right\}$.
\ENDFOR 
\end{algorithmic}
\end{algorithm}

To provide a precise statement on length generalization, we present the following performance guarantee. Note that we measure loss differently for our prediction and the benchmark predictor—the comparator has access to more information. Therefore, this guarantee is stronger and more challenging to obtain than classical regret bounds. Note that we prove the following for a noiseless $(A,B,C,I)$-LDS rather than $(A,B,C,D)$ which is without loss of generality since we can consider the input as $Du_1, \dots, Du_T$. 
\begin{theorem}
\label{thm:lengthgeneralization}
Let $T \in \mathbb{Z}_{\geq 0}$ and $q \in [0, 1]$. Consider a sequence $(y_1, \dots, y_T)$ generated by an unknown and noiseless linear dynamical system defined by matrices $(A,B,C,I)$ as per Eq.~\ref{eqn:lds_equations}. Assume the input sequence $u_{0:(t-1)}$ is sufficiently well-conditioned, satisfying $\sum_{t = 0}^{T-1} (T - t) u_t u_t^{\top} \succeq \left( \frac{2 |C| |B|}{\sqrt{T}} \right) I$. Suppose the eigenvalues of $A$ lie within the range $\left[ 0, 1 - \frac{\log(T)}{8T^q} \right] \cup \left[ 1 - \frac{1}{2T^{5/4}}, 1 \right]$. Let $k  = \Omega \left(  \log(T) \cdot \log \left( T  d_A \right) \right)$, $r \geq \norm{B} \norm{C}$, and assume $T \geq (4 k  \log(T)/ \norm{C} \norm{B} )^{4}$. Algorithm~\ref{alg:ogd_short_length} satisfies: 
$$     \unfairregret\left( \mA(T^q), \prod^{\mathrm{SF}}_{T} \right) = \sum_{t=1}^T \ell_t( M^t, T^q) - \min_{M^* \in \K_{r}} \sum_{t=1}^T \ell_t(M^* , T) \leq  O \left( \norm{B}^2 \norm{C}^2 k^{3/2} \log(T) \sqrt{T} \right).$$
\end{theorem}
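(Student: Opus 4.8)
### Proof Proposal

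The plan is to decompose the Asymmetric-Regret into two pieces and bound each separately. Write
\[
\sum_{t=1}^T \ell_t(M^t, T^q) - \min_{M^* \in \K_r} \sum_{t=1}^T \ell_t(M^*, T) = \underbrace{\Big( \sum_{t=1}^T \ell_t(M^t, T^q) - \min_{M^* \in \K_r}\sum_{t=1}^T \ell_t(M^*, T^q) \Big)}_{\text{(I): standard regret at context } T^q} + \underbrace{\Big( \min_{M^* \in \K_r}\sum_{t=1}^T \ell_t(M^*, T^q) - \min_{M^* \in \K_r}\sum_{t=1}^T \ell_t(M^*, T) \Big)}_{\text{(II): cost of short context}}.
\]
Term (I) is a vanilla online convex optimization bound: the losses $\ell_t(M, T^q)$ are convex quadratics in $M$, the domain $\K_r$ has bounded Frobenius radius $O(r\sqrt{k})$, and the gradients are bounded because $\|u_{(t-1):(t-L)}(\sigma_i^{1/4}\phi_i)\|$ is controlled by $\sigma_i^{1/4}$ together with $\|\phi_i\|=1$ and $\|u_t\|\le 1$ — here I would invoke the standard fact that $\sum_i \sigma_i^{1/4}$ is bounded (this is exactly the summability that makes spectral filtering work, since the Hankel eigenvalues $\sigma_i$ decay exponentially). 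Running online gradient descent with step sizes $\eta_t \sim 1/\sqrt{t}$ then gives $\text{(I)} \le O(\|B\|^2\|C\|^2 k^{3/2}\log(T)\sqrt{T})$, matching the target rate; the extra $k^{3/2}\log T$ factors come from the diameter ($\sqrt{k}$), the gradient bound (another factor involving $\sum\sigma_i^{1/4}$ and $r\sim\|B\|\|C\|$), and the fact that $r$ must be taken $\gtrsim \|B\|\|C\|\log T$ or similar to contain the true comparator.

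Term (II) is where the real work — and the main obstacle — lies. I need to show that restricting the comparator from full context $T$ down to context $T^q$ costs at most $\tilde O(\sqrt T)$ in cumulative squared loss. The strategy: let $M^*$ be the (near-)optimal full-context spectral filtering comparator, which by Theorem~\ref{thm:hszthm} approximates the true LDS trajectory to within $e^{-k/\log T}$ per step once $k = \Omega(\log T \log(Td_A))$, hence with cumulative squared error $\le \tilde O(1)$ over the whole horizon (this is why $k$ is chosen as it is). Then I must argue that the \emph{same} $M^*$, when evaluated with only context $T^q$, still predicts well — i.e. the contribution of the truncated tail $u_{t-L-1},\dots,u_0$ to the spectral-filtering prediction $\sum_i M_i^* u_{(t-1):0}(\sigma_i^{1/4}\phi_i)$ is negligible. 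This is precisely where the eigenvalue hypothesis on $A$ enters. For eigenvalues of $A$ in $[0, 1-\log T/(8T^q)]$, the impulse response $CA^jB$ decays geometrically with rate $\approx e^{-j\log T/(8T^q)}$, so the tail beyond $j = T^q$ is of size $T^{-\Omega(1)}$ and its accumulated squared contribution over $T$ steps is $\tilde O(\sqrt T)$ or smaller. For eigenvalues in the near-marginal sliver $[1-\tfrac{1}{2T^{5/4}}, 1]$, the dynamics are so slow that over the entire horizon $T$ the system is essentially constant (within $T\cdot T^{-5/4} = T^{-1/4}$), so the autoregressive term $y_{t-1}$ in the predictor already captures almost everything and the spectral-filtering correction — tail included — is tiny. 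The technically delicate part is making this dichotomy quantitative uniformly over the spectrum: I would spectrally decompose $A = \sum \lambda_\ell v_\ell v_\ell^\top$, handle each mode $\lambda_\ell$ by which of the two intervals it lies in, and sum the resulting per-step error bounds, using the well-conditioning hypothesis $\sum_t (T-t) u_t u_t^\top \succeq \frac{2|C||B|}{\sqrt T} I$ to control cross terms and to ensure the comparator's optimal value is not artificially inflated by the truncation.

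A secondary subtlety is reconciling the filters: Algorithm~\ref{alg:ogd_short_length} uses the eigenvectors $\phi_i$ of $H_T$ (the length-$T$ Hankel matrix) but only feeds them the length-$L$ window of inputs, so effectively it uses $\phi_i$ \emph{restricted} to its first $L$ coordinates. I need Theorem~\ref{thm:hszthm}'s approximation guarantee to degrade gracefully under this restriction; morally this is fine because the relevant filters $\phi_i$ (those with large $\sigma_i$) have most of their mass on the early coordinates, but I should state a lemma quantifying $\|\phi_i - (\phi_i \text{ truncated to } L)\|$ or, more directly, bounding the truncated-tail prediction error $\|\sum_i M_i^* u_{(t-L-1):0}(\sigma_i^{1/4}\phi_i)\|$ directly via the LDS impulse-response argument above rather than via the filters. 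I expect the cleanest route is to never reason about truncated filters at all: instead bound term (II) by comparing both $\min$-terms to the true noiseless LDS output $y_t = \sum_{j} CA^j B u_{t-j}$, using Theorem~\ref{thm:hszthm} for the full-context side and a direct geometric-tail estimate (split by the two eigenvalue intervals) for the short-context side. Assembling $\text{(I)} + \text{(II)} \le O(\|B\|^2\|C\|^2 k^{3/2}\log(T)\sqrt T)$ then completes the proof, and the condition $T \ge (4k\log T/(\|C\|\|B\|))^4$ is exactly what is needed to absorb lower-order terms and to guarantee $T^q \ge $ the threshold at which the geometric tail kicks in.
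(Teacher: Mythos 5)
Your proposal is correct in outline and its top-level decomposition (OGD regret at context $T^q$ plus a ``cost of short context'' term) matches the paper's, but you bound the second term by a genuinely more elementary route. You upper-bound $\min_{M\in\K_r}\sum_t\ell_t(M,T^q)$ by exhibiting a witness --- the LDS-generated matrices $M^{\textrm{true}}$ guaranteed by the spectral-filtering representation --- and lower-bound $\min_{M\in\K_r}\sum_t\ell_t(M,T)$ trivially by $0$; your geometric-tail dichotomy over the spectrum (exponential decay for $\alpha\le 1-\log T/(8T^q)$, near-constancy absorbed by the $y_{t-1}$ term for $\alpha\ge 1-\tfrac{1}{2}T^{-5/4}$) is exactly the paper's key estimate $\max_{\alpha}\,|h(\alpha)\,\alpha^{L-2}(1-\alpha^{T-L+1})(1-\alpha)^{-1}|\le T^{-1/4}$, which gives per-step truncation error $\norm{C}\norm{B}T^{-1/4}$ and hence cumulative squared loss $O(\norm{C}^2\norm{B}^2\sqrt T)$ for $M^{\textrm{true}}$ at context $T^q$ (this is Lemma~\ref{lemma:general_ltM_L} with $\delta=0$). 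The paper instead routes through the actual full-context minimizer $M_T^*$: it first proves a ``minimization is recovery'' lemma (Lemma~\ref{lemma:minimizing_loss_is_recovery_general}) showing that the well-conditioning hypothesis $\sum_t(T-t)u_tu_t^\top\succeq(2\norm{C}\norm{B}/\sqrt T)I$ makes the empirical loss strongly convex, so $M_T^*$ lies in a small ball around $M^{\textrm{true}}$, and then shows every point of that ball length-generalizes. This buys the stronger structural statement that the full-context optimum itself transfers to short contexts (Eq.~\ref{eqn:length_gen_MT}), whereas your route only needs the witness and in fact never uses the well-conditioning assumption --- so your guess that it is needed ``to control cross terms'' is off the mark; in the paper it is used solely for strong convexity of $\sum_t\ell_t(\cdot,T)$. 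One small muddle to clean up: early on you say ``let $M^*$ be the (near-)optimal full-context comparator, which by Theorem~\ref{thm:hszthm} approximates the true LDS trajectory'' --- the theorem only asserts existence of such matrices, and the empirical argmin need not be close to them without the strong-convexity step; your later ``cleanest route'' paragraph resolves this correctly by working with the witness directly, so you should commit to that version.
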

The proof of Theorem~\ref{thm:lengthgeneralization} is in Appendix~\ref{appendix:vanilla_proof} with a high-level overview in Section~\ref{subsection:high_level_proof}. This theorem shows that the sequence $M^1, \dots, M^T$ constructed by Algorithm~\ref{alg:ogd_short_length}, even when using a reduced context length of size $T^q$, is able to achieve regret $O(\sqrt{T})$ when compared to the best spectral filter that uses full context length $T$.

To better understand the eigenvalue ranges not covered by Theorem~\ref{thm:lengthgeneralization}, Figure~\ref{fig:eigvalintervals} highlights the regions of $[0, 1]$ that are excluded from the theorem's guarantee for different context lengths $T^q$. The $x$-axis represents $T$ on the x-axis and the unfavorable subsets of $[0, 1]$ are depicted as vertical slices. As $T$ and $q$ increase, our coverage of $[0, 1]$ improves.
Notably, our method applies even to marginally stable systems that exhibit long memory due to eigenvalues  $\alpha \in [1 - 1/(2T^{5/4}), 1]$.

\begin{figure}[ht]
    \centering
\includegraphics[width=0.9\textwidth]{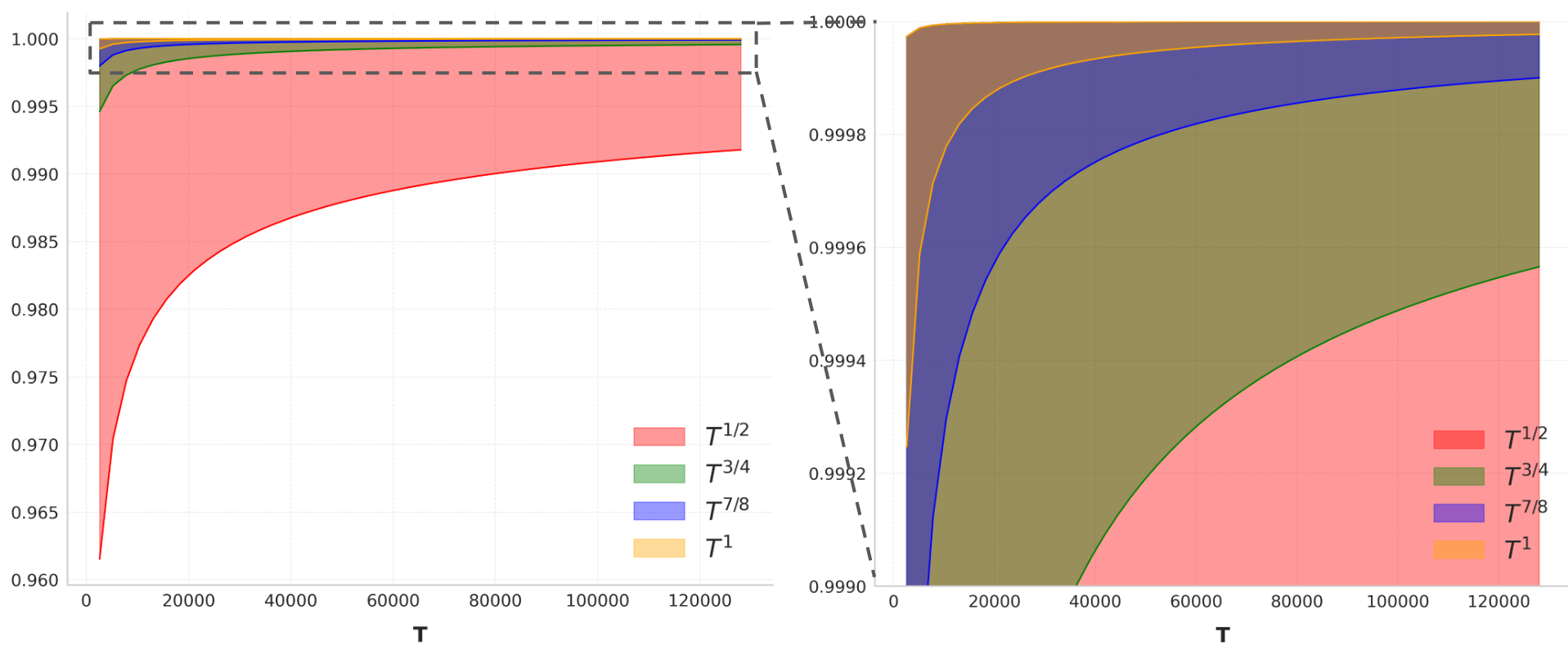}
    \caption{Regions of $[0, 1]$ not covered by Theorem~\ref{thm:lengthgeneralization}, with $T$ on the x-axis. For convenience, in the right image we zoom in to $[0.999, 1]$.}
    \label{fig:eigvalintervals}
\end{figure}

Motivated by the limitations of Theorem~\ref{thm:lengthgeneralization} to provide a length generalization that is robust to the spectrum of $A$, we introduce a variation on the classical Spectral Filtering algorithm, presented as Algorithm~\ref{alg:sf_two}. This algorithm uses the two most previous outputs $y_{t-1}$ and $y_{t-2}$ when making prediction $\hat{y}_t$ of $y_t$. 

This algorithm has a slightly different construction of spectral filters. Indeed, they are the eigenvectors of the following matrix
\begin{equation}
\label{eqn:N_T_def}
    N_T \defeq \int_{0}^1 \tilde{\mu}_{\alpha,T} \tilde{\mu}_{\alpha,T}^{\top} d \alpha,
\end{equation}
where $\tilde{\mu}_{\alpha,T} \defeq (1 - \alpha)^2 [1, \alpha, \alpha^2, \dots, \alpha^T]$. Interestingly, just by using one extra autoregressive term, our adapted algorithm is able to enjoy \emph{robust} length generalization in the sense that whenever the context window is at least $T^{1/3}$ then no extra assumptions on the spectrum of $A$ are necessary to achieve our notion of length generalization. We state this formally in the following theorem.  
\begin{algorithm}[ht]
\caption{Spectral Filtering with Limited Context and Two Autogressive Components} \label{alg:sf_two}
\begin{algorithmic}[1]
\STATE {\bf Input:} $k > 0, T >0$, $L > 0$, $r > 0$. Initialize $M_{i}^1 \in \R^{d_{\textrm{out}} \times d_{\textrm{in}}}$ for $i \in [k]$ and set $M^1 = [M_1^1, \dots, M_k^1]$. Let $\tilde{\phi}_{1:k}$ be the largest eigenvectors of $N_{T-2}$ defined in Eq.~\ref{eqn:N_T_def} with corresponding eigenvalues $\tilde{\sigma}_{1:k}$, and let $\pi_{\K}(\cdot)$ denote the projection to convex set $\K$.
\FOR {$t = 1,2,...,T$}
\STATE Compute and predict  
\begin{equation*} 
\hat{y}_t = 2 y_{t-1} - y_{t-2} + M_1^t u_{t-1} + M_2^t u_{t-2} +  \sum_{i =3}^{k} M_{i}^t u_{(t-3):(t-L)}  (\tilde{\sigma}_i^{1/4} \tilde{\phi}_{i}) . 
\end{equation*}
\STATE Observe $y_t$, denote $\ell_t(M^t,L) = \|\hat{y_t} - y_t\|^2$ and update and project onto the low Frobenius norm ball
$$ \hat{M}^{t+1} \leftarrow M^{t} - \eta_t \nabla_{M} \ell_t( M^t ) $$
$$ M^{t+1} = \pi_{\K} \left( \hat{M}^{t+1} \right), $$
where $\K_{\mtrueconst} = \left\{ M=[M_1, \dots, M_k] \textrm{ s.t. } \norm{M_{i}} \leq \mtrueconst  \textrm{ for all } i \in [k] \right\}$.
\ENDFOR 
\end{algorithmic}
\end{algorithm}

\begin{theorem}
\label{thm:lengthgeneralization_two_auto}

Let $T \in \mathbb{Z}_{\geq 0}$ and $q \in [0, 1]$. Consider a sequence $(y_1, \dots, y_T)$ generated by an unknown and noiseless linear dynamical system defined by matrices $(A,B,C,I)$ as per Eq.~\ref{eqn:lds_equations}. Assume the input sequence $u_{0:(t-1)}$ is sufficiently well-conditioned, satisfying $\sum_{t = 0}^{T-1} (T - t) u_t u_t^{\top} \succeq \left( \frac{2 |C| |B|}{\sqrt{T}} \right) I$. Suppose the eigenvalues of $A$ lie within the range $\left[ 0, 1 - \frac{\log(T)}{8T^q} \right] \cup \left[ 1 - \frac{1}{2T^{1/4}}, 1 \right]$. Let $k  = \Omega \left(  \log(T) \cdot \log \left( T  d_A \right) \right)$, $r \geq \norm{B} \norm{C}$ and assume $T \geq (4 k  \log^2(T)/ \norm{C} \norm{B} )^{4}$. Algorithm~\ref{alg:sf_two} satisfies: 
$$     \unfairregret\left( \mA(T^q), \prod^{\mathrm{SF}}_{T} \right) = \sum_{t=1}^T \ell_t( M^t, T^q) - \min_{M^* \in \K_{r}} \sum_{t=1}^T \ell_t(M^* , T) \leq  O \left( \norm{B}^2 \norm{C}^2 k^{3/2} \log^2(T) \sqrt{T} \right).$$
\end{theorem}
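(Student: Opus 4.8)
The plan is to bound the asymmetric regret by a decomposition into a standard online-learning term and a realizability term, and then exploit that the context-$T$ benchmark loss $\min_{M^*\in\K_r}\sum_t\ell_t(M^*,T)$ is nonnegative, so that the real work is to show the short context $T^q$ \emph{already} suffices to predict the LDS-generated sequence almost perfectly. Writing $M^t$ for the iterates of Algorithm~\ref{alg:sf_two},
\[
\unfairregret\ \le\ \Bigl(\tsum_{t=1}^T \ell_t(M^t,T^q)-\min_{M\in\K_r}\tsum_{t=1}^T\ell_t(M,T^q)\Bigr)\ +\ \min_{M\in\K_r}\tsum_{t=1}^T\ell_t(M,T^q),
\]
the first term being the ordinary online regret of the projected-gradient update over the convex set $\K_r$, the second the best achievable short-context loss. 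I would bound the second term first, since it drives the final bound and also upper-bounds the online comparator.

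For the second term I would exhibit an explicit $M^\star=[M_1^\star,\dots,M_k^\star]\in\K_r$ whose Algorithm~\ref{alg:sf_two} prediction tracks $y_t$ with per-step error $\tilde O(T^{-1/4})$, so $\sum_t\ell_t(M^\star,T^q)=O(\|B\|^2\|C\|^2\sqrt T)$ up to logs. The starting point is a second-difference identity for the noiseless $(A,B,C,I)$ dynamics: $y_t-2y_{t-1}+y_{t-2}= C(I-A)^2\sum_{j\ge0}A^jB\,u_{t-4-j}$ plus an affine function of $u_{t-1},u_{t-2},u_{t-3}$ with coefficient matrices of norm $O(\|C\|\|B\|)$ -- this is exactly why the algorithm uses two autoregressive terms and why the impulse vectors $\tilde\mu_{\alpha,T}$ defining $N_T$ carry the factor $(1-\alpha)^2$. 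Diagonalizing $A=\sum_\ell\alpha_\ell p_\ell p_\ell^\top$, the main term is $\sum_\ell(Cp_\ell)(p_\ell^\top B)\langle\tilde\mu_{\alpha_\ell,\cdot},u_{t-4:0}\rangle$, and the eigenvalue hypothesis splits it into two pieces handled differently. For $\alpha_\ell\in[1-\tfrac1{2T^{1/4}},1]$ I leave the piece \emph{uncancelled} as prediction error: bounding the convolution operator by $\sum_{j\ge0}\max_{\alpha\in[1-1/(2T^{1/4}),1]}(1-\alpha)^2\alpha^j=O(T^{-1/4})$ (the first $\approx T^{1/4}$ terms are $O(T^{-1/2})$ each, and the tail behaves like $\sum_j j^{-2}$) shows this piece costs only $O(\|C\|\|B\|T^{-1/4})$ per step, dimension-free. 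For $\alpha_\ell\in[0,1-\tfrac{\log T}{8T^q}]$ I truncate $\tilde\mu_{\alpha_\ell}$ to the $T^q$-window -- the gap forces $\alpha_\ell^{T^q}\le e^{-\log T/8}$, making this tail lower-order than the near-unit term -- and then project the truncated vector onto $\sp\{\tilde\sigma_i^{1/4}\tilde\phi_i\}_{i\le k}$ via an $N_T$-analogue of Theorem~\ref{thm:hszthm} (same exponential-spectral-decay proof), costing a further $e^{-\Omega(k/\log T)}\le (Td_A)^{-\Omega(1)}$ and defining $M_i^\star=\tilde\sigma_i^{-1/4}\sum_\ell\langle\tilde\mu_{\alpha_\ell},\tilde\phi_i\rangle (Cp_\ell)(p_\ell^\top B)$ with $\|M_i^\star\|\le r$ (the $\tilde\sigma_i^{1/4}$ scaling is chosen precisely for this); the affine low-order terms go into $M_1^\star,M_2^\star$ and the first filter coordinate. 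The ``bad'' band $[1-\tfrac{\log T}{8T^q},\,1-\tfrac1{2T^{1/4}}]$ is exactly where neither argument works, which is why it is excluded, and the hypothesis $T\ge(4k\log^2 T/\|C\|\|B\|)^4$ is what keeps the polylogarithmic overhead below the $\|C\|\|B\|T^{-1/4}$ main term.

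For the first term, the losses $M\mapsto\ell_t(M,T^q)$ are convex (squared norm of an affine map) on the Frobenius ball $\K_r$ of diameter $O(r\sqrt k)$, so online gradient descent applies; but a naive bound is too weak, because the features $g_i^t=u_{(t-3):(t-L)}\tilde\sigma_i^{1/4}\tilde\phi_i$ can have norm as large as $T^{q/4}$ -- one bounds $\|u_{(t-3):(t-L)}\tilde\phi_i\|\le\min(\sqrt{T^q},\|\tilde\phi_i\|_1)$ with $\|\tilde\phi_i\|_1=O(\tilde\sigma_i^{-1/2})$ (from the integral representation of $\tilde\phi_i$) and checks $\tilde\sigma_i^{1/4}\min(\sqrt{T^q},\tilde\sigma_i^{-1/2})\le T^{q/4}$ for every $i$ -- so the plain OGD bound would only give $T^{(1+q)/2}$. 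Instead I use a self-bounding (``small-loss'') step size: since $\|\nabla_M\ell_t(M^t)\|^2\le4\bigl(\sum_i\|g_i^t\|^2\bigr)\ell_t(M^t,T^q)\le 4kT^{q/2}\ell_t(M^t,T^q)$, running OGD with $\eta=\Theta(1/(kT^{q/2}))$ yields first term $\le O(r^2k^2T^{q/2})+\min_M\sum_t\ell_t(M,T^q)=O(r^2k^2\sqrt T)+\text{(second term)}$ using $q\le1$; a sharper count of $\sum_i\|g_i^t\|^2$ (only $O(\log T)$ filters carry weight, since the $\tilde\sigma_i$ decay geometrically) replaces $k^2$ by $k^{3/2}\log^2 T$, matching the stated constant. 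Adding the two terms gives $\tilde O(\sqrt T)$.

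The main obstacle is the realizability step, and within it the need to use three structural facts \emph{simultaneously}: the $(I-A)^2$ factor manufactured by the double autoregression, the cancellation $(1-\alpha)^2\le\tfrac1{4\sqrt T}$ for near-unit eigenvalues, and the geometric tail bound $\alpha^{T^q}\le e^{-\log T/8}$ for eigenvalues kept $\tfrac{\log T}{8T^q}$ away from $1$ -- together with establishing the $N_T$-version of the spectral-filtering approximation theorem with coefficient norms tight enough to keep $M^\star\in\K_r$, and keeping all error estimates dimension-free (operator-norm bounds rather than summing over eigencomponents). A secondary, more routine obstacle is pushing the online term down to $\sqrt T$ rather than $T^{(1+q)/2}$, which the small-loss OGD analysis resolves.
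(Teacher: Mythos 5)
Your proposal is correct in substance but routes around the paper's central lemma. The paper keeps the full-context benchmark $\min_{M}\sum_t\ell_t(M,T)$ throughout: it first shows, via strong convexity of $M\mapsto\sum_t\ell_t(M,T)$ --- which is exactly where the well-conditioned-input hypothesis $\sum_t(T-t)u_tu_t^{\top}\succeq(2\norm{C}\norm{B}/\sqrt{T})I$ enters --- that the full-context minimizer $M_T^*$ lies within $O(\norm{C}\norm{B}/\sqrt{T})$ of the LDS-induced matrices $\mtrue$, and then that every point in that small ball has short-context loss $O(\sqrt{T})$. You instead discard the benchmark by nonnegativity and directly exhibit $M^\star=\mtrue\in\K_r$ with $\sum_t\ell_t(M^\star,T^q)=O(\norm{C}^2\norm{B}^2\sqrt{T})$. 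This is a legitimate and in fact shorter route to the stated inequality (it never uses the well-conditioning hypothesis), and your realizability analysis --- the $(I-A)^2$ factor manufactured by the double autoregression, the $(1-\alpha)\le 1/(2T^{1/4})$ cancellation for near-unit eigenvalues, the geometric tail over the $T^q$-window for eigenvalues separated from $1$, and the $N_T$-analogue of the spectral approximation theorem with $\norm{\mtrue_i}\le\norm{C}\norm{B}$ --- is precisely the content of the paper's Lemmas~\ref{lemma:general_ltM_L} and~\ref{lemma:application_two_auto}. What the paper's longer route buys is that the chain of inequalities bounds the learner against the \emph{best} full-context predictor even in regimes where that predictor's loss dominates; in the noiseless setting here the benchmark loss is $O(1/T)$, so the two routes are equivalent.

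Two smaller points. First, your detour through a small-loss/self-bounding OGD analysis is unnecessary: the correct $\ell_1$ bound on the filters of $N_T$ is $\norm{\tilde{\phi}_i}_1\le O(\log(T)/\tilde{\sigma}_i^{1/4})$ (Lemma~\ref{lemma:nu_l1_norm_filter}), not $O(\tilde{\sigma}_i^{-1/2})$, so the features $u_{(t-3):(t-L)}(\tilde{\sigma}_i^{1/4}\tilde{\phi}_i)$ are $O(\log T)$-bounded and vanilla projected OGD with a fixed $G$ and $D$ already yields the $O(k^{3/2}r^2\log^2(T)\sqrt{T})$ online term; your route would also work but complicates the argument and naturally produces $k^2$ rather than $k^{3/2}$. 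Second, a quantitative slip in the realizability step: with the gap $\log(T)/(8T^q)$ you get $\alpha^{T^q}\le T^{-1/8}$, which is \emph{larger}, not smaller, than the $T^{-1/4}$ near-unit contribution; a per-step error of $T^{-1/8}$ only yields $\sum_t\ell_t(M^\star,T^q)\le O(T^{3/4})$. One needs the gap constant chosen so that $\alpha^{L-3}\le T^{-1/4}$, i.e.\ roughly $\delta\ge\log(T)/(2(L-3))$, which is what Lemma~\ref{lemma:application_two_auto} actually requires --- the paper's theorem statement carries the same loose constant, so this is cosmetic, but your claim that the truncation tail is ``lower-order'' than the near-unit term is backwards as written.
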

The proof of Theorem~\ref{thm:lengthgeneralization_two_auto} is in Appendix~\ref{appendix:two_vanilla} and we give a high-level overview of the proof in the following section. Observe that if $q \geq 1/3$, then $[0, 1 - \log(T)/(8T^q)] \cup [1-1/T^{1/4}, 1] = [0,1]$ for any $T>0$ and so we do not constrain the spectrum of $A$ to get length generalization.

\subsection{High Level Proof Overview of Main Theorems}
\label{subsection:high_level_proof}
The general proof technique for both Theorem~\ref{thm:lengthgeneralization} and Theorem~\ref{thm:lengthgeneralization_two_auto} is the same. First, using standard online gradient descent results we prove that the iterates $M^t$ achieve $O(\sqrt{T})$ regret as measured by the context-length restricted loss $\sum_{t = 1}^T \ell_t(M, L)$. That is,
\begin{equation}
\label{eqn:ogd_regret}
    \sum_{t = 1}^T \ell_t(M^t, L) \leq \min_{M \in \K_{\mtrueconst}} \sum_{t = 1}^T \ell_t(M, L) + O(\sqrt{T}).
\end{equation}
Next we prove that the (unique) $M^*$ which minimizes the loss on the full $T$-length context achieves length generalization in the sense that it achieves small loss even when only allowed to use context length $L$. That is, defining
\begin{equation*}
    M_T^* \defeq \argmin_{M \in \K_{\mtrueconst}} \sum_{t = 1}^T \ell_t(M, T),
\end{equation*}
we have
\begin{equation}
\label{eqn:length_gen_MT}
    \sum_{t = 1}^T \ell_t(M_T^*, L) \leq \sum_{t = 1}^T \ell_t(M_T^*, T) + O(\sqrt{T}).
\end{equation}
Since it is trivially the case that
\begin{equation*}
    \min_{M \in \K_{\mtrueconst}} \sum_{t = 1}^T \ell_t(M, L) \leq \sum_{t = 1}^T \ell_t(M_T^*, L),
\end{equation*}
we can combine Eq.~\ref{eqn:ogd_regret} and Eq.~\ref{eqn:length_gen_MT} to get the final notion of length generalization that
\begin{equation*}
    \sum_{t = 1}^T \ell_t(M^t, L) \leq  \min_{M \in \K_{\mtrueconst}} \sum_{t = 1}^T \ell_t(M, L) + O(\sqrt{T}) \leq \sum_{t = 1}^T \ell_t(M_T^*, L) + O(\sqrt{T}) \leq  \sum_{t = 1}^T \ell_t(M_T^*, T) + O(\sqrt{T}).
\end{equation*}
The difficult result to prove is Eq.~\ref{eqn:length_gen_MT}.  The high level idea is that when $y_{1:t}$ evolves as a noiseless LDS and when the input $u_{0:(t-1)}$ is sufficiently well-conditioned, then the minimizer for $\sum_{t = 1}^T \ell_t(M, T)$ approximately recovers a collection of ``true'' matrices which are generated by the underlying linear dynamical system. The second key idea is that if an algorithm had access to these ``true'' matrices then it would be able to achieve small loss even when restricted to a small context-length $L \ll T$. The extent to which these recovered matrices can achieve small loss when restricted to the small context-length depends on the way the algorithm chooses to predict $y_t$. In the case of Algorithm~\ref{alg:ogd_short_length} where $y_t$ is predicted based only using only one autoregressive term, even having access to the true matrices is not enough to accurately predict $y_t$. However, in the case of Algorithm~\ref{alg:sf_two}, having access to the true matrices as well as a second autoregressive term allows accurate prediction of $y_t$ even when restricted to small context-length window.

\section{Tensorized Spectral Filtering}

Adding some form of regularization to a prediction model often results in provably and empirically better generalization in many different problem settings. While length generalization differs from overall generalization, we explore how introducing regularization to spectral filters might enhance length generalization. To this end, we introduce the class of \emph{tensorized spectral filters}.

\begin{definition}[Tensorized Spectral Filters]  Let $\phi_{1}^d, \dots, \phi_d^d$ be the eigenvectors of the $d \times d$ Hankel matrix $H_d$ as defined in Eq.~\ref{eqn:Hankel_def}.  The class of $(d_1, d_2)$-dimensional \textbf{tensorized spectral filters} takes the form
\begin{equation*}
    \left \{ \phi_i^{d_1} \otimes \phi_j^{d_2} | i \in [d_1], j \in [d_2]\right \},
\end{equation*}
where $\otimes$ denotes the tensor (Kronecker) product.
\end{definition}

Part of the motivation for considering tensorized spectral filters comes from the tensor structure of the $\mu_{\alpha,T}$ which defines the matrix $H_T$ in Eq.~\ref{eqn:Hankel_def}. Observe that for a given $\alpha$ we have,
\begin{equation*}
    \mu_{\alpha, L^2} = (1 - \alpha^L)^{-1} \cdot \mu_{\alpha, L} \otimes \mu_{\alpha^{L},L}
\end{equation*}
Therefore, up to rescaling, the set of $\mu_{\alpha, L^2}$ is contained in the set of tensors $\mu_{\alpha, L} \otimes \mu_{\beta, L}$: 
\begin{equation} \label{eqn:shalom2}
\left\{ \mu_\alpha^{L^2}  | \alpha \in [0,1] \right\} \subseteq \left \{  \mu_\alpha^L \otimes \mu_\beta^L |   \alpha,\beta \in [0,1] \right\}  . 
\end{equation}
This property ensures that we can approximate the spectral filtering algorithm with its tensor approximation as per Algorithm~\ref{alg:tensor_sf}. In fact, the tensor spectral filtering algorithm is more expressive, as equation \eqref{eqn:shalom2} hints. One can construct non-linear dynamical systems that can be approximated by tensorized spectral filters but not by spectral filtering itself. Such a system would have dynamics corresponding to the tensor 
$$ \mu_\alpha^L \otimes \mu_\beta^L ,$$
where $\beta \neq \alpha^L$, and thus imply changing dynamics every $L$ iterations. In the following theorem we formalize this intuition that tensorized spectral filters are just as capable at modeling linear dynamical systems. 

\begin{theorem} \label{thm:expressivity}
Given any linear dynamical system parametrized by $A,B,C,D$ such that $A$ is a PSD matrix with $\|A\| \leq 1$, there exists matrices $M', M'', \{ M_{ij}, i,j \in [k] \}$, such that for sequences $u_{1:T}$ where $\|u_t\| \leq 1$ the following holds. Let $y^{\mathrm{LDS}}_{1:T}$ be the sequence generated by execution of the LDS via \eqref{eqn:lds_equations}, and $y^{\mathrm{TSF}}_{1:T}$ be the sequence generated by Tensor Spectral Filtering via:
\begin{equation*} \label{eqn:tensor_representation}
{y}_t^{\mathrm{TSF}} - 2 {y}_{t-1}^{\mathrm{TSF}} + y_{t-2}^{\mathrm{TSF}} = M' u_{t} + M'' u_{t-1} + \sum_{i,j=1}^{k} M_{i j}  u_{t-2:1}  \cdot \phi_{i} \otimes \phi_j .
\end{equation*}
Then for all $t \in [T]$,
\[ \|y^{\mathrm{LDS}}_{t} - y^{\mathrm{TSF}}_{t}\| \sim e^{-  \frac{k}{\log(T)} } . \]
\end{theorem}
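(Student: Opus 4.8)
The plan is to follow the proof of Theorem~\ref{thm:hszthm}—the analogous expressivity result for ordinary spectral filtering—and insert one extra ``tensoring'' step that exploits the factorization highlighted in Eq.~\eqref{eqn:shalom2}. Take $d_1=d_2=d$ with $d^2\geq T$ (zero-padding where needed), and recall the fact underlying Theorem~\ref{thm:hszthm}: for every $\beta\in[0,1]$ the impulse response $\mu_{\beta,d}$ has norm at most $1$ and lies within $\sim e^{-k/\log d}$ of $\sp\{\phi_1^d,\dots,\phi_k^d\}$. First I would expand the second difference of the LDS output. With $x_0=0$ and no noise, $y_t^{\mathrm{LDS}}=Du_{t-1}+\sum_{j=2}^{t}CA^{j-2}Bu_{t-j}$, so $y_t^{\mathrm{LDS}}-2y_{t-1}^{\mathrm{LDS}}+y_{t-2}^{\mathrm{LDS}}$ has bounded coefficients on $u_t$, $u_{t-1}$ and a constant number of further low-lag inputs, and coefficient $CA^{j-4}(I-A)^2B$ on $u_{t-j}$ for $j\geq 4$. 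The first group is exactly what the terms $M'u_t$ and $M''u_{t-1}$ are for, with the remaining $O(1)$ low-order coefficients absorbed as in the analysis of \citep{hazan2017learning}. Diagonalizing $A=\sum_\ell\alpha_\ell P_\ell$ (using $A\succeq0$), the ``bulk'' coefficient equals $\sum_\ell(1-\alpha_\ell)^2\alpha_\ell^{\,j-4}(CP_\ell B)$: as a function of the lag it is exactly (up to a fixed index shift) the $(1-\alpha)^2$-weighted impulse response $\tilde\mu_{\alpha_\ell,T}$, which is why a \emph{second}-order autoregressive correction is the right choice.

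The conceptual heart is the next step. From $[1,\alpha,\dots,\alpha^{d^2-1}]=[1,\dots,\alpha^{d-1}]\otimes[1,\alpha^{d},\dots,\alpha^{(d-1)d}]$ one gets
\begin{equation*}
\tilde\mu_{\alpha,d^2}\;=\;\frac{1-\alpha}{1-\alpha^{d}}\,\mu_{\alpha,d}\otimes\mu_{\alpha^{d},d},\qquad\text{with}\qquad\frac{1-\alpha}{1-\alpha^{d}}=\frac{1}{1+\alpha+\dots+\alpha^{d-1}}\in\Big[\tfrac1d,\,1\Big].
\end{equation*}
Contrast this with the identity $\mu_{\alpha,d^2}=(1-\alpha^{d})^{-1}\mu_{\alpha,d}\otimes\mu_{\alpha^{d},d}$ of Eq.~\eqref{eqn:shalom2}, whose prefactor diverges as $\alpha\to1$: taking the second difference is precisely what converts this blow-up into a prefactor bounded by $1$, so that the Kronecker factorization stays lossless even for marginally-stable modes. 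Note also that although the $\phi_i^{d}$ are eigenvectors of $H_d$ (built from the $(1-\alpha)$-weighted $\mu$'s rather than the $(1-\alpha)^2$-weighted $\tilde\mu$'s), they approximate $\tilde\mu_{\beta,d}=(1-\beta)\mu_{\beta,d}$ at least as well as they approximate $\mu_{\beta,d}$, since $|1-\beta|\leq1$; so the $H_d$-filters are permissible here.

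I would then tensor up the single-filter guarantee. For each mode $\ell$ pick $a_\ell\in\sp\{\phi_i^d\}_{i\leq k}$ and $b_\ell\in\sp\{\phi_j^d\}_{j\leq k}$ with $\norm{\mu_{\alpha_\ell,d}-a_\ell},\norm{\mu_{\alpha_\ell^{d},d}-b_\ell}\sim e^{-k/\log d}$; bilinearity gives $\mu_{\alpha_\ell,d}\otimes\mu_{\alpha_\ell^{d},d}-a_\ell\otimes b_\ell=(\mu_{\alpha_\ell,d}-a_\ell)\otimes\mu_{\alpha_\ell^{d},d}+a_\ell\otimes(\mu_{\alpha_\ell^{d},d}-b_\ell)$, which has norm $\sim e^{-k/\log d}$ because every vector involved has norm $\leq1$, while $a_\ell\otimes b_\ell\in\sp\{\phi_i^d\otimes\phi_j^d\}_{i,j\leq k}$. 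Multiplying by the bounded scalar $\tfrac{1-\alpha_\ell}{1-\alpha_\ell^{d}}$ and the matrix $CP_\ell B$ (norm $\leq\norm C\norm B$) and summing over the $\leq d_A$ modes, the bulk coefficient sequence of $y_t^{\mathrm{LDS}}-2y_{t-1}^{\mathrm{LDS}}+y_{t-2}^{\mathrm{LDS}}$ lies within $\sim d_A\norm C\norm B\,e^{-k/\log d}$ of $\sum_{i,j\leq k}M_{ij}(\phi_i^d\otimes\phi_j^d)$ for the explicit choice $M_{ij}=\sum_\ell\tfrac{1-\alpha_\ell}{1-\alpha_\ell^{d}}\la\mu_{\alpha_\ell,d},\phi_i^d\ra\la\mu_{\alpha_\ell^{d},d},\phi_j^d\ra(CP_\ell B)$. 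With $M',M''$ from the first step and these $M_{ij}$, the driving term of the tensorized recursion matches $y_t^{\mathrm{LDS}}-2y_{t-1}^{\mathrm{LDS}}+y_{t-2}^{\mathrm{LDS}}$ up to a coefficient error of order $e^{-k/\log d}$; pairing with unit-norm inputs costs only a $\poly(T)$ Cauchy--Schwarz factor, and $d\asymp\sqrt T$ gives $\log d\asymp\log T$, so $\norm{y_t^{\mathrm{LDS}}-y_t^{\mathrm{TSF}}}\sim e^{-k/\log T}$ when the autoregressive terms use the true $y$'s. (If $y^{\mathrm{TSF}}$ is instead propagated through its own recursion, the homogeneous solutions of $y_t=2y_{t-1}-y_{t-2}$ grow only linearly, costing an extra $O(T^2)$ factor, likewise absorbed into ``$\sim$''.)

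I expect the main obstacle to be the step above: recognizing that a zeroth- or first-order autoregressive term cannot suffice—because the tensor-factorization constant $(1-\alpha^{d})^{-1}$ is unbounded near $\alpha=1$—and that the $(1-\alpha)^2$ weighting supplied by the second difference is exactly what restores a bounded constant (the same phenomenon that makes Algorithm~\ref{alg:sf_two} robust). The remaining technical work is routine but fiddly: pushing the single-filter approximation of \citep{hazan2017learning} through the Kronecker product and the spectral decomposition of $A$ without degrading the exponential rate, and disposing of the $O(1)$ low-lag boundary coefficients exactly as in the original spectral filtering analysis.
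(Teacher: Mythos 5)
Your proposal is correct and follows essentially the same route as the paper, whose argument for this result is embedded in Lemma~\ref{lemma:length_general_tensor}: take the second difference to obtain the $(I-A)^2$-weighted impulse response $\tilde\mu_{\alpha}$, use the Kronecker factorization $\tilde\mu_{\alpha,d^2}=\frac{1-\alpha}{1-\alpha^{d}}\,\mu_{\alpha,d}\otimes\mu_{\alpha^{d},d}$ with its crucially bounded prefactor, and then expand in the orthonormal basis $\{\phi_i\otimes\phi_j\}$ and truncate using the exponential decay of $|\phi_i^{\top}\mu_{\alpha}|$; your explicit $M_{ij}$ coincide with the paper's $\mtrue_{i+k(j-1)+2}$. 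The only cosmetic difference is that you project each tensor factor onto the top-$k$ span and invoke bilinearity, whereas the paper bounds the tail of the full tensor expansion directly — these are the same estimate.
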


\subsection{Length Generalization with Tensorized Spectral Filtering}

The representation property from Theorem~\ref{thm:expressivity} gives rise to a novel algorithm (Algorithm \ref{alg:tensor_sf}). Notably, this algorithm is based on online gradient descent for spectral filtering and can be shown to give the same regret bounds as the original spectral filtering algorithm from \cite{hazan2017learning}.  

\begin{algorithm}[ht]
\caption{Tensorized Spectral Filtering } \label{alg:tensor_sf}
\begin{algorithmic}[1]
\STATE {\bf Input:} $T>0$, $k > 0, L >0$, $c > 0$. Initialize $M'^{1}, M''^{1}, M_{ij}^{1} \in \R^{d_{\textrm{out}} \times d_{\textrm{in}}}$ for $i, j\in [k]$ and set $M^1 = [M'^{1}, M''^{1}, M_{11}^{1}, \dots, M_{kk}^{1}]$. Let $T' = (\lceil \sqrt{T-2}\rceil)^2 + 2$. Let $\phi_{1:k}$ be the $k$ largest eigenvectors of $H_{\sqrt{T'-2}}$, and $\pi_{\K}(\cdot)$ denote the projection to set $\K_{\mtrueconst}$.
\FOR {$t = 1,2,...,T$}
\STATE Compute and predict  
\begin{equation*}
\hat{y}_t = 2 y_{t-1} - y_{t-2}  + M'^t u_{t-1} + M''^t u_{t-2} +  \sum_{i, j=1}^{k} M_{i j}^t  u_{t-3:t-L}(\sigma_i^{1/4} \phi_{i} \otimes \sigma_j^{1/4} \phi_j ), 
\end{equation*}
where $u_{t-3:t-L} \in \R^{d_{\textrm{in}} \times (T' - 2)} $ is padded appropriately.
\STATE Observe $y_t$, denote $\ell_t(M^t) = \|\hat{y_t} - y_t\|^2$ and update and project update and project onto the low Frobenius norm ball
$$ \hat{M}^{t+1} \leftarrow M^{t} - \eta_t \nabla_{M} \ell_t( M^t ) $$
$$ M^{t+1} = \proj_{\K} \left( \hat{M}^{t+1} \right), $$
where $\K_{ \mtrueconst} = \left\{ M =[M', M'', M_{11}, \dots, M_{kk}]  \textrm{ s.t. } \norm{M'} \leq \mtrueconst \textrm{ and } \norm{M''} \leq \mtrueconst \textrm{ and } \norm{M_{ij}} \leq \mtrueconst \right\}$.
\ENDFOR 
\end{algorithmic}
\end{algorithm}

Just as in Theorem~\ref{thm:lengthgeneralization} and Theorem~\ref{thm:lengthgeneralization_two_auto}, we have a companion result for tensorized spectral filtering. 
\begin{theorem}
\label{thm:lengthgeneralization_tensors}
Suppose $y_{1:t}$ evolves as a noiseless $(A,B,C,I)$-LDS where $A \in \reals^{d_A \times d_A}$ is a PSD matrix and the input $u_{T:1}$ is such that $ \sum_{t = 1}^T (T-t) u_t u_t^{\top} \succeq (2 \norm{C} \norm{B}/\sqrt{T})I$. Further, assume the eigenvalues of $A$ fall in the range $[0, 1 - \log(T)/(8T^q)] \cup [1-1/(2T^{1/4}), 1]$. Let $k  = \Omega \left(  \log(T) \cdot \log \left( T  d_A \right) \right)$, $r \geq \norm{B} \norm{C}$ and assume $T \geq (4 k  \log^2(T)/ \norm{C} \norm{B} )^{4}$. Algorithm~\ref{alg:tensor_sf} satisfies: 
$$     \unfairregret = \sum_{t=1}^T \ell_t( M^t, T^q) - \min_{M^* \in \K_{r}} \sum_{t=1}^T \ell_t(M^* , T) \leq  O \left( \norm{B}^2 \norm{C}^2 k^{3} \log^2(T) \sqrt{T} \right).$$
\end{theorem}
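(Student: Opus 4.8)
The plan is to follow the two-part template laid out in Section~\ref{subsection:high_level_proof}, adapted to the tensorized setting of Algorithm~\ref{alg:tensor_sf}. First I would establish the online-gradient-descent regret bound (Eq.~\ref{eqn:ogd_regret}): since each loss $\ell_t(M, L)$ is convex in the parameter block $M = [M', M'', M_{11}, \dots, M_{kk}]$, is Lipschitz and has bounded gradients on the convex set $\K_r$ (using $\norm{u_t} \le 1$, the filter normalization $\sigma_i^{1/4}$, and the boundedness of $y$'s along a marginally-stable LDS trajectory), a standard OGD analysis with step size $\eta_t \propto 1/\sqrt{t}$ yields $\sum_t \ell_t(M^t, T^q) \le \min_{M \in \K_r}\sum_t \ell_t(M,T^q) + O(\sqrt{T})$, with the constant carrying the $\norm{B}^2\norm{C}^2$ and polynomial-in-$k$ factors from the gradient-norm and diameter bounds. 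The number of parameter blocks is now $k^2 + 2$ rather than $k$, which is the source of the $k^3$ dependence in the stated bound (versus $k^{3/2}$ in Theorem~\ref{thm:lengthgeneralization}).

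The second and harder part is the length-generalization inequality, the analogue of Eq.~\ref{eqn:length_gen_MT}: that the full-context minimizer $M_T^* = \argmin_{M \in \K_r}\sum_t \ell_t(M, T)$ suffers only $O(\sqrt{T})$ additional loss when restricted to context length $T^q$. I would argue this in two conceptual steps, mirroring the overview. Step (a): use the well-conditioning assumption $\sum_{t}(T-t)u_t u_t^\top \succeq (2\norm{C}\norm{B}/\sqrt{T})I$ together with the noiseless-LDS structure and the tensorized representation theorem (Theorem~\ref{thm:expressivity}) to show that $M_T^*$ approximately recovers the ``true'' tensorized coefficient matrices $M', M'', \{M_{ij}\}$ guaranteed by that theorem — i.e., the full-context loss is nearly zero and the minimizer is essentially pinned down by the conditioning. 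Step (b): show that these true matrices, when evaluated with the truncated context $u_{t-3:t-L}$ and the two autoregressive terms $2y_{t-1} - y_{t-2}$, still predict $y_t$ up to error controlled by (i) the tensorized-filter tail beyond $k$, which is $e^{-k/\log T}$ and hence negligible for $k = \Omega(\log T \log(Td_A))$, and (ii) the truncation error from dropping inputs older than $L = T^q$, which for eigenvalues in $[0, 1 - \log(T)/(8T^q)]$ decays geometrically to $T^{-\Omega(1)}$, and for eigenvalues in $[1 - 1/(2T^{1/4}), 1]$ is handled by the second autoregressive term (the $y_{t-2}$ difference cancels the slowly-decaying near-unit-eigenvalue contribution, exactly as in the analysis of Algorithm~\ref{alg:sf_two}). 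Summing these per-step errors over $t \in [T]$ and using that squared loss is locally Lipschitz on the bounded domain gives the $O(\sqrt{T})$ bound.

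Combining the two parts exactly as in the displayed chain of inequalities in Section~\ref{subsection:high_level_proof} — using the trivial bound $\min_{M \in \K_r}\sum_t \ell_t(M, T^q) \le \sum_t \ell_t(M_T^*, T^q)$ — yields
\[
\sum_{t=1}^T \ell_t(M^t, T^q) - \min_{M^* \in \K_r}\sum_{t=1}^T \ell_t(M^*, T) \le O\!\left(\norm{B}^2\norm{C}^2 k^3 \log^2(T)\sqrt{T}\right),
\]
which is the claim. I expect the main obstacle to be Step (b) for the near-unit eigenvalues: one must verify that the tensor (Kronecker) structure of the filters $\phi_i \otimes \phi_j$ on $H_{\sqrt{T'-2}}$, combined with the double-autoregressive correction, still captures the impulse response $\mu_{\alpha, T'-2}$ for $\alpha$ within $1/(2T^{1/4})$ of $1$ after truncating to length $T^q$ — essentially re-deriving the key approximation lemma behind Theorem~\ref{thm:lengthgeneralization_two_auto} but with $\mu_\alpha^{L^2}$ replaced by the tensor $\mu_\alpha^L \otimes \mu_{\alpha^L}^L$ (via the identity in Eq.~\ref{eqn:shalom2}) and controlling the extra factor $(1-\alpha^L)^{-1}$ that appears, which is why the admissible lower endpoint shifts to $1 - 1/(2T^{1/4})$ and why an extra $\log$ and extra powers of $k$ creep into the final constant. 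A secondary technical point is ensuring the padding convention $T' = (\lceil\sqrt{T-2}\rceil)^2 + 2$ does not introduce boundary terms larger than $O(\sqrt{T})$; this should follow from $T' - T = O(\sqrt{T})$ and the uniform boundedness of the per-step loss.
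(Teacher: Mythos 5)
Your proposal is correct and follows essentially the same route as the paper: the paper formalizes your two-part template as an abstract workhorse result (Theorem~\ref{thm:general_length_gen}, combining an OGD regret lemma with a length-generalization lemma for the full-context minimizer via strong-convexity recovery under the well-conditioning assumption) and then verifies its hypotheses for Algorithm~\ref{alg:tensor_sf} in Lemma~\ref{lemma:length_general_tensor}, using exactly the ingredients you identify — the tensor identity $\tilde{\mu}_\alpha^{T} \propto \mu_\alpha^{\sqrt{T}} \otimes \mu_{\alpha^{\sqrt{T}}}^{\sqrt{T}}$ with the factor $(1-\alpha)/(1-\alpha^{\sqrt{T}}) \le 1$, the filter-tail decay for $k = \Omega(\log T \cdot \log(Td_A))$, and the $h(\alpha)=(1-\alpha)^2$ factor from the double autoregression that handles eigenvalues within $1/(2T^{1/4})$ of $1$. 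Your attributions of the $k^3$ (from $k^2$ filters) and the extra $\log$ (from $\lVert \phi_i \otimes \phi_j\rVert_1$) are also consistent with the paper's accounting.
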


The proof structure of Theorem~\ref{thm:lengthgeneralization_tensors} is the same as as the proof structure used for Theorem~\ref{thm:lengthgeneralization} and Theorem~\ref{thm:lengthgeneralization_two_auto} as described in Section~\ref{subsection:high_level_proof}. The full proof of Theorem~\ref{thm:lengthgeneralization_tensors} is in Appendix~\ref{appendix:tensor_proof}.

\section{Experiments} \label{sec:experiments}
\subsection{Linear Dynamical System}
We can empirically verify Theorem~\ref{thm:lengthgeneralization} in an online sequence prediction task where the data is generated by a noiseless LDS. We refer to a ``bad" region of eigenvalues $\left(1 - {\log(T)}/({8T^{7/8}}),\; 1-{1}/({2T^{5/4}})\right)$ as Region B, and we define Region A to hug Region B on both sides as shown in Figure~\ref{fig:badregion}. 
\begin{figure}[ht]
    \centering
\includegraphics[width=0.8\textwidth]{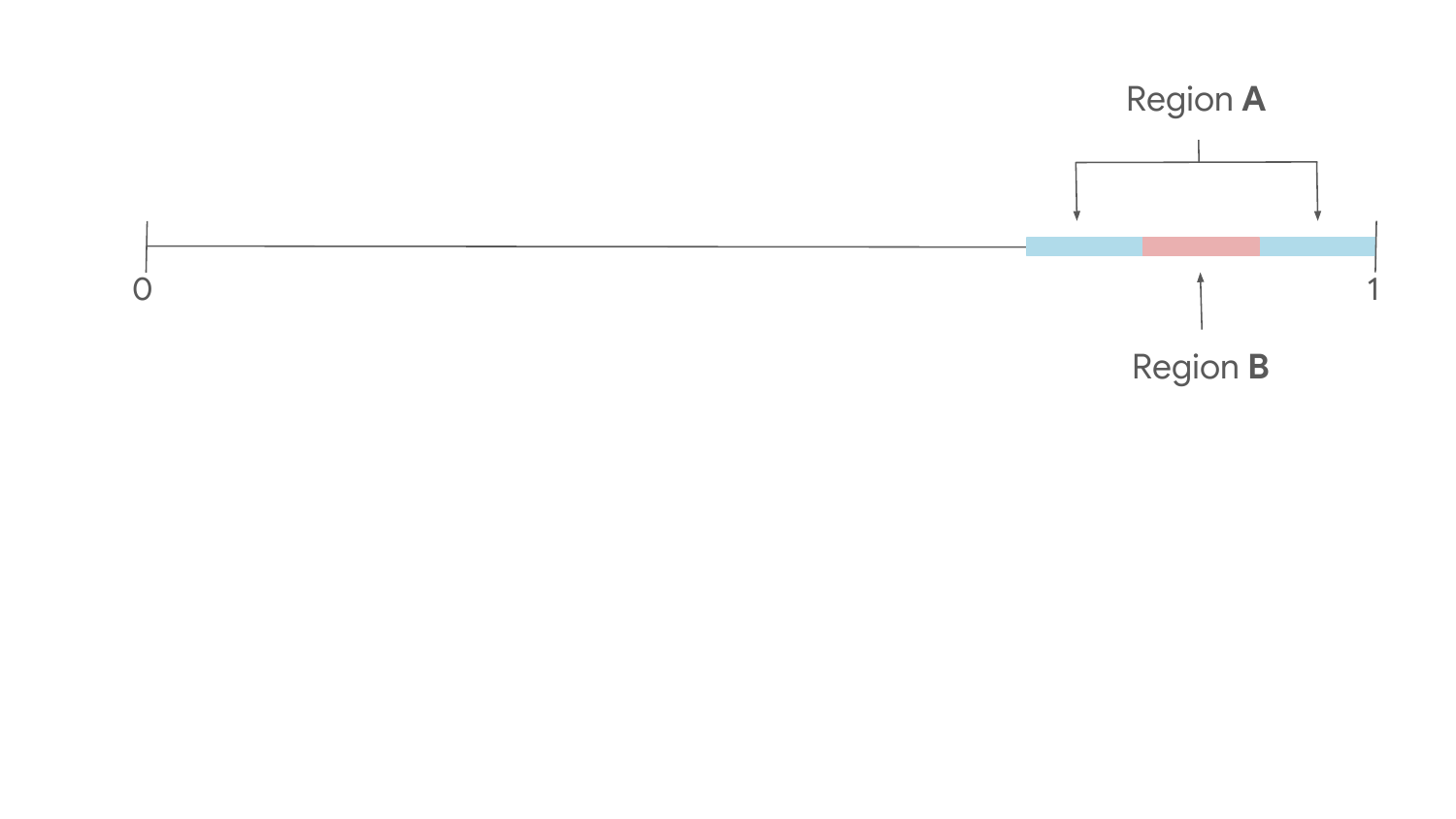}
    \caption{The red region (Region B) represents the interval of eigenvalues for which length generalization is not guaranteed by our main theorem. The blue region (Region A) is chosen to hug Region B on both sides -- to be precise, the leftmost point of Region A is $0.9 \cdot \left(1 - {\log(T)}/({8T^{7/8}})\right)$, and the rightmost point is $1$. This selection ensures that (1) Region $A$ will start to contain bad eigenvalues as $q$ decreases from $7/8$ and (2) eigenvalues in Region B are bad for $q \leq 7/8$.
    }
    \label{fig:badregion}
\end{figure}

Theorem~\ref{thm:lengthgeneralization} predicts that if all the eigenvalues lie outside Region B, then spectral filtering will length generalize from $T^{7/8}$ to $T$. To confirm this, we generate a random LDS of hidden dimension 512 with half of the LDS eigenvalues uniformly sampled from each component of \textbf{Region A}. The online prediction losses are plotted in Figure~\ref{fig:lds_huggingbad} for different choices of context length $T^q$, where $T=2^{14}$ and $k=24$. As expected from the theory, context lengths approaching $T^{7/8}$ closely match the performance of the optimal spectral filtering predictor with full context. 

Very interestingly, we see that context length $T^{1/2}$ consistently fails in a qualitatively worse fashion -- indeed, some of the values in Region A are actually ``bad" for $q=1/2$. This seems to suggest that such eigenvalues can actually cause instabilities/issues with length generalization and are not limitations of our proof -- if true, such a fact could be seen as a partial converse to Theorem~\ref{thm:lengthgeneralization} and would justify our use of ``bad'' to describe these eigenvalues. To check this conjecture empirically, we run another experiment where we generate a random LDS of hidden dimension 512 with all eigenvalues in \textbf{Region B} and plot the prediction losses in Figure \ref{fig:lds_bad}. These results confirm that (some subset of) this bad region is indeed what throws off the length generalization capability of spectral filtering\footnote{Note that these plots are zoomed in to inspect the convergence to a loss of exactly 0; in practice, loss values below $0.1$ are still meaningfully small.}.

\begin{figure}[ht]
    \centering
    \begin{minipage}{0.45\textwidth}
    \centering
\includegraphics[width=1.0\linewidth]{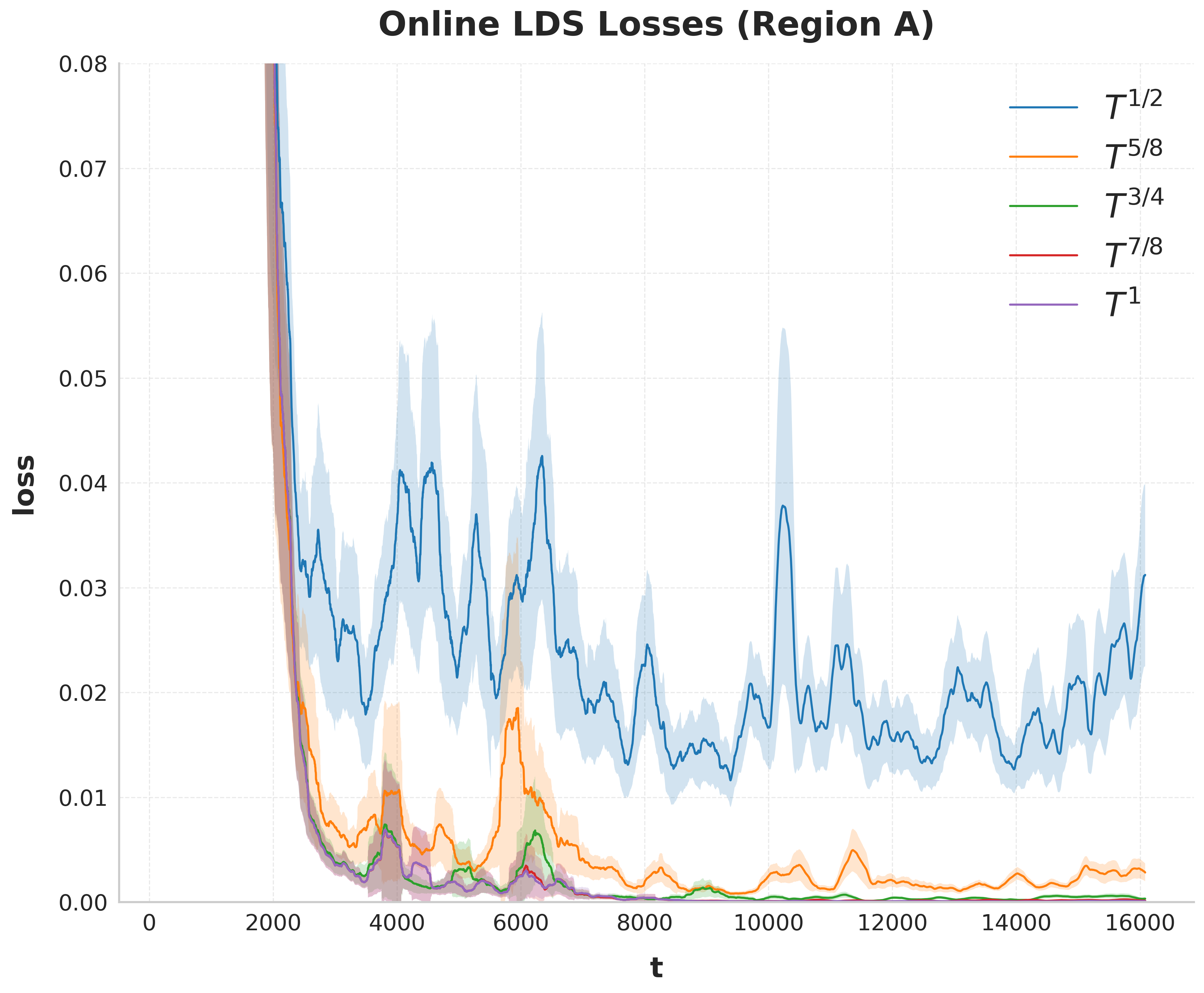}
    \caption{Prediction losses $\ell_t(M^t, T^q)$ as a function of $t$ on an LDS with eigenvalues sampled from \textbf{Region A}, averaged over random seeds and smoothed.
    }
    \label{fig:lds_huggingbad}
    \end{minipage}
    \hfill
    \begin{minipage}{0.45\textwidth}
        \centering
        \includegraphics[width=1.0\linewidth]{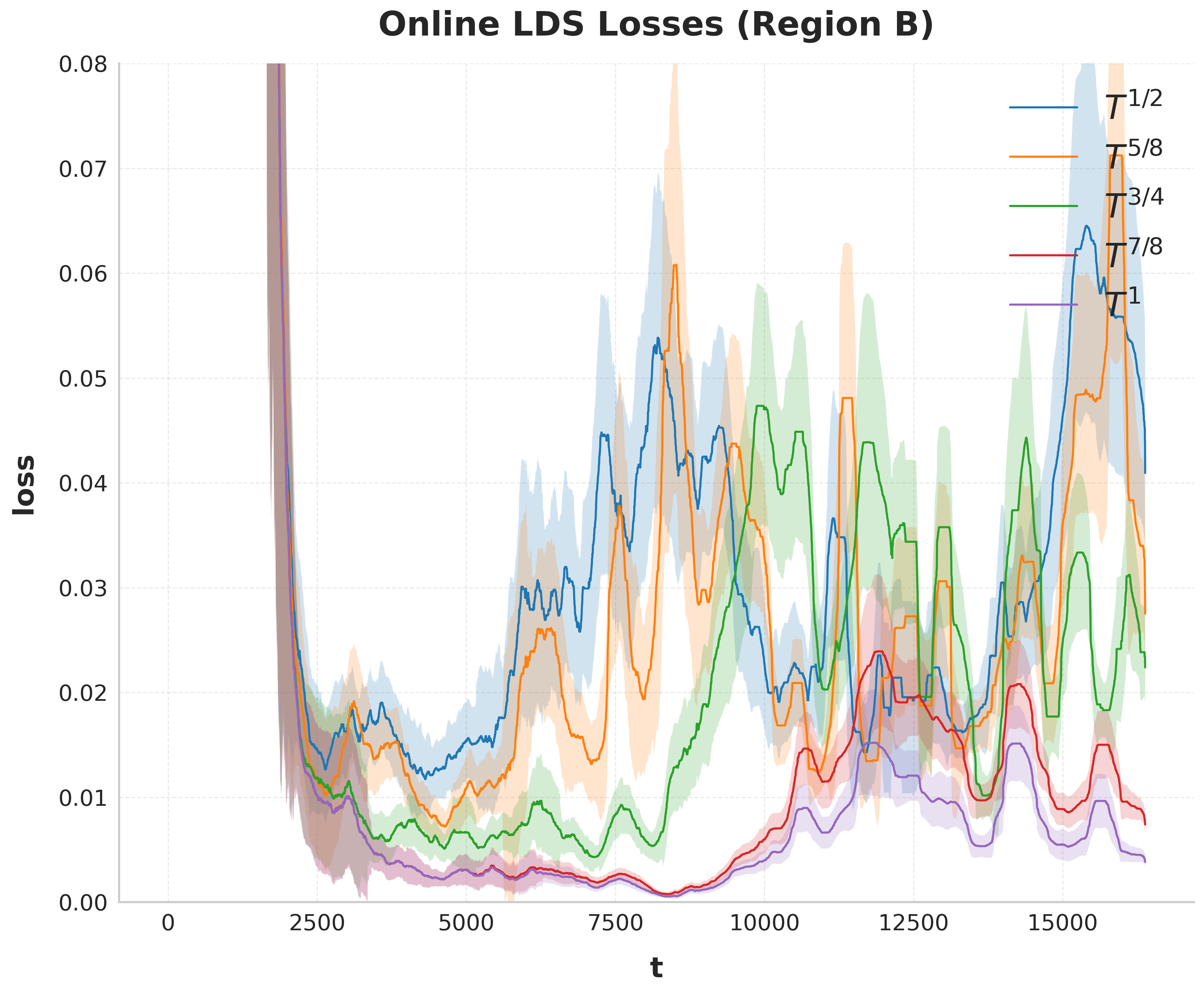}
    \caption{Prediction losses $\ell_t(M^t, T^q)$ as a function of $t$ on an LDS with eigenvalues sampled from \textbf{Region B}, averaged over random seeds and smoothed.}
    \label{fig:lds_bad}
    \end{minipage}
\end{figure}

\subsection{Two Autoregressive Components}
We have seen, both in theory and experiment, that vanilla spectral filtering has an inherent length generalization capability on an LDS under a minor spectral assumption. Introducing a second autoregressive component yields Algorithm \ref{alg:sf_two}, which is accompanied by a length generalization guarantee that removes this assumption and applies to all (symmetric, marginally-stable) LDS's. We verify this experimentally in Figure \ref{fig:lds_twoautoregressive} -- to be as adversarial as we can, this experiment is run with all eigenvalues sampled from \textbf{Region B}. As predicted by Theorem \ref{thm:lengthgeneralization_two_auto}, the second autoregressive component allows for robust length generalization even with context lengths as small as $\sqrt{T}$\footnote{The exact same conclusions are true for the tensorized version of the spectral filtering algorithm: the "bad" eigenvalues cause problems if only $y_{t-1}$ is used, and introducing $y_{t-2}$ allows for complete length generalization under no spectral assumptions. The regression on $u_{t-1}$ and $u_{t-2}$ appears to make no difference in practice. 

}.

\begin{figure}[ht]
    \centering
    \includegraphics[width=0.45\linewidth]{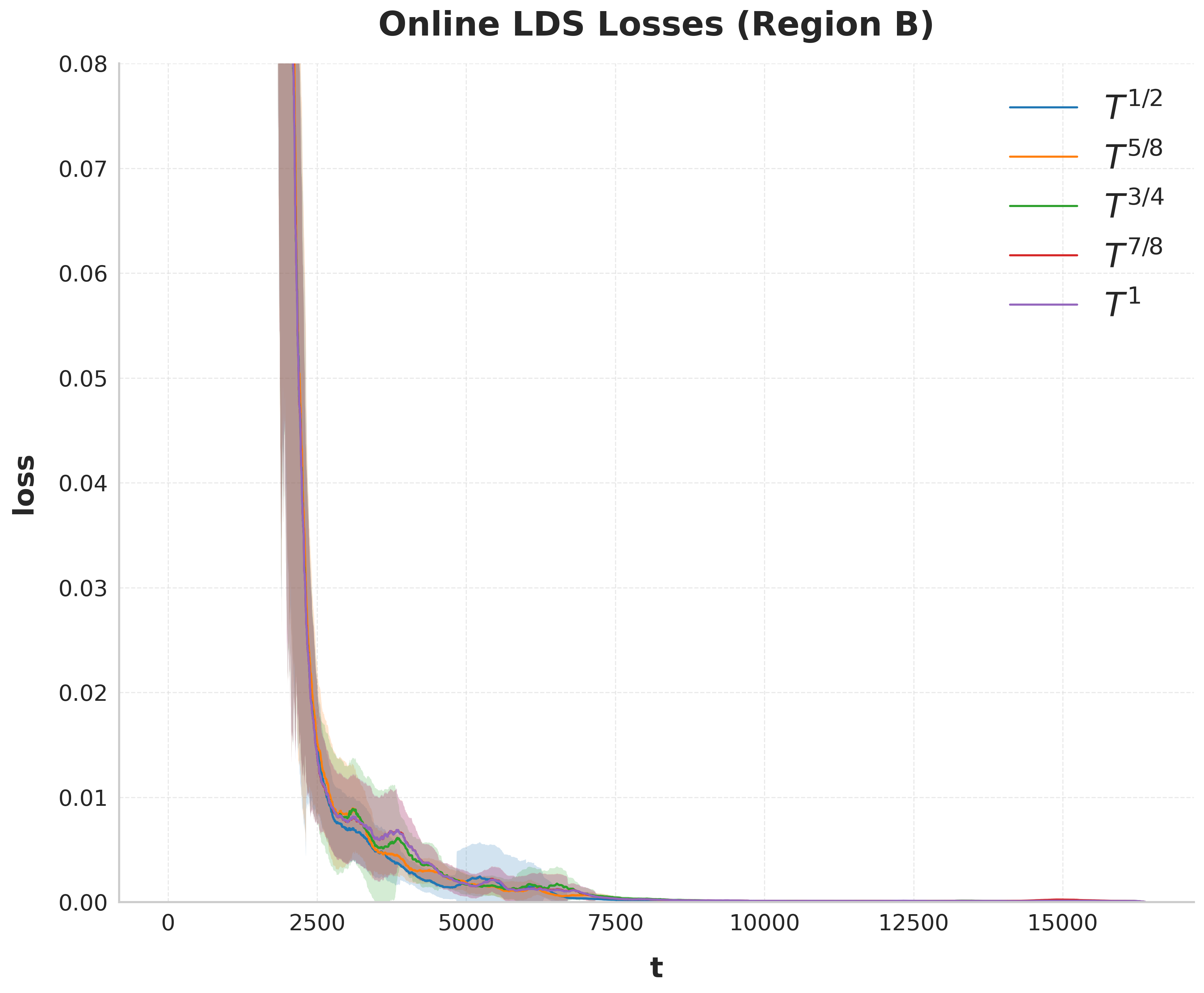}
    \caption{Prediction losses $\ell_t(M^t, T^q)$ as a function of $t$ with \textbf{two autoregressive components} on an LDS with eigenvalues sampled from \textbf{Region B}, averaged over random seeds and smoothed. Contrast with Figure \ref{fig:lds_bad}.}
    \label{fig:lds_twoautoregressive}
\end{figure}

\subsection{Induction Heads}
So far, we have demonstrated length generalization of spectral filtering on linear systems: when trained with a shorter context length of $T^q$ it is able to compete with methods that have access to the full context $T$ (even on marginally-stable systems that can have arbitrarily large effective memory lengths). This length generalization property is most crucial in deep learning applications, in which multi-layer models are stacked (with added nonlinearities) to solve non-LDS sequence prediction tasks.

As an empirical proof-of-concept to demonstrate that STU's length generalization capability extends to this regime, we evaluate it on the induction heads synthetic sequence modeling task, which is commonplace in the language modeling literature (see \cite{gu2023mamba})
and was experimentally shown in \cite{liu2024flash} to be efficiently solved by a two-layer STU. In the induction heads task, the model is required to recall one token (sampled uniformly from a vocabulary) immediately after a special \verb|flag| token; the rest of the sequence consists of the same special \verb|blank| token, which the model should learn to ignore. 

The STU architecture we use is composed of an embedding layer, two "tensordot" STU layers with MLPs and ReLU nonlinearities, and an output projection layer (the same as in \cite{liu2024flash}) with filters of length $T = 256$.

Following prior STU architecture implementations we use \textbf{no autoregressive components}, and so any length generalization observed here comes directly from the filtering mechanism itself. We train these models until convergence with a tuned Adam optimizer and various context lengths $T^q$. The vocabulary size is set to 4.

\begin{figure}[ht]
    \centering
    \includegraphics[width=0.65\linewidth]{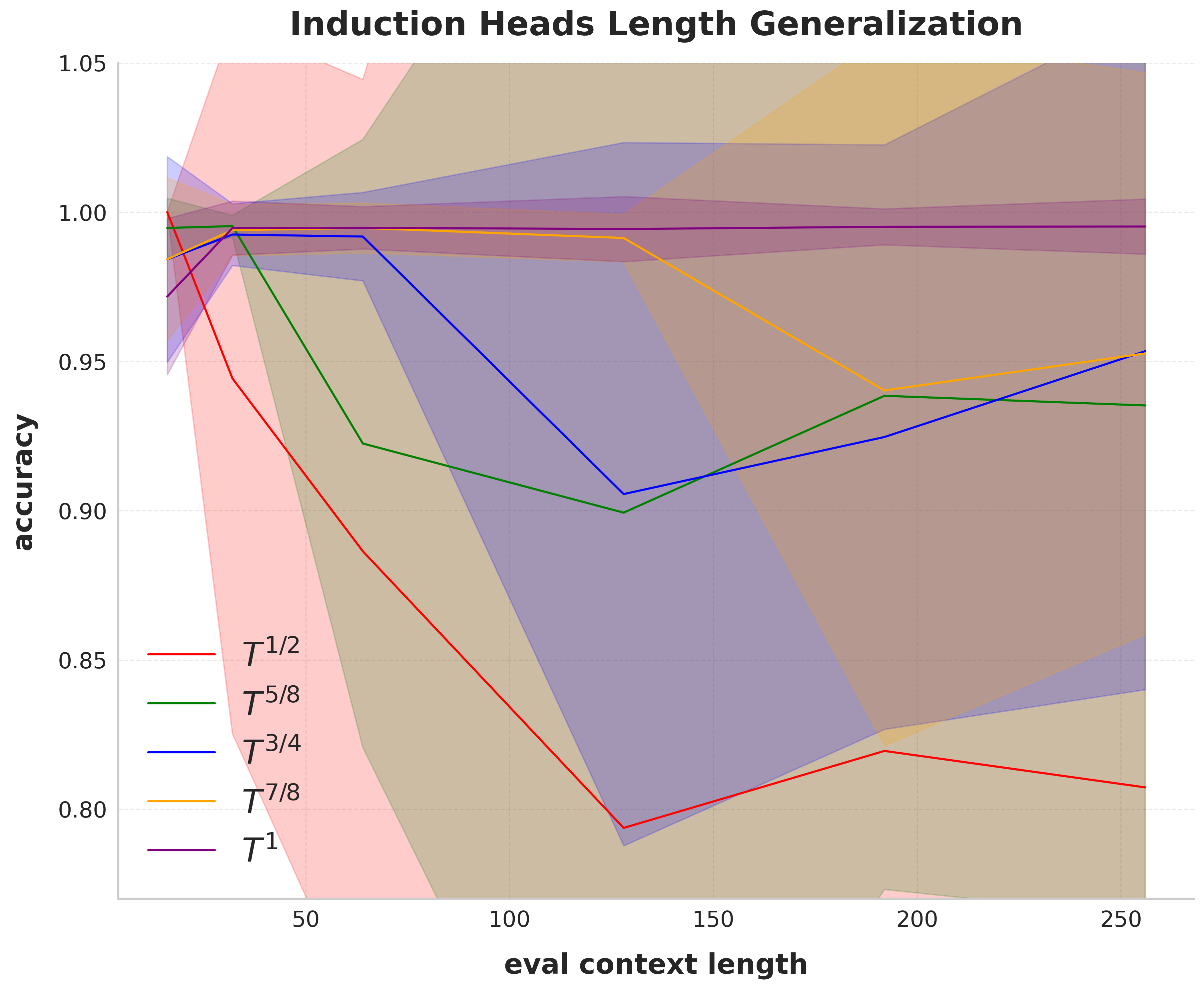}
    \caption{Accuracies for STU models trained on an induction heads task of length $T^q$ and evaluated on sequence lengths increasing up to $T$, averaged over random seeds. Models occasionally generalize all the way up to length $T$, as indicated by the large variance of evaluation accuracies. }
    \label{fig:inductionheads_anniesweep}
\end{figure}

Accuracies are plotted in Figure~\ref{fig:inductionheads_anniesweep} for evaluation task lengths increasing up to $T$. As we see, vanilla STU models are able to nontrivially length generalize and occasionally retain good accuracy beyond their training context lengths, though very inconsistently. Importantly, unlike algorithms that achieve length generalization through architectural modification, we simply just train with filters longer than the train context\footnote{For example, the nonlinear selection mechanism of \cite{gu2023mamba} allows for extreme length generalization on induction heads without prior knowledge of the evaluation length, though at a cost to training efficiency, implementation simplicity, and optimization complexity. We reiterate that our goal is not to navigate such a tradeoff by modifying the STU model so that it length generalizes on induction heads, but rather to exhibit a provable length generalization capability of the STU that comes for free from its natural structure.}. As such, this method allows for the convolutional mode during training and inherits all the benefits of STU that are demonstrated in \cite{liu2024flash}.

\subsection{Tensorized STU Models}
On the LDS, tensorization made no difference in terms of length generalization capabilities. Intuitively, however, the tensor representation does have a natural structure that seems conducive to length generalization. The same machinery (the first tensor component) is shared when learning the system's evolution over each chunk of size $\sqrt{T}$, and the second tensor component simply aggregates the responses from these chunks. Crucially, the tensorized model is more expressive than an LDS since it does not require the tensor components to learn the pair $(\alpha, \alpha^{\sqrt{T}})$ but instead allows any $(\alpha, \beta)$ pair. In other words, the tensorized model is free to decouple how it gathers information within the chunks from how it synthesizes information across chunks. This extra expressitivity is unnecessary to succeed on an LDS, but turns out to matter on more complex nonlinear tasks, which we now investigate.

We repeat the experimental setup of the previous subsection, with the architectural difference that we replace the $k$ many filters of length $256$ with $k^2$ many filters of length $256$ formed from tensor combinations with components of length $16$. Note that although we increase the size of our filter bank from $k$ to $k^2$, this does not change the number of convolutions due to the tensordot approximation (see \cite{liu2024flash} for details). 

In addition to induction heads, we also experiment on a copy task \cite{arjovsky2016unitary}. This is a nonlinear sequence-to-sequence prediction task in which a fixed number of tokens are randomly sampled from a vocabulary and placed at the beginning of the input sequence, with \verb|blank| tokens filling the rest of the sequence -- the model must recall (in order) the initial non-blank tokens. We use a vocabulary size of 4 and fill $\frac{2}{3}$ of each input sequence with \verb|blank| tokens.

In Figures \ref{fig:inductionheads_lengthgen} and \ref{fig:copy_lengthgenn} we plot the length generalization results on induction heads and copy, respectively, for both the vanilla STU and the variant with tensorized filters. We see that the tensorized model improves in both cases, with greater benefit at more extreme length generalizations (i.e. at smaller $q$). On the copy task in particular, it seems tensorization allows for near-perfect length generalization which the vanilla STU model cannot accomplish. This makes sense given the nature of the task: if we imagine splitting the filters into chunks of size $\sqrt{T}$, the issue is that tokens to be copied fall on different chunks during training and evaluation. The shared computation and extra expressivity of the tensorized model addresses this gap perfectly, allowing for the solution that is learned on a short instance of the task to be deployed on larger ones. By contrast, even though the vanilla STU model is able to "see" the whole input during evaluation (because it has filter length 256), it is unable to length generalize (or even learn easily at long lengths given full context, as the dotted $T^1$ curve shows).

\begin{figure}[ht]
    \centering
    \begin{minipage}{0.45\textwidth}
    \centering
\includegraphics[width=1.0\linewidth]{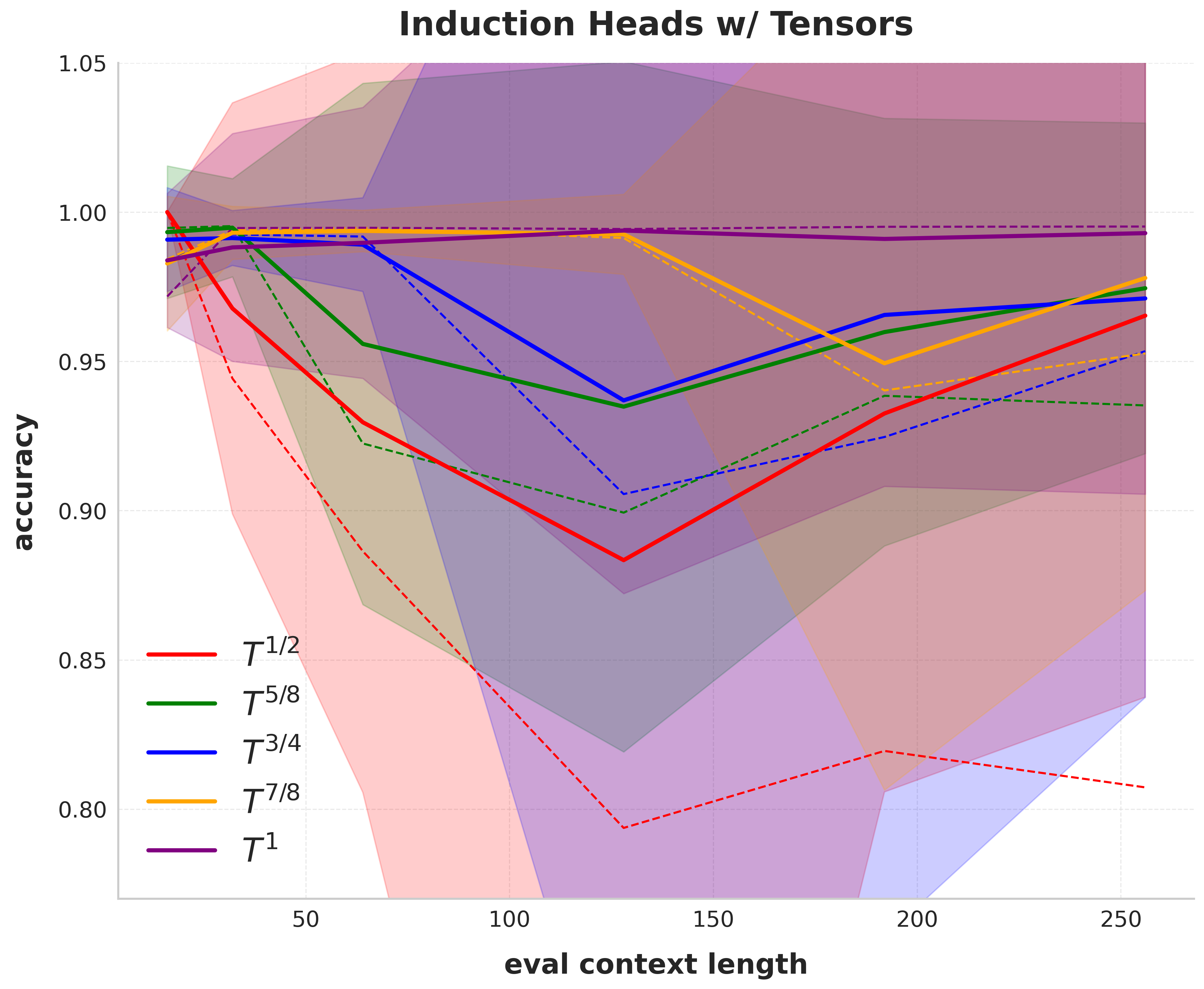}
    \caption{Length generalization on induction heads. Experiments with \textbf{tensorized filters} are bolded, while those without are dotted. Averaged over random seeds. 
    }
    \label{fig:inductionheads_lengthgen}
    \end{minipage}
    \hfill
    \begin{minipage}{0.45\textwidth}
        \centering
        \includegraphics[width=1.0\linewidth]{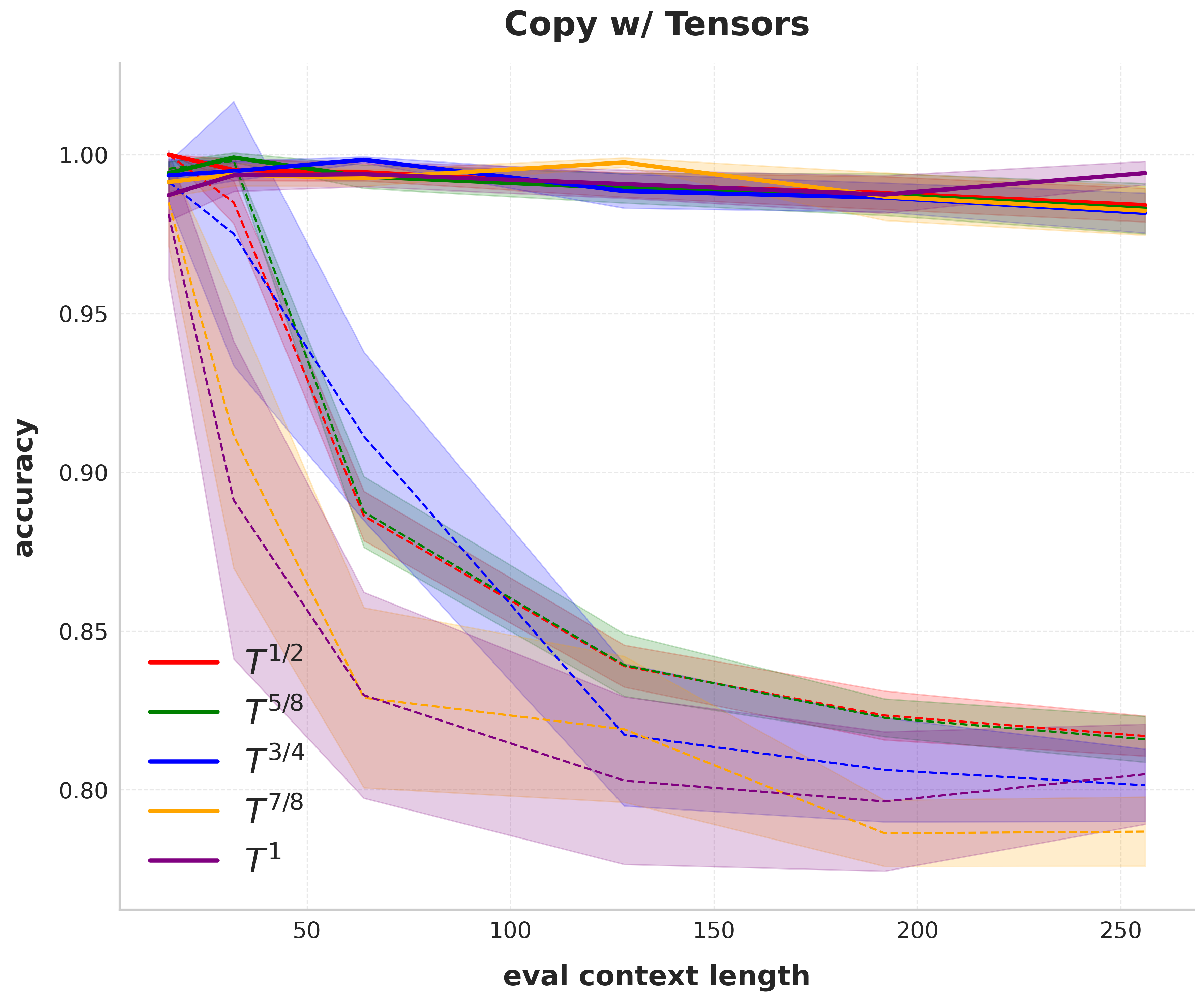}
    \caption{Length generalization on the copy task. Experiments with \textbf{tensorized filters} are bolded, while those without are dotted. Averaged over random seeds.}
    \label{fig:copy_lengthgenn}
    \end{minipage}
\end{figure}

As before, we remark that tensorized filters are not a substantial change to the regular STU architecture, but only a different set of filters to convolve against: such a model still retains favorable training efficiency via convolutions, optimization simplicity, etc.. This empirical result of improved length generalization via tensorization is not yet explained by theory, but in a sense it also seems to "comes for free" with this construction of tensorized spectral filters. We conjecture the improved performance comes from the more expressive architecture, since standard filters are strictly contained in tensorized filters.  We leave a more rigorous explanation of this phenomenon and experiments at larger scales for future work.

\section{Discussion}
In review, we first introduced the notion of \unfairregrettext\ as a way to describe length generalization through the lens of online learning and regret minimization in games. We then proved that the class of spectral filtering predictors naturally enjoys sublinear \unfairregrettext\, thereby exhibiting length generalization without any change to the algorithm. Next, we used experiments on synthetic data generated by an LDS to demonstrate the validity and sharpness of our theory and provided proof-of-concept length generalization experiments on a synthetic nonlinear sequence prediction task. Finally, we made use of the tensor structure inherent in LDS's to design a more expressive tensorized spectral filtering algorithm, which empirically boosted length generalization on our synthetic nonlinear tasks.

Our theoretical results and initial empirical findings reveal that some type of length generalization comes naturally with the spectral filtering algorithm. This adds to the already-exciting list of its useful (and provable) properties, including: robustness to systems with long memory and large hidden dimension, efficient training via convolutions, optimization convexity, and the existence of very parameter-efficient approximations. Given recent successful applications of spectral filtering as the building block for STU models in deep learning \citep{agarwal2023spectral,liu2024flash}, it would be valuable to research how to best take advantage of their length generalization capacity at scale -- we leave this for future work.
\newpage

\bibliographystyle{iclr2025_conference}
\bibliography{main}

\begin{thebibliography}{36}
\providecommand{\natexlab}[1]{#1}
\providecommand{\url}[1]{\texttt{#1}}
\expandafter\ifx\csname urlstyle\endcsname\relax
  \providecommand{\doi}[1]{doi: #1}\else
  \providecommand{\doi}{doi: \begingroup \urlstyle{rm}\Url}\fi

\bibitem[Abbe et~al.(2023)Abbe, Bengio, Lotfi, and Rizk]{abbe2023generalization}
Emmanuel Abbe, Samy Bengio, Aryo Lotfi, and Kevin Rizk.
\newblock Generalization on the unseen, logic reasoning and degree curriculum.
\newblock In \emph{International Conference on Machine Learning}, pp.\  31--60. PMLR, 2023.

\bibitem[Agarwal et~al.(2023)Agarwal, Suo, Chen, and Hazan]{agarwal2023spectral}
Naman Agarwal, Daniel Suo, Xinyi Chen, and Elad Hazan.
\newblock Spectral state space models.
\newblock \emph{arXiv preprint arXiv:2312.06837}, 2023.

\bibitem[Anil et~al.(2022)Anil, Wu, Andreassen, Lewkowycz, Misra, Ramasesh, Slone, Gur-Ari, Dyer, and Neyshabur]{anil2022exploring}
Cem Anil, Yuhuai Wu, Anders Andreassen, Aitor Lewkowycz, Vedant Misra, Vinay Ramasesh, Ambrose Slone, Guy Gur-Ari, Ethan Dyer, and Behnam Neyshabur.
\newblock Exploring length generalization in large language models.
\newblock \emph{Advances in Neural Information Processing Systems}, 35:\penalty0 38546--38556, 2022.

\bibitem[Arjovsky et~al.(2016)Arjovsky, Shah, and Bengio]{arjovsky2016unitary}
Martin Arjovsky, Amar Shah, and Yoshua Bengio.
\newblock Unitary evolution recurrent neural networks.
\newblock In \emph{International conference on machine learning}, pp.\  1120--1128. PMLR, 2016.

\bibitem[Brown et~al.(2020)Brown, Mann, Ryder, Subbiah, Kaplan, Dhariwal, Neelakantan, Shyam, Sastry, Askell, et~al.]{brown2020language}
Tom Brown, Benjamin Mann, Nick Ryder, Melanie Subbiah, Jared~D Kaplan, Prafulla Dhariwal, Arvind Neelakantan, Pranav Shyam, Girish Sastry, Amanda Askell, et~al.
\newblock Language models are few-shot learners.
\newblock \emph{Advances in neural information processing systems}, 33:\penalty0 1877--1901, 2020.

\bibitem[Cesa-Bianchi et~al.(2004)Cesa-Bianchi, Conconi, and Gentile]{cesa2004generalization}
Nicolo Cesa-Bianchi, Alex Conconi, and Claudio Gentile.
\newblock On the generalization ability of on-line learning algorithms.
\newblock \emph{IEEE Transactions on Information Theory}, 50\penalty0 (9):\penalty0 2050--2057, 2004.

\bibitem[Chi et~al.(2022)Chi, Fan, Ramadge, and Rudnicky]{chi2022kerple}
Ta-Chung Chi, Ting-Han Fan, Peter~J Ramadge, and Alexander Rudnicky.
\newblock Kerple: Kernelized relative positional embedding for length extrapolation.
\newblock \emph{Advances in Neural Information Processing Systems}, 35:\penalty0 8386--8399, 2022.

\bibitem[Dai(2019)]{dai2019transformer}
Zihang Dai.
\newblock Transformer-xl: Attentive language models beyond a fixed-length context.
\newblock \emph{arXiv preprint arXiv:1901.02860}, 2019.

\bibitem[Del{\'e}tang et~al.(2022)Del{\'e}tang, Ruoss, Grau-Moya, Genewein, Wenliang, Catt, Cundy, Hutter, Legg, Veness, et~al.]{deletang2022neural}
Gr{\'e}goire Del{\'e}tang, Anian Ruoss, Jordi Grau-Moya, Tim Genewein, Li~Kevin Wenliang, Elliot Catt, Chris Cundy, Marcus Hutter, Shane Legg, Joel Veness, et~al.
\newblock Neural networks and the chomsky hierarchy.
\newblock \emph{arXiv preprint arXiv:2207.02098}, 2022.

\bibitem[Dosovitskiy et~al.(2020)Dosovitskiy, Beyer, Kolesnikov, Weissenborn, Zhai, Unterthiner, Dehghani, Minderer, Heigold, Gelly, et~al.]{dosovitskiy2020image}
Alexey Dosovitskiy, Lucas Beyer, Alexander Kolesnikov, Dirk Weissenborn, Xiaohua Zhai, Thomas Unterthiner, Mostafa Dehghani, Matthias Minderer, Georg Heigold, Sylvain Gelly, et~al.
\newblock An image is worth 16x16 words: Transformers for image recognition at scale.
\newblock \emph{arXiv preprint arXiv:2010.11929}, 2020.

\bibitem[Dziri et~al.(2024)Dziri, Lu, Sclar, Li, Jiang, Lin, Welleck, West, Bhagavatula, Le~Bras, et~al.]{dziri2024faith}
Nouha Dziri, Ximing Lu, Melanie Sclar, Xiang~Lorraine Li, Liwei Jiang, Bill~Yuchen Lin, Sean Welleck, Peter West, Chandra Bhagavatula, Ronan Le~Bras, et~al.
\newblock Faith and fate: Limits of transformers on compositionality.
\newblock \emph{Advances in Neural Information Processing Systems}, 36, 2024.

\bibitem[Gu \& Dao(2023)Gu and Dao]{gu2023mamba}
Albert Gu and Tri Dao.
\newblock Mamba: Linear-time sequence modeling with selective state spaces.
\newblock \emph{arXiv preprint arXiv:2312.00752}, 2023.

\bibitem[Gu et~al.(2020)Gu, Dao, Ermon, Rudra, and R\'{e}]{NEURIPS2020hippo}
Albert Gu, Tri Dao, Stefano Ermon, Atri Rudra, and Christopher R\'{e}.
\newblock Hippo: Recurrent memory with optimal polynomial projections.
\newblock In H.~Larochelle, M.~Ranzato, R.~Hadsell, M.F. Balcan, and H.~Lin (eds.), \emph{Advances in Neural Information Processing Systems}, volume~33, pp.\  1474--1487. Curran Associates, Inc., 2020.

\bibitem[Gu et~al.(2021{\natexlab{a}})Gu, Goel, and R{\'e}]{gu2021efficiently}
Albert Gu, Karan Goel, and Christopher R{\'e}.
\newblock Efficiently modeling long sequences with structured state spaces.
\newblock \emph{arXiv preprint arXiv:2111.00396}, 2021{\natexlab{a}}.

\bibitem[Gu et~al.(2021{\natexlab{b}})Gu, Johnson, Goel, Saab, Dao, Rudra, and R{\'e}]{gu2021combining}
Albert Gu, Isys Johnson, Karan Goel, Khaled Saab, Tri Dao, Atri Rudra, and Christopher R{\'e}.
\newblock Combining recurrent, convolutional, and continuous-time models with linear state space layers.
\newblock \emph{Advances in neural information processing systems}, 34:\penalty0 572--585, 2021{\natexlab{b}}.

\bibitem[Gupta et~al.(2022)Gupta, Gu, and Berant]{gupta2022diagonal}
Ankit Gupta, Albert Gu, and Jonathan Berant.
\newblock Diagonal state spaces are as effective as structured state spaces.
\newblock In Alice~H. Oh, Alekh Agarwal, Danielle Belgrave, and Kyunghyun Cho (eds.), \emph{Advances in Neural Information Processing Systems}, 2022.
\newblock URL \url{https://openreview.net/forum?id=RjS0j6tsSrf}.

\bibitem[Hazan \& Singh(2022)Hazan and Singh]{hazan2022introduction}
Elad Hazan and Karan Singh.
\newblock Introduction to online nonstochastic control.
\newblock \emph{arXiv preprint arXiv:2211.09619}, 2022.

\bibitem[Hazan et~al.(2017{\natexlab{a}})Hazan, Singh, and Zhang]{hazan2017efficient}
Elad Hazan, Karan Singh, and Cyril Zhang.
\newblock Efficient regret minimization in non-convex games.
\newblock In \emph{International Conference on Machine Learning}, pp.\  1433--1441. PMLR, 2017{\natexlab{a}}.

\bibitem[Hazan et~al.(2017{\natexlab{b}})Hazan, Singh, and Zhang]{hazan2017learning}
Elad Hazan, Karan Singh, and Cyril Zhang.
\newblock Learning linear dynamical systems via spectral filtering.
\newblock \emph{Advances in Neural Information Processing Systems}, 30, 2017{\natexlab{b}}.

\bibitem[Hazan et~al.(2018)Hazan, Lee, Singh, Zhang, and Zhang]{hazan2018spectral}
Elad Hazan, Holden Lee, Karan Singh, Cyril Zhang, and Yi~Zhang.
\newblock Spectral filtering for general linear dynamical systems.
\newblock \emph{Advances in Neural Information Processing Systems}, 31, 2018.

\bibitem[Hazan et~al.(2016)]{hazan2016introduction}
Elad Hazan et~al.
\newblock Introduction to online convex optimization.
\newblock \emph{Foundations and Trends{\textregistered} in Optimization}, 2\penalty0 (3-4):\penalty0 157--325, 2016.

\bibitem[Hou et~al.(2024)Hou, Brandfonbrener, Kakade, Jelassi, and Malach]{hou2024universal}
Kaiying Hou, David Brandfonbrener, Sham Kakade, Samy Jelassi, and Eran Malach.
\newblock Universal length generalization with turing programs.
\newblock \emph{arXiv preprint arXiv:2407.03310}, 2024.

\bibitem[Jelassi et~al.(2023)Jelassi, d'Ascoli, Domingo-Enrich, Wu, Li, and Charton]{jelassi2023length}
Samy Jelassi, St{\'e}phane d'Ascoli, Carles Domingo-Enrich, Yuhuai Wu, Yuanzhi Li, and Fran{\c{c}}ois Charton.
\newblock Length generalization in arithmetic transformers.
\newblock \emph{arXiv preprint arXiv:2306.15400}, 2023.

\bibitem[Jumper et~al.(2021)Jumper, Evans, Pritzel, Green, Figurnov, Ronneberger, Tunyasuvunakool, Bates, {\v{Z}}{\'\i}dek, Potapenko, et~al.]{jumper2021highly}
John Jumper, Richard Evans, Alexander Pritzel, Tim Green, Michael Figurnov, Olaf Ronneberger, Kathryn Tunyasuvunakool, Russ Bates, Augustin {\v{Z}}{\'\i}dek, Anna Potapenko, et~al.
\newblock Highly accurate protein structure prediction with alphafold.
\newblock \emph{Nature}, 596\penalty0 (7873):\penalty0 583--589, 2021.

\bibitem[Kazemnejad et~al.(2024)Kazemnejad, Padhi, Natesan~Ramamurthy, Das, and Reddy]{kazemnejad2024impact}
Amirhossein Kazemnejad, Inkit Padhi, Karthikeyan Natesan~Ramamurthy, Payel Das, and Siva Reddy.
\newblock The impact of positional encoding on length generalization in transformers.
\newblock \emph{Advances in Neural Information Processing Systems}, 36, 2024.

\bibitem[Li et~al.(2023)Li, You, Guruganesh, Ainslie, Ontanon, Zaheer, Sanghai, Yang, Kumar, and Bhojanapalli]{li2023functional}
Shanda Li, Chong You, Guru Guruganesh, Joshua Ainslie, Santiago Ontanon, Manzil Zaheer, Sumit Sanghai, Yiming Yang, Sanjiv Kumar, and Srinadh Bhojanapalli.
\newblock Functional interpolation for relative positions improves long context transformers.
\newblock \emph{arXiv preprint arXiv:2310.04418}, 2023.

\bibitem[Liu et~al.(2024)Liu, Nguyen, Devre, Dogariu, Majumdar, and Hazan]{liu2024flash}
Y~Isabel Liu, Windsor Nguyen, Yagiz Devre, Evan Dogariu, Anirudha Majumdar, and Elad Hazan.
\newblock Flash stu: Fast spectral transform units.
\newblock \emph{arXiv preprint arXiv:2409.10489}, 2024.

\bibitem[Orvieto et~al.(2023)Orvieto, Smith, Gu, Fernando, Gulcehre, Pascanu, and De]{orvieto2023resurrecting}
Antonio Orvieto, Samuel~L Smith, Albert Gu, Anushan Fernando, Caglar Gulcehre, Razvan Pascanu, and Soham De.
\newblock Resurrecting recurrent neural networks for long sequences.
\newblock \emph{arXiv preprint arXiv:2303.06349}, 2023.

\bibitem[Press et~al.(2021)Press, Smith, and Lewis]{press2021train}
Ofir Press, Noah~A Smith, and Mike Lewis.
\newblock Train short, test long: Attention with linear biases enables input length extrapolation.
\newblock \emph{arXiv preprint arXiv:2108.12409}, 2021.

\bibitem[Press et~al.(2022)Press, Smith, and Lewis]{press2022trainshorttestlong}
Ofir Press, Noah~A. Smith, and Mike Lewis.
\newblock Train short, test long: Attention with linear biases enables input length extrapolation, 2022.
\newblock URL \url{https://arxiv.org/abs/2108.12409}.

\bibitem[Shen et~al.(2023)Shen, Bubeck, Eldan, Lee, Li, and Zhang]{shen2023positional}
Ruoqi Shen, S{\'e}bastien Bubeck, Ronen Eldan, Yin~Tat Lee, Yuanzhi Li, and Yi~Zhang.
\newblock Positional description matters for transformers arithmetic.
\newblock \emph{arXiv preprint arXiv:2311.14737}, 2023.

\bibitem[Smith et~al.(2023)Smith, Warrington, and Linderman]{smith2023simplified}
Jimmy~T.H. Smith, Andrew Warrington, and Scott Linderman.
\newblock Simplified state space layers for sequence modeling.
\newblock In \emph{The Eleventh International Conference on Learning Representations}, 2023.

\bibitem[Tay et~al.(2022)Tay, Dehghani, Bahri, and Metzler]{10.1145/3530811}
Yi~Tay, Mostafa Dehghani, Dara Bahri, and Donald Metzler.
\newblock Efficient transformers: A survey.
\newblock \emph{ACM Comput. Surv.}, 55\penalty0 (6), dec 2022.
\newblock ISSN 0360-0300.
\newblock \doi{10.1145/3530811}.
\newblock URL \url{https://doi.org/10.1145/3530811}.

\bibitem[Vaswani et~al.(2017)Vaswani, Shazeer, Parmar, Uszkoreit, Jones, Gomez, Kaiser, and Polosukhin]{vaswani2017attention}
Ashish Vaswani, Noam Shazeer, Niki Parmar, Jakob Uszkoreit, Llion Jones, Aidan~N Gomez, {\L}ukasz Kaiser, and Illia Polosukhin.
\newblock Attention is all you need.
\newblock \emph{Advances in neural information processing systems}, 30, 2017.

\bibitem[Zhou et~al.(2023)Zhou, Bradley, Littwin, Razin, Saremi, Susskind, Bengio, and Nakkiran]{zhou2023algorithms}
Hattie Zhou, Arwen Bradley, Etai Littwin, Noam Razin, Omid Saremi, Josh Susskind, Samy Bengio, and Preetum Nakkiran.
\newblock What algorithms can transformers learn? a study in length generalization.
\newblock \emph{arXiv preprint arXiv:2310.16028}, 2023.

\bibitem[Zhou et~al.(2024)Zhou, Alon, Chen, Wang, Agarwal, and Zhou]{zhou2024transformers}
Yongchao Zhou, Uri Alon, Xinyun Chen, Xuezhi Wang, Rishabh Agarwal, and Denny Zhou.
\newblock Transformers can achieve length generalization but not robustly.
\newblock \emph{arXiv preprint arXiv:2402.09371}, 2024.

\end{thebibliography}

\newpage
\appendix
\section{General Length Generalization}
\label{appendix:general}
In this section we introduce a general algorithm which we will use to prove length generalization for our specific Algorithms Algorithm~\ref{alg:ogd_short_length}, Algorithm~\ref{alg:sf_two}, and Algorithm~\ref{alg:tensor_sf}. 
\begin{algorithm}[h]
\caption{General Spectral Filtering} \label{alg:general_sf}
\begin{algorithmic}[1]
\STATE {\bf Input:} $k > 0, L >0$, $r > 0$, functions $p_t(\cdot)$, vectors $v_{1:k}$. Initialize $M_i=0$ for $i \in [k]$. 
\FOR {$t = 1,2,...,T$}
\STATE Compute and predict  
\begin{equation*} 
\hat{y}_t = p_t(y_{t-1:1})  +   \sum_{i=1}^{k} M_{i}  u_{t-1:t-L}v_i.
\end{equation*}
\STATE Observe $y_t$, denote $\ell_t(M^t, L) = \|\hat{y_t} - y_t\|^2$ and update and project update and project onto the low Frobenius norm ball
$$ \hat{M}^{t+1} \leftarrow M^{t} - \eta_t \nabla_{M} \ell_t( M^t ) $$
$$ M_{t+1} = \proj_{\K} \left( \hat{M}_{t+1} \right), $$
where $\K_{\mtrueconst} = \left\{ M  \textrm{ s.t. } \norm{M_i} \leq \mtrueconst  \right\}$.
\ENDFOR 
\end{algorithmic}
\end{algorithm}

Our workhorse theorem is presented below. We will use this theorem to prove length generalization for our special cases in the following sections. 
\begin{theorem}
\label{thm:general_length_gen}
     Suppose $y_{1:t}$ evolves as a noiseless $(A,B,C,I)$-LDS and the input $u_{(t-1):0}$ is such that $ \sum_{t = 0}^{T-1} (T-t) u_t u_t^{\top} \succeq (2 \norm{C} \norm{B}/\sqrt{T})I$. Let $k$, $L$, $r$, $\{ v_i \}_{i = 1}^k$, $p_t(\cdot)$, and $\ell_t(\cdot)$ all be as defined in Algorithm~\ref{alg:general_sf}. Suppose $\{ v_i \}_{i = 1}^k$ is orthonormal with $\norm{v_i}_1 \leq \log^p(T)$. Suppose that $p_t(\cdot)$ is such that there exists some function $h(\cdot)$, constant $\ell > 0$, and some $\mtrue \in \K_{\mtrueconst}$ such that
\begin{equation*}
    y_t - p_t(y_{t-1:1})  = \sum_{i = 1}^T \mtrue_i u_{t-1:0} v_i = \sum_{i = 1}^{\ell_1} \mtrue_i u_{t-i} + \sum_{i = 1}^{t-\ell_1} C A^i h(A) B u_{t - \ell_1 - i}, 
\end{equation*}
where
\begin{equation*}
    \norm{\sum_{i = k +1}^T \mtrue_{i} u_{t-1:t-L} v_i} \leq \norm{C} \norm{B}/T,
\end{equation*}
and
\begin{equation*}
    \max_{\alpha(A)} \left \{ h(\alpha) \alpha^{L- \ell_1 - 1} (1 - \alpha^{T - L + 1}) (1 -\alpha)^{-1} \right \} \leq \frac{1}{T^{1/4}}.
\end{equation*} 
Then if $M^t$ are the iterates of Algorithm~\ref{alg:general_sf} and $T \geq (4 k  \log^p(T)/ \norm{C} \norm{B} )^{4}$,
\begin{equation*}
    \sum_{t = 1}^T \ell_t(M^t, L) - \min_{M^* \in \K_{r}} \sum_{t=1}^T \ell_t(M^* , T) \leq \left( 12 k^{3/2} \mtrueconst^2 \log^p(T) + 8 \norm{C}^2 \norm{B}^2 \right) \sqrt{T}.
\end{equation*}
\end{theorem}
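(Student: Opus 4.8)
The plan is to realize the three-step reduction of Section~\ref{subsection:high_level_proof}. Write $\mathcal{R}_T \defeq \min_{M^*\in\K_r}\sum_{t=1}^T\ell_t(M^*,T)$ for the benchmark term; since $\ell_t\ge 0$ we have $\mathcal{R}_T\ge 0$, so it suffices to bound $\sum_t\ell_t(M^t,L)$ from above. I would split this as
\[
\sum_{t=1}^T\ell_t(M^t,L)\;\le\;\min_{M\in\K_r}\sum_{t=1}^T\ell_t(M,L)\;+\;(\text{OGD regret})\;\le\;\sum_{t=1}^T\ell_t(\bar M,L)\;+\;(\text{OGD regret}),
\]
where $\bar M\defeq(\mtrue_1,\dots,\mtrue_k)$ is the truncation of the assumed ``true'' coefficient sequence to its first $k$ blocks; by hypothesis $\norm{\mtrue_i}\le r$ for every $i$, so $\bar M\in\K_r$ and the second inequality is immediate. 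Everything thus reduces to (i) a standard online-gradient-descent regret bound for the convex quadratic losses $\ell_t(\cdot,L)$, and (ii) showing that $\bar M$ — the best coefficients one can hope to recover — still predicts well under the restricted window $L$, i.e. $\sum_t\ell_t(\bar M,L)=O(\norm{C}^2\norm{B}^2\sqrt T)$. Step (ii) is the substantive one.

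For step (i) I would invoke the textbook OGD bound with step sizes $\eta_t\propto 1/\sqrt t$, giving regret $O(GD\sqrt T)$ where $D$ is the Frobenius diameter of $\K_r$ and $G$ a uniform gradient bound. Here $D=O(r\sqrt k)$, and since $\nabla_{M_i}\ell_t(M,L)=2(\hat y_t-y_t)(u_{t-1:t-L}v_i)\tr$ with $\norm{u_{t-1:t-L}v_i}\le\norm{v_i}_1\le\log^p(T)$ (using $\norm{u_t}\le 1$), one gets $\norm{\nabla_M\ell_t(M,L)}_F=O(\sqrt k\,\log^p(T)\,\norm{\hat y_t-y_t})$. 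It then remains to bound $\norm{\hat y_t-y_t}$ over $\K_r$; writing $\hat y_t-y_t=\sum_{i\le k}M_i u_{t-1:t-L}v_i-(y_t-p_t(y_{t-1:1}))$, the first sum is $O(kr\log^p T)$, and $\norm{y_t-p_t(y_{t-1:1})}$ is bounded because, by the assumed representation, it is a head term ($\le\ell_1 r$) plus an $h(A)$-weighted convolution of the inputs, which is exactly the increment the autoregressive predictor $p_t$ is designed to keep bounded on a marginally stable LDS. Collecting these estimates yields the $O(k^{3/2}r^2\log^p(T)\sqrt T)$ contribution; I would not track the exact constants or dimension factors at this stage.

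For step (ii) — the heart of the proof — I would subtract the two representations of $y_t-p_t(y_{t-1:1})$ against $\sum_{i\le k}\bar M_i u_{t-1:t-L}v_i$. Using the spectral-filter form $y_t-p_t=\sum_{i=1}^T\mtrue_i u_{t-1:0}v_i$ together with $\bar M_i=\mtrue_i$ for $i\le k$, the residual splits as
\[
\underbrace{\sum_{i=1}^T\mtrue_i\bigl(u_{t-1:0}-u_{t-1:t-L}\bigr)v_i}_{(\mathrm{I})}\;+\;\underbrace{\sum_{i=k+1}^T\mtrue_i u_{t-1:t-L}v_i}_{(\mathrm{II})}.
\]
Term $(\mathrm{II})$ is $\le\norm{C}\norm{B}/T$ directly by the second hypothesis. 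For term $(\mathrm{I})$ I would switch to the LDS-convolution form $y_t-p_t=\sum_{i\le\ell_1}\mtrue_i u_{t-i}+\sum_i CA^i h(A)B u_{t-\ell_1-i}$: since the $\ell_1$ head inputs $u_{t-1},\dots,u_{t-\ell_1}$ all lie inside a window of length $L>\ell_1$, they do not contribute to $u_{t-1:0}-u_{t-1:t-L}$, so $(\mathrm{I})$ is precisely the tail of the convolution acting on inputs older than $L$, namely $\sum_{i\ge L-\ell_1+1}CA^i h(A)B u_{t-\ell_1-i}$. Passing to the (real, since $A\succeq 0$) eigenbasis of $A$ and bounding each eigencomponent by $\norm{B}\cdot h(\alpha)\,\alpha^{L-\ell_1+1}(1-\alpha^{t-L})(1-\alpha)^{-1}\le\norm{B}\cdot h(\alpha)\,\alpha^{L-\ell_1-1}(1-\alpha^{T-L+1})(1-\alpha)^{-1}$ for $t\le T$, the third hypothesis gives $\norm{(\mathrm{I})}=\tilde O(\norm{C}\norm{B}/T^{1/4})$. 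Hence $\ell_t(\bar M,L)=\norm{(\mathrm{I})+(\mathrm{II})}^2=O(\norm{C}^2\norm{B}^2/\sqrt T)$ per step, and summing over $t\le T$ gives $\sum_t\ell_t(\bar M,L)=O(\norm{C}^2\norm{B}^2\sqrt T)$. Combining with step (i) and $\mathcal{R}_T\ge 0$ completes the bound.

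The main obstacle is step (ii), and within it the bookkeeping for term $(\mathrm{I})$: one must line up the spectral-filter representation (where the $\mtrue_i$ live) with the LDS-convolution representation (where the decay structure $CA^i h(A)B$ and the hypothesis on $h$ live), check that the head/tail split is respected by the context truncation, and control the $h(A)$-weighted geometric series over inputs beyond the window in exactly the form assumed in the third hypothesis — while keeping the resulting constant from blowing up. The well-conditioning assumption $\sum_t(T-t)u_t u_t\tr\succeq(2\norm{C}\norm{B}/\sqrt T)I$ is what legitimizes the existence of an $\mtrue\in\K_r$ with these properties (and its approximate recovery by the full-context minimizer); for this abstract workhorse theorem it is folded into the hypotheses, and it will be discharged when the theorem is instantiated for Algorithms~\ref{alg:ogd_short_length},~\ref{alg:sf_two}, and~\ref{alg:tensor_sf}.
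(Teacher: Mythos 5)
Your proposal is correct, and it assembles the pieces along a genuinely different (and leaner) path than the paper. The paper chains four inequalities through the full-context minimizer $M_T^*$: OGD regret for $\ell_t(\cdot,L)$ (Lemma~\ref{lemma:ogd_regret_general}), the trivial comparison $\min_M\sum_t\ell_t(M,L)\le\sum_t\ell_t(M_T^*,L)$, and then the substantive claim that $M_T^*$ itself length-generalizes, $|\sum_t\ell_t(M_T^*,L)-\sum_t\ell_t(M_T^*,T)|\le 8\norm{C}^2\norm{B}^2\sqrt T$ (Lemma~\ref{lemma:length_generalization_general}). That last step forces the paper to prove that $M_T^*$ is close to the truncation of $\mtrue$, which is exactly where the strong-convexity ``minimization is recovery'' argument (Lemma~\ref{lemma:minimizing_loss_is_recovery_general}) and hence the well-conditioning hypothesis $\sum_t(T-t)u_tu_t\tr\succeq(2\norm{C}\norm{B}/\sqrt T)I$ enter. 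You instead observe that the benchmark $\min_{M^*\in\K_r}\sum_t\ell_t(M^*,T)$ is nonnegative and may simply be discarded, after which it suffices to compare the OGD iterates against the explicit comparator $\bar M=(\mtrue_1,\dots,\mtrue_k)\in\K_r$ and bound $\sum_t\ell_t(\bar M,L)$ directly; your step (ii) is precisely the paper's Lemma~\ref{lemma:general_ltM_L} evaluated at $\delta=0$ (the same $(\mathrm I)+(\mathrm{II})$ split, the same geometric-tail estimate via the hypothesis on $h(\alpha)\alpha^{L-\ell_1-1}(1-\alpha^{T-L+1})(1-\alpha)^{-1}$), yielding $4\norm{C}^2\norm{B}^2\sqrt T$, which sits inside the stated $8\norm{C}^2\norm{B}^2\sqrt T$. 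What the paper's longer route buys is the standalone statement that the full-context optimum itself achieves small short-context loss (Eq.~\ref{eqn:length_gen_MT}), which is the conceptual message emphasized in Section~\ref{subsection:high_level_proof}; what your route buys is the elimination of the recovery lemma and, for this abstract theorem, of the well-conditioning assumption altogether. The only loose end on your side is the uniform gradient bound needed to pin down the exact constant $12k^{3/2}r^2\log^p(T)$, which you flag but do not carry out; it is routine and matches the computation in Lemma~\ref{lemma:ogd_regret_general}.
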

The proof of this theorem requires several technical lemmas which we present and prove in the subsequent subsections. In Lemma~\ref{lemma:ogd_regret_general} we essentially prove the standard result showing that Online Gradient Descent implemented in Algorithm~\ref{alg:general_sf} achieves $O(\sqrt{T})$ regret. In Lemma~\ref{lemma:length_generalization_general} we prove the more nuanced result which shows that the optimal $M$ which minimizes the loss on the full $T$-length context achieves length generalization in the sense that it achieves small loss even when only allowed to use context length $L$. Combining these two lemmas gives the proof of Theorem~\ref{thm:general_length_gen}.
\begin{proof}[Proof of Theorem~\ref{thm:general_length_gen}] Let
\begin{equation*}
    M_T^* \defeq \min_{M^* \in \K_{r}} \sum_{t=1}^T \ell_t(M^* , T)
\end{equation*}
and observe that
    \begin{align}
    \label{eq:silly}
        \min_{M^* \in \K_{r}} \sum_{t=1}^T \ell_t(M^* , L) \leq \sum_{t=1}^T \ell_t(M^*_{T} , L).
    \end{align}
    Combining this with Lemma~\ref{lemma:ogd_regret_general} and Lemma~\ref{lemma:length_generalization_general}, we conclude
    \begin{align*}
        \sum_{t = 1}^T \ell_t(M_t, L) & \leq \min_{M^* \in \K_{r}} \sum_{t=1}^T \ell_t(M^* , L) + 12 k^{3/2} \mtrueconst^2 \log^p(T)  \sqrt{T} & \text{OGD Regret Lemma~\ref{lemma:ogd_regret_general}} \\
        & \leq \sum_{t=1}^T \ell_t(M^*_{T} , L) + 12 k^{3/2} \mtrueconst^2 \log^p(T)  \sqrt{T} & \text{Eq.~\ref{eq:silly}} \\
         & \leq \sum_{t=1}^T \ell_t(M^*_{T} , T) + ( 12 k^{3/2} \mtrueconst^2 \log^p(T)  + 8 \norm{C}^2 \norm{B}^2 )\sqrt{T} & \text{Length Generalization Lemma~\ref{lemma:length_generalization_general}} \\
       & = \min_{M^* \in \K_{\mtrueconst}} \sum_{t=1}^T \ell_t(M , T) + ( 12 k^{3/2} \mtrueconst^2 \log^p(T)  + 8 \norm{C}^2 \norm{B}^2 )\sqrt{T}. & \text{Definition of $M_T^*$}
    \end{align*}
\end{proof}

\subsection{OGD Regret for Generalized Spectral Filtering}
\label{appendix:length_gen_ogd}

\begin{lemma}
\label{lemma:ogd_regret_general}
Suppose the input $u_{1:t}$ satisfies $\norm{u_t}_{2} \leq 1$. Suppose the true output $y_t$ evolves such that for some polynomial $p_t(y_{t-1:1})$ there exists some $\mtrue \in \K_{\mtrueconst}$
\begin{equation*}
    y_t   = p_t(y_{t-1:1}) + \sum_{i = 1}^T \mtrue_i u_{t-1:0} v_i,
\end{equation*}
and for
\begin{equation*}
    E_{m,T} \defeq \sum_{i = k+1}^T \mtrue_i u_{t-1:0}v_i,
\end{equation*}
we have $\norm{E_{m,T}} \leq 1$. Further suppose $v_1, \dots, v_k$ satisfy $\norm{v_i}_1 \leq c_i \log^p(T)$. Let 
\begin{equation*}
    \ell_t(M, L) \defeq \norm{y_t - p_t(y_{t-1:1}) - \sum_{i = 1}^k M_i u_{t-1:t-L} v_i}^2,
\end{equation*}
Then if $M^t$ are the iterates of Algorithm~\ref{alg:general_sf}
\begin{equation*}
    \sum_{t = 1}^T \ell_t(M^t, L) - \min_{M^* \in \K_{r}} \sum_{t=1}^T \ell_t(M^* , L) \leq  12 k^{3/2} \mtrueconst^2 \log^p(T) \sqrt{T}.
\end{equation*}
\end{lemma}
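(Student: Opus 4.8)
\textbf{Proof proposal for Lemma~\ref{lemma:ogd_regret_general}.}

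The plan is to invoke the standard regret bound for online projected gradient descent on the sequence of convex losses $f_t(M) = \ell_t(M,L)$ over the convex set $\K_r$, and then carefully estimate the two problem-specific quantities it depends on: the diameter $D$ of $\K_r$ (in Frobenius norm, which governs the scale of the iterates) and a uniform bound $G$ on the gradients $\nabla_M \ell_t(M^t,L)$. The textbook guarantee (e.g.\ \cite{hazan2016introduction}) gives $\sum_t f_t(M^t) - \min_{M^\ast \in \K_r} \sum_t f_t(M^\ast) \le \tfrac{3}{2} G D \sqrt{T}$ with the step size $\eta_t = D/(G\sqrt{t})$; so the whole lemma reduces to showing $G D \le 8 k^{3/2} r^2 \log^p(T)$ (up to the stated constant).

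First I would bound the diameter: since $\K_r = \{M : \|M_i\| \le r \text{ for all } i\in[k]\}$, any $M \in \K_r$ has $\|M\|_F \le \sqrt{k}\, r$ (summing the $k$ blocks, using $\|M_i\|_F \le \|M_i\|_{\mathrm{op}} \cdot$ rank factor — or, depending on the paper's convention for $\|M_i\|$, directly $\|M_i\| \le r$), so $D \le 2\sqrt{k}\, r$. Next, and this is the heart of the computation, I would bound the gradient. Writing the residual $\Delta_t \defeq y_t - p_t(y_{t-1:1}) - \sum_{i=1}^k M_i^t u_{t-1:t-L} v_i = \hat y_t - y_t$, the gradient with respect to the block $M_j$ is $2\Delta_t (u_{t-1:t-L} v_j)\tr$, so $\|\nabla_{M_j}\ell_t\| \le 2\|\Delta_t\|\cdot\|u_{t-1:t-L}v_j\|$. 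The vector $u_{t-1:t-L}v_j$ is a weighted combination of the inputs with weights the entries of $v_j$, so $\|u_{t-1:t-L}v_j\| \le \|v_j\|_1 \max_s \|u_s\| \le c_j \log^p(T)$ using $\|u_t\|_2 \le 1$. To bound $\|\Delta_t\|$ I would use the decomposition hypothesis: since $y_t - p_t = \sum_{i=1}^T \mtrue_i u_{t-1:0} v_i$ and $\mtrue \in \K_r$, the ``in-context, first-$k$-filter'' part of $\Delta_t$ is a difference of two terms each of operator-norm-controlled size, giving a bound like $\sum_{i=1}^k (\|M_i^t\| + \|\mtrue_i\|)\|u_{t-1:t-L}v_i\| + \|E_{m,T}\| \le 2r \sum_i c_i \log^p(T) + 1 = O(k r \log^p(T))$ — here I'm folding in $\|E_{m,T}\|\le 1$ and (implicitly) that truncating the context only drops tail terms that are absorbed into this estimate. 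Hence $G = \max_t \|\nabla_M \ell_t\|_F \le 2\sqrt{k}\cdot 2\|\Delta_t\|\cdot\max_j c_j\log^p(T) = O(k^{3/2} r \log^p(T))$, and multiplying by $D = O(\sqrt k\, r)$ and collecting constants gives the claimed $O(k^{3/2} r^2 \log^p(T)\sqrt T)$, with the explicit constant $12$ obtained by tracking the $\tfrac32$ from the OGD bound together with the $2$'s above.

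The main obstacle I anticipate is getting the $\|\Delta_t\|$ bound genuinely clean and uniform over $t$ and over the iterates $M^t$: one has to be careful that the comparator-based decomposition of $y_t - p_t$ is stated with the \emph{full} context $u_{t-1:0}$, whereas the loss and gradient use the \emph{truncated} context $u_{t-1:t-L}$, so the bound on $\|\Delta_t\|$ is not literally ``residual against $\mtrue$'' but rather a triangle-inequality estimate in which the truncation error for the first $k$ filters is re-expressed in terms of $\|v_i\|_1$ and $\|u_s\|_2$; the tail $\sum_{i>k}$ is handled by the $\|E_{m,T}\|\le 1$ hypothesis. The rest is bookkeeping of constants. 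I would also double-check the exact normalization convention for $\|M_i\|$ versus $\|M_i\|_F$ used in defining $\K_r$ so that the $\sqrt k$ factors land correctly — this is the kind of place where an off-by-$\sqrt k$ could sneak in, and the stated exponent $k^{3/2}$ tells us exactly how the $\sqrt k$'s must be apportioned (one from the diameter, one from summing $k$ gradient blocks in Frobenius norm, one from $\|\Delta_t\|$ containing a sum of $k$ terms — wait, that would be $k^{5/2}$; so in fact the intended accounting must be that $\|\Delta_t\| = O(\sqrt k\, r\log^p T)$ via Cauchy–Schwarz on the $k$-term sum rather than a triangle inequality, which I would make sure to use).
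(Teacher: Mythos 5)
Your proposal follows essentially the same route as the paper: invoke the standard OGD regret bound $\tfrac{3}{2}GD\sqrt{T}$, take $D \le 2\sqrt{k}\,r$ from the definition of $\K_r$, and bound the gradient by writing $\nabla_{M_j}\ell_t = 2\Delta_t(u_{t-1:t-L}v_j)\tr$ and controlling $\|\Delta_t\|$ via the decomposition of $y_t - p_t$ together with $\|E_{m,T}\|\le 1$ and $\|v_i\|_1 \le \log^p(T)$. The one place you stall --- reconciling your careful accounting (which gives $k^2$, with a $\sqrt{k}$ from $D$, a $\sqrt{k}$ from assembling the $k$ gradient blocks into a Frobenius norm, and a full $k$ from the sum inside $\Delta_t$) with the stated $k^{3/2}$ --- is not a defect of your argument: the paper's own proof bounds only the per-block gradient, obtaining $\|\nabla_{M_j}\ell_t\|\le 4kr\log^p(T)$, and then substitutes that quantity directly as $G$ without the extra $\sqrt{k}$ (it likewise collapses the $\|v_i\|_1\cdot\|v_j\|_1$ product to a single $\log^p(T)$). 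Your proposed rescue via Cauchy--Schwarz does not actually recover a $\sqrt{k}$, since the summands $M_i u_{t-1:t-L}v_i$ are not orthogonal and Cauchy--Schwarz on $\sum_i \|M_i\|\,\|v_i\|_1$ still yields $kr\log^p(T)$; so you should either accept the slightly larger $k^2\log^{2p}(T)$ constant (harmless downstream, since $k$ is polylogarithmic in $T$) or adopt the paper's per-block convention for $G$.
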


\begin{proof}[Proof of Lemma~\ref{lemma:ogd_regret_general}]
\label{proof:proof_ogd_general}
This proof is a near copy of the proof in \cite{hazan2017learning}, the difference is that we derive several equations that we will use later and we handle the varying context length. 

Let $G = \max_{t \in [T]} \norm{\nabla_M \ell_t(M_t, L)}$ and let $D = \max_{M_1, M_2 \in \K_{r}} \norm{M_1 - M_2}$. By Theorem A.1 from \cite{hazan2022introduction}, 
    \begin{equation*}
        \sum_{t = 1}^T \ell_t(M^t, L) -  \min_{M^* \in \K_{r}}  \sum_{t = 1}^T \ell_t(M^*, L) \leq \frac{3}{2} GD \sqrt{T}.
    \end{equation*}
    Therefore it remains to bound $G$ and $D$.

First we bound $D$. By definition of $\K_{\mtrueconst}$, we have that for any $M \in \K_{\mtrueconst}$, 
\begin{equation*}
    \norm{M_i} \leq \mtrueconst.
\end{equation*}
Therefore, we also have that
\begin{equation*}
    \norm{M} \leq \sqrt{k} \mtrueconst.
\end{equation*}
Therefore 
\begin{equation*}
    D \defeq \max_{M, M' \in \K_r} \norm{M - M' } \leq 2 \sqrt{k} \mtrueconst.
\end{equation*}
Next we bound the gradient norm $G$. Using the definition of $\K_{\mtrueconst}$, 
    \begin{align*}
        \max_{M \in \K_{\mtrueconst}} \max_{i \in [k]} \norm{M_i} \leq \mtrueconst.
    \end{align*}
    We bound the gradient norm as follows,
    \begin{align*}
        \norm{ \nabla_{M_j} \ell_t(M, L) }  
        & = \norm{ 2 \left( \sum_{i=1}^k \mtrue_i u_{t-1:0} v_i + E_{m,T} - \sum_{i=1}^k M_i u_{t-1:t-L} v_i \right)   \left(  u_{t-1:t-L} v_j \right)^{\top}} \\
        & \leq 2 \left( \sum_{i = 1}^k \norm{\mtrue_i} \norm{u_{t-1:0}}_{\infty} \norm{v_i}_1  + \norm{E_{m,T}} + \sum_{i = 1}^k \norm{M_i} \norm{u_{t-1:t-L}}_{\infty} \norm{v_i}_1 \right) \norm{u_{t:t-L}}_{\infty} \norm{v_j}_1 \\
        & \leq 2 \left( 1 + \norm{E_{m,T}} \right) \sum_{i = 1}^k \max_{M \in \K_{\mtrueconst}} \norm{M_i} \cdot \norm{u_{t-1:0}}_{\infty}^2 \cdot \norm{v_i}_1^2  \\
        & \leq 4 k \mtrueconst \log^p(T). 
    \end{align*}
    Putting everything together we have
    \begin{align*}
        \sum_{t = 1}^T \ell_t(M_t, L) -  \min_{M^* \in \K_{r}}  \sum_{t = 1}^T \ell_t(M^*, L) &  \leq \frac{3}{2} \left( 4 k \mtrueconst \log^p(T) \right) \left( 2 \sqrt{k} \mtrueconst \right) \sqrt{T} \\
        & = 12 k^{3/2} \mtrueconst^2 \log^p(T) \sqrt{T}.
    \end{align*}
\end{proof}

\subsection{Length Generalization on the Best Optimizer in Hindsight}
\begin{lemma}
\label{lemma:length_generalization_general}
Let input $u_{(t-1):0}$, $\{ v_i \}_{i = 1}^k$, $p_t(\cdot)$, and $\ell_t(M, L)$ all be as defined in Algorithm~\ref{alg:general_sf}. Suppose the input $u_{(t-1):0}$ is such that $ \sum_{t = 0}^{T-1} (T-t) u_t u_t^{\top} \succeq (2 \norm{C} \norm{B}/\sqrt{T})I$, $\{ v_i \}_{i = 1}^k$ is orthonormal with $\norm{v_i}_1 \leq \log^p(T)$, and that there exists some $\mtrue$ such that
\begin{equation*}
    y_t - p_t(y_{t-1:1})  = \sum_{i = 1}^T \mtrue_i u_{t-1:0} v_i = \sum_{i = 1}^{\ell_1} \mtrue_i u_{t-i} + \sum_{i = 1}^{t-\ell_1-1} C A^i h(A) B u_{t - \ell_1 - i}, 
\end{equation*}
where
\begin{equation*}
    \norm{\sum_{i = k +1}^T \mtrue_{i} u_{t-1:t-L} v_i} \leq \norm{C} \norm{B}/T,
\end{equation*}
and
\begin{equation*}
    \max_{\alpha(A)} \left \{ h(\alpha) \alpha^{L- \ell_1 - 1} (1 - \alpha^{T - L + 1}) (1 -\alpha)^{-1} \right \} \leq \frac{1}{T^{1/4}}.
\end{equation*}
    Let 
    \begin{equation*}
        M_{T}^* \defeq \argmin_{M \in \K_{r}} \sum_{t = 1}^T \ell_t(M, T).
    \end{equation*}
    Then for $T \geq (4 k  \log^p(T)/ \norm{C} \norm{B} )^{4}$, the loss with context $L$ well approximates the loss with context $T$ on $M_{T}^*$,
    \begin{equation*}
        \abs{ \sum_{t = 1}^T \ell_t(M_{T}^*, L) - \ell_t(M_{T}^*, T) } \leq 8 \norm{C}^2 \norm{B}^2 \sqrt{T}.
    \end{equation*}
\end{lemma}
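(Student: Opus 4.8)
The plan is to bound the per-step difference $|\ell_t(M_T^*, L) - \ell_t(M_T^*, T)|$ and then sum. Write $\hat y_t(M,L) = p_t(y_{t-1:1}) + \sum_{i=1}^k M_i u_{t-1:t-L}v_i$, so that $\hat y_t(M,T) - \hat y_t(M,L) = \sum_{i=1}^k M_i\bigl(u_{t-1:0} - u_{t-1:t-L}\bigr)v_i$ is exactly the contribution of the ``tail'' inputs $u_{t-L-1:0}$. Since $\ell_t(M,L) = \|\hat y_t(M,L) - y_t\|^2$, the elementary identity $|\,\|a\|^2 - \|b\|^2\,| = |\langle a-b, a+b\rangle| \le \|a-b\|(\|a\| + \|b\|)$ applied to $a = \hat y_t(M_T^*,L) - y_t$, $b = \hat y_t(M_T^*,T) - y_t$ reduces everything to controlling (i) the ``tail discrepancy'' $\delta_t := \|\hat y_t(M_T^*,T) - \hat y_t(M_T^*,L)\|$ and (ii) the residuals $\|\hat y_t(M_T^*,L) - y_t\|$ and $\|\hat y_t(M_T^*,T) - y_t\|$.

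The heart of the argument is showing $\delta_t$ is small, and this is where the LDS structure and the three hypotheses enter. The plan is: first use the well-conditioning assumption $\sum_t (T-t) u_t u_t^\top \succeq (2\|C\|\|B\|/\sqrt T) I$ together with the LDS generation of $y_t$ to argue that $M_T^*$, the hindsight minimizer over the full context, essentially recovers the ``true'' coefficient tensor $\mtrue$ (the one appearing in the hypothesis $y_t - p_t(y_{t-1:1}) = \sum_i \mtrue_i u_{t-1:0}v_i$) up to error governed by $1/T$ — this is presumably a separate lemma, but one can also just directly estimate by noting $M_T^*$ achieves loss at most that of $\mtrue$ and inverting the well-conditioned Gram matrix. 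Second, granting that $M_T^* \approx \mtrue$ on the relevant subspace, decompose $\delta_t$ into the part coming from the first $k$ filters (with the tail of $u$) — controlled by the explicit bound $\max_{\alpha(A)} \{h(\alpha)\alpha^{L-\ell_1-1}(1-\alpha^{T-L+1})(1-\alpha)^{-1}\} \le T^{-1/4}$, since the convolution $\sum_{i} CA^i h(A)B u_{t-\ell_1-i}$ restricted to indices beyond the context window $L$ is exactly a geometric tail of this form — and the part coming from filters $i > k$, controlled by the hypothesis $\|\sum_{i=k+1}^T \mtrue_i u_{t-1:t-L}v_i\| \le \|C\|\|B\|/T$. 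Combining, $\delta_t = \tilde O(\|C\|\|B\|/T^{1/4})$ roughly, so $\delta_t^2$ sums to $\tilde O(\|C\|^2\|B\|^2 \sqrt T)$, and the cross term $\sum_t \delta_t \cdot (\text{residuals})$ also sums to $O(\|C\|^2\|B\|^2\sqrt T)$ once we know the residuals are $O(\|C\|\|B\|)$ per step (again from the LDS being noiseless and $M_T^*$ near-optimal).

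Concretely, the steps in order: (1) state the algebraic identity reducing $|\ell_t(M_T^*,L) - \ell_t(M_T^*,T)|$ to $\delta_t$ and the two residuals; (2) bound the per-step residual $\|\hat y_t(M_T^*, T) - y_t\|$ using that $\mtrue \in \K_r$ is feasible so $\sum_t \ell_t(M_T^*,T) \le \sum_t \ell_t(\mtrue, T)$, and the latter is the norm of the $i>k$ filter tail, $\le \|C\|\|B\|/T$ per step — actually this shows the full-context residual is tiny; (3) bound $\delta_t$ via the two hypotheses after establishing $M_T^* \approx \mtrue$ using the well-conditioning of $\sum_t (T-t)u_t u_t^\top$ — here the factor $(T-t)$ appears because each coordinate of the tail convolution of length $\sim t$ contributes, and this is exactly why the assumption is weighted that way; (4) bound the $L$-context residual by triangle inequality $\|\hat y_t(M_T^*,L) - y_t\| \le \|\hat y_t(M_T^*,T) - y_t\| + \delta_t$; (5) sum over $t \in [T]$, using Cauchy–Schwarz where a sum of $\delta_t$ rather than $\delta_t^2$ appears, and check the condition $T \ge (4k\log^p(T)/\|C\|\|B\|)^4$ is what makes the $T^{-1/4}$ terms multiply out to the claimed $\sqrt T$ with the stated constant $8\|C\|^2\|B\|^2$.

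The main obstacle I expect is step (3): rigorously going from ``$M_T^*$ minimizes the full-context loss'' to ``$M_T^*$ agrees with $\mtrue$ on the subspace of filters $v_{1:k}$ up to a quantitatively small error.'' This requires exploiting the well-conditioning assumption correctly — the Gram matrix of the feature maps $\{u_{t-1:0}v_i\}_{i\le k}$ summed over $t$ must be shown to inherit the $\succeq (2\|C\|\|B\|/\sqrt T)I$ lower bound (modulo the orthonormality and $\ell_1$-bounds on the $v_i$), and then a normal-equations / strong-convexity argument gives $\|M_T^* - \mtrue|_{1:k}\|$ small. Everything downstream is geometric-series estimation and triangle inequalities, which is routine; the delicate accounting is making sure the recovered-matrix error, the $i>k$ filter tail, and the $\alpha^{L-\ell_1-1}$ truncation tail all separately land at the $O(\|C\|\|B\|/\sqrt T)$-per-step scale (or better) so that their squares sum to $O(\|C\|^2\|B\|^2\sqrt T)$.
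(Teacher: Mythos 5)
Your proposal follows essentially the same route as the paper: both hinge on (i) showing $M_T^*$ lies in a small ball around $\mtrue$ via strong convexity of the full-context loss induced by the well-conditioning assumption (the paper's Lemma~\ref{lemma:minimizing_loss_is_recovery_general}), and (ii) showing any point in that ball has $L$-context loss $O(\norm{C}^2\norm{B}^2\sqrt{T})$ by combining the geometric-tail bound on $h(\alpha)\alpha^{L-\ell_1-1}(1-\alpha^{T-L+1})(1-\alpha)^{-1}$ with the $i>k$ filter-tail hypothesis (Lemma~\ref{lemma:general_ltM_L}). The only cosmetic difference is your per-step identity $\lvert\norm{a}^2-\norm{b}^2\rvert\le\norm{a-b}(\norm{a}+\norm{b})$; the paper simply bounds $\sum_t\ell_t(M_T^*,L)$ and $\sum_t\ell_t(M_T^*,T)$ separately (the latter by $\norm{C}^2\norm{B}^2/T$ via feasibility of $\mtrue$) and adds them.
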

The proof of Lemma~\ref{lemma:length_generalization_general} requires two key helper lemmas which we develop in the following subsections. The first is Lemma~\ref{lemma:minimizing_loss_is_recovery_general} which establishes that when $y_{1:t}$ evolves as a noiseless LDS and if the input $u_{1:t}$ is sufficiently well-conditioned, then the minimizer for $\sum_{t = 1}^T \ell_t(M, T)$ approximately recovers a collection of matrices (we denote as $\mtrue$) which is generated by the true linear dynamical system. The second key helper Lemma is Lemma~\ref{lemma:general_ltM_L} which establishes that an algorithm which uses the collection of matrices that are generated by the true linear dynamical system, i.e. $\mtrue$, is able to achieve small loss even when restricted to a small context-length $L << T$. The proof of Lemma~\ref{lemma:length_generalization_general} combines these two insights to establish that this implies that the minimizer for $\sum_{t = 1}^T \ell_t(M, T)$ also achieves small loss even when restricted to small context-length $L$.
\begin{proof}[Proof of Lemma~\ref{lemma:length_generalization_general}]
First we show that $\mtrue$ is a $(\norm{C}^2 \norm{B}^2/T)$-approximate minimizer to $\sum_{t = 1}^T \ell_t(M, T)$. Indeed,
\begin{align*}
    \sum_{t = 1}^T \ell_t(\mtrue, T) & =  \sum_{t = 1}^T \norm{ y_t - p_t(y_{t-1:1}) -  \sum_{i = 1}^k \mtrue_i u_{t-1:0} v_i}^2 \\
    & =  \sum_{t = 1}^T \norm{ \sum_{i = k +1}^T \mtrue_i u_{t-1:0} v_i}^2 \\
    & \leq \norm{C}^2 \norm{B}^2/T.
\end{align*}
By assumption $ \sum_{t = 0}^{T-1} (T-t) u_t u_t^{\top} \succeq (2 \norm{C} \norm{B}/\sqrt{T})I$. Therefore, by Lemma~\ref{lemma:minimizing_loss_is_recovery_general} with $\epsilon = \norm{C} \norm{B}/\sqrt{T}$ we have
\begin{equation*}
    M_T^* \in \mathcal{B}_{ \norm{C} \norm{B}/\sqrt{T}} \left( \mtrue \right).
\end{equation*}
Since we assumed $T \geq (4 k  \log^p(T)/ \norm{C} \norm{B} )^4$ we have
\begin{equation*}
    \norm{C} \norm{B}/\sqrt{T} \leq \norm{C}^2 \norm{B}^2 /(4 k  T^{1/4} \log^p(T)).
\end{equation*}
Therefore by Lemma~\ref{lemma:general_ltM_L} 
we have
\begin{equation*}
    \sum_{t = 1}^T \ell_t(M_T^*, L ) \leq 4 \norm{C}^2 \norm{B}^2 \sqrt{T}.
\end{equation*}
Moreover note that 
\begin{equation*}
\label{eqn:bound_on_ltMT}
    0 \leq \ell_t(M_T^*, T) \leq \ell_t(\mtrue, T) \leq \norm{C}^2 \norm{B}^2/T^2.
\end{equation*}
Combining these we conclude,
\begin{equation*}
    \abs{ \sum_{t = 1}^T \ell_t(M_T^*, L ) - \sum_{t = 1}^T \ell_t(M_T^*, T) } \leq 4 \norm{C}^2 \norm{B}^2 \sqrt{T} + \norm{C}^2 \norm{B}^2/T \leq 8 \norm{C}^2 \norm{B}^2 \sqrt{T}.
\end{equation*}
\end{proof}

\subsubsection{Minimization is Recovery}
\begin{lemma}
\label{lemma:minimizing_loss_is_recovery_general}
    Suppose $\sum_{t = 0}^{T-1} (T-t) u_t u_t^{\top} \succeq 2 \epsilon I$ and $\{ v_i \}_{i = 1}^k$ is orthonormal. Then there is a unique point $M^*$ which minimizes the function $\sum_{t = 1}^T \ell_t(M, T)$ from Algorithm~\ref{alg:general_sf}. Moreover, suppose some $k$ satisfies 
    \begin{equation*}
        \sum_{t = 1}^T \ell_t(M, T) \leq \epsilon^2.
    \end{equation*}
    Then there is a matrix $E_M$ such that $\norm{E_M} \leq \epsilon$ and 
    \begin{equation*}
        M^* = M + E_M.
    \end{equation*}
\end{lemma}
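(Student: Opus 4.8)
The plan is to recognize that $\sum_{t=1}^T \ell_t(M,T)$ is a convex quadratic in the $k$ blocks $M_1,\dots,M_k$, so the claim about a unique minimizer amounts to showing the associated quadratic form is strictly positive definite, and the claim about recovery amounts to a quantitative version: the smallest eigenvalue of the Hessian is at least $2\epsilon$, so a point with objective value at most $\epsilon^2$ must lie within distance $\epsilon$ of the minimizer. Concretely, write $\ell_t(M,T) = \|r_t(M)\|^2$ where $r_t(M) = y_t - p_t(y_{t-1:1}) - \sum_{i=1}^k M_i u_{t-1:0}v_i$ is affine in $M$. Collecting the coefficients, the ``design'' contribution at time $t$ is governed by the vectors $z_{t,i} := u_{t-1:0}v_i \in \mathbb{R}^{d_{\text{in}}}$, and the key Gram matrix is $\sum_{t=1}^T \sum_{i,j} \langle \text{stuff}\rangle$ — but because $\{v_i\}$ is orthonormal, the cross terms decouple nicely. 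The first step I would carry out is to make this decoupling explicit: expand $z_{t,i} = \sum_{s=0}^{T-1} (v_i)_{s+1} u_{t-1-s}$ (with zero-padding), and observe that $\sum_{t=1}^T z_{t,i} z_{t,j}^\top$ relates to $\sum_{t} (T-t) u_t u_t^\top$ after summing the geometric-like overlap of shifted copies of $v_i$ and $v_j$.

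The second step is the crux: show that the Hessian of $\sum_t \ell_t(\cdot, T)$, as an operator on the space of tuples $(M_1,\dots,M_k)$, is lower bounded by $(2\epsilon) I$ (in the appropriate Frobenius sense). This should follow from $\sum_{t=0}^{T-1}(T-t)u_t u_t^\top \succeq 2\epsilon I$ together with orthonormality of $\{v_i\}$: intuitively, $\sum_{t=1}^T \|\sum_i M_i z_{t,i}\|^2$ equals $\sum_i \mathrm{tr}(M_i^\top M_i \cdot (\text{something}))$ plus cross terms, and orthonormality kills or controls the cross terms so that the whole thing is at least $\langle M, (\sum_t (T-t) u_t u_t^\top) M\rangle$-like, hence $\geq 2\epsilon \|M\|^2$. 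I expect the bookkeeping here — correctly tracking which shifts of $v_i$ line up across the $T$ time steps and verifying the constant is exactly $2\epsilon$ (or absorbing a harmless factor) — to be the main obstacle, since it is the only place the precise well-conditioning hypothesis is used.

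Given the strong convexity (Hessian $\succeq 2\epsilon I$), the rest is standard. Strong convexity gives uniqueness of $M^*$ immediately. For the recovery bound, by the second-order Taylor expansion of the quadratic around its minimizer $M^*$,
\begin{equation*}
    \sum_{t=1}^T \ell_t(M,T) = \sum_{t=1}^T \ell_t(M^*,T) + \langle \nabla, M - M^*\rangle + \tfrac12 \langle M-M^*, \text{Hess}\,(M-M^*)\rangle \geq \epsilon \|M - M^*\|^2,
\end{equation*}
since the gradient vanishes at $M^*$, the minimum value is nonnegative, and the Hessian term is at least $\epsilon\|M-M^*\|^2$ (using $\tfrac12 \cdot 2\epsilon = \epsilon$). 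Hence if $\sum_t \ell_t(M,T) \leq \epsilon^2$ then $\|M - M^*\| \leq \epsilon$; setting $E_M := M - M^*$ (so $M^* = M - E_M$, equivalently $M^* = M + E_M$ up to sign, matching the statement) completes the proof. The only subtlety to double-check is that the hypothesis says $\sum_t \ell_t(M,T) \leq \epsilon^2$ for \emph{the} candidate $M$ (here applied with $M = \mtrue$), and that this same $M$ is what appears in the conclusion $M^* = M + E_M$; this is consistent with how the lemma is invoked in the proof of Lemma~\ref{lemma:length_generalization_general}.
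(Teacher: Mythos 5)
Your plan follows the same route as the paper's proof: write $\sum_{t}\ell_t(\cdot,T)$ as a convex quadratic $\norm{Y-MX}^2$, lower-bound the Hessian $XX^{\top}$ by $2\epsilon I$ using the well-conditioning hypothesis together with orthonormality of the $v_i$, and finish by strong convexity (uniqueness of $M^*$ plus the quantitative recovery bound). So in structure you and the paper coincide.

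Two caveats. First, the step you yourself flag as the crux --- verifying that the block Gram matrix decouples into $I_k\otimes\sum_{t=0}^{T-1}(T-t)u_tu_t^{\top}$ --- is essentially the entire content of the lemma, and you leave it as a sketch. Be aware that orthonormality gives $v_i^{\top}v_j=\delta_{ij}$ but \emph{not} $v_iv_j^{\top}=\delta_{ij}I$, so the cross terms between different shifts of the inputs do not vanish for free: the $(i,j)$ block of $XX^{\top}$ is $\sum_t (u_{t-1:0}v_i)(u_{t-1:0}v_j)^{\top}$, which is not literally $\delta_{ij}\sum_t(T-t)u_tu_t^{\top}$. (The paper's own computation asserts the identity $X_tX_t^{\top}=I_k\otimes UD_tU^{\top}$, which has the same difficulty, so you are faithfully reproducing the paper's argument here rather than diverging from it --- but neither version closes this step rigorously as written.) Second, your final inference is off by a square: strong convexity with modulus $2\epsilon$ gives $\epsilon\norm{M-M^*}^2\le\sum_t\ell_t(M,T)\le\epsilon^2$, hence $\norm{M-M^*}\le\sqrt{\epsilon}$, not $\epsilon$. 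The paper commits the identical slip (it rearranges $f(M)\ge f(M^*)+\tfrac{\mu}{2}\norm{M-M^*}^2$ into $\norm{M-M^*}\le\tfrac{2}{\mu}(f(M)-f(M^*))$, dropping the square on the norm), so again this is not a divergence from the paper, but as written your stated conclusion $\norm{E_M}\le\epsilon$ does not follow from your displayed inequality.
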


\begin{proof}
    For convenience, let $X_t$ be the $kd_{\mathrm{in}}$-dimensional vector which stacks the filters,
    \begin{equation*}
        X_t = \begin{bmatrix}
             u_{t-1:t-T} v_1   \\
             u_{t-1:t-T} v_2\\
            \vdots \\
            u_{t-1:t-T} v_k 
        \end{bmatrix} = \begin{bmatrix}
             u_{t-1:0} v_1   \\
             u_{t-1:0} v_2\\
            \vdots \\
            u_{t-1:0} v_k 
        \end{bmatrix},
    \end{equation*}
    where the second inequality holds since we only consider $t \leq T$.
    Assume $k$ is written as $M =\begin{bmatrix} M_{1} & M_{2} & \dots & M_{k} \end{bmatrix} \in \reals^{d_{\mathrm{out}} \times kd_{\mathrm{in}}}$ and let $Y_t = y_t - p_t(y_{t-1:1})$. Let $Y = \begin{bmatrix}
        Y_1 & Y_2 & \dots & Y_T
    \end{bmatrix}$ and $X = \begin{bmatrix}
        X_1 & X_2 & \dots & X_T
    \end{bmatrix}$. Then we can express the loss as
    \begin{equation*}
    f(M) \defeq \sum_{t = 1}^T \ell_t(M, T) = \norm{Y - M X}^2. 
    \end{equation*}
    Note that this function is twice differentiable and 
    \begin{equation*}
        \nabla^2_M f(M) = XX^{\top}.
    \end{equation*}
    Therefore, if $\lambda_{\textrm{min}}\left( XX^{\top}\right)\geq \mu$ we have that $f(M)$ is $\mu$-strongly convex. Then if $M^*$ is the optimum of $f(M)$ we have
    \begin{equation*}
        f(M) \geq f(M^*) + \frac{\mu}{2} \norm{M - M^*}^2, \textrm{ or equivalently, } \norm{M - M^*} \leq \frac{2}{\mu} \left(f(M) - f(M^*)\right).
    \end{equation*}
    Now suppose $k$ is such that $f(M) \leq \epsilon^2$. Then since $f(\cdot) \geq 0$ we have
    \begin{equation*}
        \norm{M - M^*} \leq 2 \epsilon^2/\mu.
    \end{equation*}
    Therefore we can write
    \begin{equation}
    \label{eqn:normEM}
        M^* = M + E_{M^*} \textrm{ where } \norm{E_{M^*}} \leq 2 \epsilon^2/\mu.
    \end{equation}    
Next we must understand the eigenvalues of $XX^{\top}$ and how they relate to the input $u_{T:1}$. For notational convenience, let $U = u_{T:1}$ and let $D_t$ denote the block-diagonal $T \times T$ matrix
\begin{equation*}
    D_t \defeq \begin{bmatrix}
        0_{T-t \times T-t} &  \\
        & I_{t}
    \end{bmatrix}.
\end{equation*}
Finally, let \begin{equation*}
    V = \begin{bmatrix}
    v_1 \\
    v_2 \\
    \vdots \\
    v_k
    \end{bmatrix} \in \R^{Tm \times 1}
\end{equation*}
 Then we have $X_t = \left( I_k \otimes UD_t \right) V$ and we observe
\begin{align*}
    XX^{\top} & = \sum_{t = 1}^T X_t X_t^{\top} = \sum_{t = 1}^T \left(  (I_{k} \otimes UD_t) V  \right) \left( (I_{k} \otimes UD_t) V  \right)^{\top} \\
   & = \sum_{t = 1}^T (I_{k} \otimes UD_tU^{\top}) \\
   & =  I_{k} \otimes U \left( \sum_{t = 1}^T  D_t \right) U^{\top}.
\end{align*}
Observe that 
\begin{equation*}
    \sum_{t = 1}^T  D_t  = \mathrm{diag} \left( \begin{bmatrix}
        1 & 2 & \dots & T
    \end{bmatrix} \right).
\end{equation*}
Using this we can further refine
\begin{align*}
     U \left( \sum_{t = 1}^T  D_t \right) U^{\top} & = \sum_{t = 0}^{T-1} (T-t) u_t u_t^{\top}.
\end{align*}
By assumption, this matrix has minimum eigenvalue bounded below by $2 \epsilon$. Therefore $\lambda_{\textrm{min}}(XX^{\top}) \geq 2 \epsilon$. Plugging this value in for $\mu$ in Eq.~\ref{eqn:normEM} concludes the proof.

\end{proof}

\subsubsection{Uniform Length Generalization Around LDS Generated Solutions}
The following lemma shows that any $k$ in an (appropriately defined) $\epsilon$-ball around $\mtrue$ obtains length generalization in the sense that it achieves $O(\sqrt{T})$ $L$-context-length-limited loss $\sum_{t = 1}^T \ell_t(\cdot, L)$.
\begin{lemma}
\label{lemma:general_ltM_L}
Suppose $y_t$ evolves as a noiseless $(A, B, C, I)$-LDS with input $u_t$. Suppose $p_t(\cdot)$ and $\mtrue$ is such that 
\begin{equation*}
    y_t - p_t(y_{t-1:1})  = \sum_{i = 1}^T \mtrue_i u_{t-1:0} v_i = \sum_{i = 1}^{\ell_1} \mtrue_i u_{t-i} + \sum_{i = 1}^{t-\ell_1-1} C A^i h(A) B u_{t - \ell_1 - i}. 
\end{equation*}
Suppose for a given $k > 0$,
\begin{equation*}
    \norm{\sum_{i = k+1}^T \mtrue_{i} u_{t-1:t-L} v_i} \leq \frac{\norm{C} \norm{B}}{T}.
\end{equation*}
Suppose
\begin{equation*}
    \max_{\alpha(A)} \left \{ h(\alpha) \alpha^{L- \ell_1 - 1} (1 - \alpha^{T - L + 1}) (1 -\alpha)^{-1} \right \} \leq \frac{1}{T^{1/4}}.
\end{equation*}
If 
\begin{equation*}
     \delta \leq \frac{1}{4m } \frac{\norm{C}^2 \norm{B}^2}{T^{1/4} \log^p(T)},
\end{equation*}
then  we have for any $M \in \mathcal{B}_{\delta}(\mtrue)$
\begin{equation*}
    \sum_{t = 1}^T \ell_t(M, L) \leq  4 \norm{C}^2 \norm{B}^2 \sqrt{T}.
\end{equation*}
\end{lemma}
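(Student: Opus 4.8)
\textbf{Proof proposal for Lemma~\ref{lemma:general_ltM_L}.}

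The plan is to bound, for any $M \in \mathcal{B}_\delta(\mtrue)$, the per-step loss $\ell_t(M,L) = \|y_t - p_t(y_{t-1:1}) - \sum_{i=1}^k M_i u_{t-1:t-L} v_i\|^2$ by splitting the error into three pieces and controlling each separately. Writing $y_t - p_t(y_{t-1:1}) = \sum_{i=1}^T \mtrue_i u_{t-1:0} v_i$, the prediction residual is $\sum_{i=1}^k (\mtrue_i - M_i) u_{t-1:t-L} v_i$ (a perturbation term from $M$ being only $\delta$-close to $\mtrue$), plus $\sum_{i=k+1}^T \mtrue_i u_{t-1:t-L} v_i$ (the tail beyond $k$ filters, which by hypothesis is at most $\|C\|\|B\|/T$), plus the \emph{context-truncation} term $\sum_{i=1}^k \mtrue_i (u_{t-1:0} - u_{t-1:t-L}) v_i$, i.e. the contribution of inputs older than $L$ steps. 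The first term is handled exactly as in the OGD gradient bound of Lemma~\ref{lemma:ogd_regret_general}: it is bounded by $\sum_{i=1}^k \|\mtrue_i - M_i\| \cdot \|u\|_\infty \cdot \|v_i\|_1 \le \delta \cdot k \cdot \log^p(T)$ (using $\|M-\mtrue\|\le\delta$ componentwise and $\|v_i\|_1 \le \log^p(T)$), which by the assumed bound $\delta \le \frac{1}{4m}\frac{\|C\|^2\|B\|^2}{T^{1/4}\log^p(T)}$ is at most $O(\|C\|^2\|B\|^2 T^{-1/4})$.

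The crux is the context-truncation term. Here I would use the explicit LDS structure of $\mtrue$: the hypothesis states $y_t - p_t(y_{t-1:1}) = \sum_{i=1}^{\ell_1}\mtrue_i u_{t-i} + \sum_{i=1}^{t-\ell_1-1} C A^i h(A) B u_{t-\ell_1-i}$. The first $\ell_1$ terms involve only the most recent $\ell_1$ inputs, so for $L \gg \ell_1$ they survive truncation entirely; only the $C A^i h(A) B$ tail with $i > L - \ell_1 - 1$ is dropped by restricting to context $L$. So the truncation error is $\big\|\sum_{i = L-\ell_1}^{t-\ell_1-1} C A^i h(A) B u_{t-\ell_1-i}\big\|$. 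Diagonalizing $A$ (it is PSD with $\|A\|\le 1$) and using $\|u_t\|\le 1$, this is bounded by $\|C\|\|B\| \max_{\alpha \in \mathrm{spec}(A)} |h(\alpha)| \sum_{i=L-\ell_1-1}^{T} \alpha^i \le \|C\|\|B\| \max_\alpha \{ h(\alpha)\, \alpha^{L-\ell_1-1}(1-\alpha^{T-L+1})(1-\alpha)^{-1}\}$, which is exactly the quantity the lemma assumes is $\le T^{-1/4}$. So the truncation term is at most $\|C\|\|B\| T^{-1/4}$.

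Combining: the total residual norm is at most $O(\|C\|\|B\|\, T^{-1/4}) + \|C\|\|B\|/T + \|C\|\|B\| T^{-1/4} = O(\|C\|\|B\| T^{-1/4})$ per step, so $\ell_t(M,L) = O(\|C\|^2\|B\|^2 T^{-1/2})$, and summing over $t \in [T]$ gives $\sum_t \ell_t(M,L) = O(\|C\|^2\|B\|^2 \sqrt{T})$; tracking constants carefully should land at the stated $4\|C\|^2\|B\|^2\sqrt{T}$. I expect the main obstacle to be bookkeeping: making sure the geometric-sum bound on the $A^i h(A)$ tail is applied with the correct index offset (the $\ell_1$ shift and whether the sum starts at $L-\ell_1-1$ or $L-\ell_1$), handling the $\alpha = 1$ marginally-stable case where $(1-\alpha)^{-1}$ blows up but $h(\alpha)$ is assumed small enough to compensate via the product in the max, and confirming that the constant $m$ (the input dimension, appearing in the bound on $\delta$) absorbs the factor of $k$ from summing $k$ filter terms. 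None of the individual estimates is deep; the work is in aligning the three error bounds so each is $\lesssim \|C\|^2\|B\|^2 T^{-1/4}$ and the sum telescopes to the claimed constant.
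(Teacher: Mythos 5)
Your proposal matches the paper's proof essentially step for step: the same three-way decomposition of the residual into a $\delta$-perturbation term bounded by $k\delta\log^p(T)$ via $\norm{v_i}_1\le\log^p(T)$, the tail term bounded by $\norm{C}\norm{B}/T$ by hypothesis, and the context-truncation term rewritten via the LDS structure as $\sum_{i=L-\ell_1+1}^{t-\ell_1-1}CA^ih(A)Bu_{t-\ell_1-i}$ and bounded by $\norm{C}\norm{B}T^{-1/4}$ using the assumed geometric-sum condition, then squaring, combining, and summing over $t$. The only bookkeeping slip is that the truncation term should run over all $i=1,\dots,T$ (with the tail term then taken on the truncated context $u_{t-1:t-L}$, exactly as the hypothesis provides) so that the LDS identity applies exactly; your flagged concern about the constant $m$ versus the factor $k$ in the bound on $\delta$ is real but is one the paper itself glosses over.
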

\begin{proof}[Proof of Lemma~\ref{lemma:general_ltM_L}]
Let $M = \mtrue + E_M$, where $\norm{E_M} \leq \delta$. By definition,
\begin{align*}
    \ell_t(\mtrue + E_M, L) & = \norm{ y_t - p_t(y_{t-1:1}) - \sum_{i = 1}^k \left( \mtrue + E_M \right)_i u_{t-1:t-L} v_i}^2 \\
    & = \norm{ y_t - p_t(y_{t-1:1}) - \sum_{i = 1}^k  \mtrue_i  u_{t-1:t-L} v_i -  \sum_{i = 1}^k  E_{M_i}  u_{t-1:t-L} v_i}^2 \\
    & \leq \norm{ y_t - p_t(y_{t-1:1}) - \sum_{i = 1}^k  \mtrue_i  u_{t-1:t-L} v_i}^2  \\
    & \qquad + 2 \norm{ y_t - p_t(y_{t-1:1}) - \sum_{i = 1}^k  \mtrue_i  u_{t-1:t-L} v_i} \norm{\sum_{i = 1}^k  E_{M_i}  u_{t-1:t-L} v_i} \\
    & \qquad + \norm{\sum_{i = 1}^k  E_{M_i}  u_{t-1:t-L} v_i}^2.
\end{align*}

Observe that
\begin{align*}
    \norm{ \sum_{i = 1}^k  E_{M_i}  u_{t-1:t-L} v_i } & \leq \sum_{i = 1}^k \norm{E_{M_i}} \norm{u_{t-1:t-L}}_{\infty} \norm{v_i}_1  \leq k \delta \log^p(T).
\end{align*}
For the remainder of the proof we work towards bounding $\norm{ y_t - p_t(y_{t-1:1}) - \sum_{i = 1}^k  \mtrue_i  u_{t-1:t-L} v_i}$.
    We replace $y_t - p_t(y_{t-1:1})$ with $\sum_{i = 1}^T \mtrue_i u_{t-1:0} v_i $ and we replace $\sum_{i = 1}^k \mtrue_{i} u_{t-1:t-L} v_i$ with $\sum_{i = 1}^T \mtrue_{i} u_{t-1:t-L} v_i - \sum_{i = k+1}^T \mtrue_{i} u_{t-1:t-L} v_i$ to get
\begin{align*}
 \norm{ y_t - p_t(y_{t-1:1}) - \sum_{i = 1}^k \mtrue_{i} u_{t-1:t-L} v_i}^2 &=   \norm{ \left( \sum_{i = 1}^T \mtrue_i u_{t-1:0} v_i  \right) - \left( \sum_{i = 1}^T \mtrue_{i} u_{t-1:t-L} v_i - \sum_{i = k+1}^T \mtrue_{i} u_{t-1:t-L} v_i \right) }^2 \\
    & \leq  \norm{  \sum_{i = 1}^T \mtrue_i (u_{t-1:0} - u_{t-1:t-L} )v_i  }^2   \\
    & \qquad + 2 \norm{  \sum_{i = 1}^T \mtrue_i (u_{t-1:0} - u_{t-1:t-L}) v_i  } \norm{\sum_{i = k+1}^T \mtrue_{i} u_{t-1:t-L} v_i } \\
    & \qquad + \norm{\sum_{i = k+1}^T \mtrue_{i} u_{t-1:t-L} v_i }^2.
\end{align*}
Next we note that  $ \norm{\sum_{i = k+1}^T \mtrue_{i} u_{t-1:t-L} v_i}$ is assumed to be at most $\norm{C} \norm{B}/T$ and so we now focus on bounding the norm:
\begin{equation}
\label{eqn:normbound}
    \norm{ \sum_{i = 1}^T \mtrue_i (u_{t-1:0} - u_{t-1:t-L}) v_i  }.
\end{equation}
Towards bounding Eq.~\ref{eqn:normbound}, assume $L > \ell_1$ so that 
\begin{align*}
    \sum_{i = 1}^T \mtrue_i (u_{t-1:0} - u_{t-1:t-L}) v_i & = \sum_{i = L-\ell_1 + 1}^{t-\ell_1-1} C A^i h(A) B u_{t - \ell_1 - i} \\
    & = \sum_{i = L-\ell_1 + 1}^{t-\ell_1-1} \sum_{j = 1}^{d_A} \alpha_j^i h(\alpha_j) C_j B_j^{\top} u_{t - \ell_1 - i}.
\end{align*}
Then
\begin{align*}
    \norm{\sum_{i = L-\ell_1 + 1}^{t-\ell_1-1} C A^i h(A) B u_{t - \ell_1 - i}} & \leq \max_{j \in [d_A]}  \alpha_j^i h(\alpha_j) \sum_{i = L-\ell_1 + 1}^{t-\ell_1-1} \norm{C_j B_j^{\top} u_{t - \ell_1 - i}} \\
    & \leq \max_{\alpha(A)} \sum_{i = L-\ell_1 + 1}^{t-\ell_1-1}  \alpha^i h(\alpha) \norm{C} \norm{B}.
\end{align*}
Next we have
\begin{align*}
     \left( \max_{\alpha(A)} \sum_{i = L-\ell_1 + 1}^{t-\ell_1-1}  \alpha^i h(\alpha)  \right) & \leq h(\alpha) \alpha^{L- \ell_1 -1} \sum_{i = 0}^{T - L} \alpha^i \\
     & = h(\alpha) \alpha^{L- \ell_1 - 1} \frac{1 - \alpha^{T-L + 1}}{1 - \alpha} \\
     & \leq T^{-1/4},
\end{align*}
where the last inequality holds by assumption. Therefore Eq.~\ref{eqn:normbound} is at most 
\begin{align*}
    \norm{ \sum_{i = 1}^T \mtrue_i (u_{t-1:0} - u_{t-1:t-L}) v_i  } \leq \norm{C}\norm{B} T^{-1/4}.
\end{align*}
Then we have
\begin{align*}
     \norm{ y_t - p_t(y_{t-1:1}) - \sum_{i = 1}^k \mtrue_{i} u_{t-1:t-L} v_i}^2 &\leq \frac{\norm{C}^2 \norm{B}^2}{T^{1/2}} + 2 \frac{\norm{C}^2 \norm{B}^2}{T^{3/4}} + \frac{\norm{C}^2 \norm{B}^2}{T^2} \leq 2 \frac{\norm{C}^2 \norm{B}^2}{T^{1/2}}.
\end{align*}
Finally we conclude
\begin{align*}
    \ell_t(\mtrue + E_M, L) & \leq 2 \frac{\norm{C}^2 \norm{B}^2}{T^{1/2}} + 2 \left( 2 \frac{\norm{C}^2 \norm{B}^2}{T^{1/2}} \right)^{1/2} \left( k  \delta \log^p(T) \right) + \left( k  \delta \log^p(T) \right)^2 \\
    & \leq 4 \frac{\norm{C}^2 \norm{B}^2}{T^{1/2}},
\end{align*}
where the last inequality holds since we assumed
\begin{equation*}
     \delta \leq \frac{1}{4m } \frac{\norm{C}^2 \norm{B}^2}{T^{1/4} \log^p(T)}.
\end{equation*}
\end{proof}

\section{Length Generalization for Vanilla Spectral Filtering}
\label{appendix:vanilla_proof}

The proof of Theorem~\ref{thm:lengthgeneralization} ultimately comes from Theorem~\ref{thm:general_length_gen} and its proof in Appendix~\ref{appendix:general}. Theorem~\ref{thm:general_length_gen} abstracts the necessary assumptions needed to obtain a length generalization guarantee. In Lemma~\ref{lemma:application_of_vanilla_to_general} we prove that Algorithm~\ref{alg:ogd_short_length} satisfies these assumptions. 
\begin{proof}[Proof of Theorem~\ref{thm:lengthgeneralization}]
    By Lemma~\ref{lemma:application_of_vanilla_to_general} and the assumptions made in the statement of Theorem~\ref{thm:lengthgeneralization}, we may apply Theorem~\ref{thm:general_length_gen} to Algorithm~\ref{alg:ogd_short_length} to get that \begin{equation*}
    \sum_{t = 1}^T \ell_t(M^t, L) - \min_{M^* \in \K_{\norm{C} \norm{B}}} \sum_{t=1}^T \ell_t(M^* , T) \leq \left( 12 k^{3/2} \norm{C}^2  \norm{B}^2 \log(T) + 8 \norm{C}^2 \norm{B}^2 \right) \sqrt{T}.
\end{equation*}
\end{proof}

\begin{lemma}[Length Generalization for Vanilla Spectral Filtering]
\label{lemma:application_of_vanilla_to_general}
Recall that in Algorithm~\ref{alg:ogd_short_length} we define
\begin{equation*}
\mu_{\alpha} \defeq (\alpha - 1) \begin{bmatrix}  1 & \alpha &  \dots & \alpha^{T-1} \end{bmatrix}^{\top} \in \reals^{T-1}
\end{equation*}
and $H_{T-1} = \int_{\alpha \in [0,1]} \mu_{\alpha} \mu_{\alpha}^{\top} d \alpha$ and we let $\phi_1, \dots, \phi_{T-1}$ be the orthonormal eigenvectors of $H_{T-1}$ with eigenvalues $\sigma_1, \dots, \sigma_{T-1}$. 
    Algorithm~\ref{alg:ogd_short_length} is equivalent to Algorithm~\ref{alg:general_sf} with the following:
    \begin{enumerate}[label=(\alph*)]
    \item $p_t(y_{t-1:1}) = y_{t-1}$
    \item $v_1 = e_1$
    \item $v_i = (0, \sigma_{i-1}^{1/4} \phi_{i-1})$ for $i = 2, \dots, T$
\end{enumerate}
Define $\mtrue$ as follows:
\begin{equation*}
    \mtrue_1 \defeq CB,
\end{equation*}
and for $i \geq 2$
\begin{equation*}
    \mtrue_i  \defeq \sum_{n = 1}^{d_A} \sigma_{i-1}^{-1/4} \phi_{i-i}^{\top} \mu_{\alpha_n}  (C_n B_n^{\top}).
\end{equation*}
Then the following properties hold 
\begin{enumerate}
\item For $h(A) = A-I$ and $\ell_1 = 1$
\begin{equation*}
     y_t - p_t(y_{t-1:1})  =  \sum_{i = 1}^{\ell_1} \mtrue_i u_{t-i} + \sum_{i = 1}^{t-\ell_1} C A^i h(A) B u_{t - \ell_1 - i}.
\end{equation*}
\item $y_t - p_t(y_{t-1:1}) = \sum_{i = 1}^T \mtrue_i u_{t-1:1} v_i$.
\item For $k = \Omega( \log(T d_A \norm{C} \norm{B}/\epsilon))$,
\begin{equation*}
    \norm{\sum_{i = k+1}^T \mtrue_i u_{t-1:1} v_i} \leq \epsilon/T.
\end{equation*}
\item For any $i \in [T]$
\begin{equation*}
    \norm{\mtrue_i} \leq \norm{C} \norm{B}.
\end{equation*}
\item  For any $i \in [T]$, $\norm{v_i}_1 \leq \log(T)$ and $\{v_i \}_{i \in [T]}$ are orthonormal.
\item Finally if the spectrum of $A$ lies in the interval \begin{equation*}
    \left[ 0, 1 - \frac{ \log(T)}{2(L-2)} \right] \cup \left[  1 - \frac{1}{2 T^{5/4}}, 1  \right],
\end{equation*}
then
    \begin{equation*}
    \max_{\alpha(A)} \left \{ \abs{ h(\alpha) \alpha^{L- \ell_1 - 1} (1 - \alpha^{T - L + 1}) (1 -\alpha)^{-1}} \right \} \leq \frac{1}{T^{1/4}}.
\end{equation*}  
\end{enumerate}

\end{lemma}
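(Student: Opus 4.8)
The plan is to separate the claim into two parts: (a) the \emph{structural} reduction of Algorithm~\ref{alg:ogd_short_length} to an instance of Algorithm~\ref{alg:general_sf}, which is a direct comparison of the two update rules, and (b) verification of the six numbered hypotheses that Theorem~\ref{thm:general_length_gen} consumes. Parts (a) and properties~1--5 are essentially bookkeeping plus citations of known spectral-filtering facts; property~6 is the one genuinely new computation, and it is exactly the inequality that forces the eigenvalue restriction in Theorem~\ref{thm:lengthgeneralization}, so that is where I would concentrate.

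\emph{Structural reduction and properties 1--2.} First, matching the two prediction formulas: $y_{t-1}$ plays the role of $p_t(y_{t-1:1})$; the coordinate $v_1 = e_1$ extracts the single direct input $u_{t-1}$ (hence $\ell_1 = 1$); and the $v_i = (0,\sigma_{i-1}^{1/4}\phi_{i-1})$, $i\ge 2$, reproduce the filtered terms $M_i u_{(t-1):(t-L)}(\sigma_i^{1/4}\phi_i)$. For property~1 I would unroll the noiseless $(A,B,C,I)$ recursion, writing $y_t$ as a convolution of the inputs against the impulse response together with the feedthrough, subtract $y_{t-1}$, and regroup by input index; differencing the geometric kernel $A^i$ produces the factor $A^i(A-I)$, i.e.\ $h(A)=A-I$, while the residual lowest-order term is packaged into $\mtrue_1 u_{t-1}$ with $\mtrue_1 = CB$. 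Property~2 asserts that this same quantity equals $\sum_i \mtrue_i u_{t-1:1} v_i$: after substituting the definitions of $\mtrue_i$ and $v_i$, the factor $\sigma_{i-1}^{-1/4}$ in $\mtrue_i$ cancels the $\sigma_{i-1}^{1/4}$ in $v_i$, and because $\{\phi_j\}$ is a complete orthonormal eigenbasis of $H_{T-1}$ one has $\sum_j(\phi_j^\top\mu_{\alpha_n})\phi_j = \mu_{\alpha_n}$; this collapses the double sum over $(i,n)$ into $\sum_n C_n B_n^\top\, u_{t-2:1}\,\mu_{\alpha_n}$, which re-expands to the convolution of property~1 (writing $A = \sum_n \alpha_n w_n w_n^\top$ in its eigenbasis and $C_n = Cw_n$, $B_n = B^\top w_n$, so that $CA^s(A-I)B = \sum_n \alpha_n^s(\alpha_n-1)C_nB_n^\top$).

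\emph{Properties 3--5.} The same cancellation rewrites $\sum_{i>k}\mtrue_i u_{t-1:1}v_i$ as $\sum_n C_n B_n^\top\, u_{t-2:1}(\mu_{\alpha_n} - \Pi_k\mu_{\alpha_n})$, where $\Pi_k$ projects onto the top $k$ eigenvectors of $H_{T-1}$. I would then invoke the quantitative Hankel-approximation bound underlying Theorem~\ref{thm:hszthm} (the eigenvalues of $H_{T-1}$ decay like $e^{-\Omega(k/\log T)}$, and correspondingly $\|\mu_\alpha - \Pi_k\mu_\alpha\|_2$ is exponentially small uniformly in $\alpha\in[0,1]$), together with the crude bounds $\|u_{t-2:1}\|_{\mathrm{op}}\le\sqrt T$ and $\sum_n\|C_n\|\|B_n\|\le d_A\|C\|\|B\|$, to conclude the tail is $\le\epsilon/T$ once $k = \Omega(\log T\cdot\log(Td_A\|C\|\|B\|/\epsilon))$. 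Property~4 follows from $\|\mtrue_1\| = \|CB\|\le\|C\|\|B\|$ and, for $i\ge2$, from $\|\mtrue_i\|\le\sum_n\sigma_{i-1}^{-1/4}|\phi_{i-1}^\top\mu_{\alpha_n}|\,\|C_n\|\|B_n\|$ together with the standard estimate controlling $\sigma_j^{-1/4}|\phi_j^\top\mu_\alpha|$. Property~5 reduces to orthonormality of $\{\phi_j\}$ (the prepended zero keeping $v_1$ orthogonal to the rest) and the known bound $\|\sigma_j^{1/4}\phi_j\|_1 = O(\log T)$.

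\emph{Property 6 (the crux).} With $h(\alpha) = \alpha-1$ and $\ell_1 = 1$, for $\alpha\in[0,1)$ one has $|h(\alpha)\,\alpha^{L-\ell_1-1}(1-\alpha^{T-L+1})(1-\alpha)^{-1}| = \alpha^{L-2}(1-\alpha^{T-L+1})$ ($|\alpha-1|$ cancels $(1-\alpha)^{-1}$, the remaining factors are nonnegative, and the expression extends to $0$ at $\alpha=1$). I would then split on the two allowed sub-intervals. If $\alpha\le 1-\tfrac{\log T}{2(L-2)}$, then $\alpha^{L-2}\le e^{-(L-2)\cdot \frac{\log T}{2(L-2)}} = T^{-1/2}$ and $1-\alpha^{T-L+1}\le 1$, so the product is $\le T^{-1/2}\le T^{-1/4}$. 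If $\alpha\ge 1-\tfrac{1}{2T^{5/4}}$, then $\alpha^{L-2}\le 1$ and, using $1-x^m\le m(1-x)$ with $m = T-L+1\le T$, we get $1-\alpha^{T-L+1}\le T\cdot\tfrac{1}{2T^{5/4}} = \tfrac{1}{2T^{1/4}}\le T^{-1/4}$. Taking the maximum over $\mathrm{spec}(A)$ gives the claim. I expect property~6 to be the main obstacle: not because the algebra is deep, but because both estimates are essentially tight --- $\alpha^{L-2}(1-\alpha^{T-L+1})$ is $\Theta(1)$ precisely for $\alpha$ in the excluded "bad band", so the case split must be organized around that band and the $\tfrac{1}{2T^{5/4}}$ margin near $\alpha=1$ cannot be loosened --- and because one must also keep track of how the interval $[0,1-\tfrac{\log T}{2(L-2)}]$ lines up with the $L=T^q$ assumed in Theorem~\ref{thm:lengthgeneralization}.
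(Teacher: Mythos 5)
Your proposal is correct and follows essentially the same route as the paper's proof: the same structural matching, the same unrolling/telescoping for properties 1--2 via completeness of the eigenbasis, the same citations of the Hankel spectral-decay facts for properties 3--5, and the identical two-case split on the spectrum for property 6 (your use of $1-x\le e^{-x}$ and Bernoulli's inequality is marginally cleaner than the paper's $\log(1/(1-\delta))\ge\delta/2$ and $(1-x)^q\ge1-2qx$, but yields the same intervals).
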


\begin{proof}
    Points $(a)-(c)$ are evident by definition of Algorithm~\ref{alg:ogd_short_length}. Now suppose $y_t$ evolves as an LDS. By definition, there exist matrices $(A, B, C, D)$ such that
\begin{equation*}
    y_{t} = \sum_{i = 1}^t C A^{i-1} B u_{t-i},
\end{equation*}
where we assume $D = I$ and $A$ is diagonal without loss of generality.  Let $\alpha_1, \dots, \alpha_{d_A}$ denote the eigenvalues of $A$.  and let $u_{t:0}$ be the $d_{\mathrm{in}} \times T$ (padded) matrix $u_{t:0}  = \begin{bmatrix}
        u_t & 
        u_{t-1} & 
        \dots & 
        u_0 & 0 \end{bmatrix}$.
Then we have
\begin{align*}
    y_t - y_{t-1} & = \sum_{i = 1}^t C A^{i-1}  B u_{t-i} - \sum_{i = 1}^{t-1} C  A^{i-1}  B u_{t-1-i}  \\
    & =C B u_{t-1} + \sum_{i = 1}^{t-1}  C  \left( A^{i} - A^{i-1} \right)  B u_{t-1-i}.
\end{align*}
We pause here to note this proves $(1)$. We continue rearranging the equation to finish the derivation of $(2)$. 
\begin{align*}
 y_t - y_{t-1} & =C B u_{t-1} + \sum_{i = 1}^{t-1}  C  \left( A^{i} - A^{i-1} \right)  B u_{t-1-i} \\
    & = C B u_{t-1}  +      \sum_{n = 1}^{d_A}  
 C e_n e_n^{\top} B   \sum_{i = 1}^{t-1}  \left( \alpha_n^{i} - \alpha_n^{i-1} \right)  u_{t-1-i} \\
 & =  C B u_{t-1} + \sum_{n = 1}^{d_A}  
 (C_n B_n^{\top}) u_{(t-2):0} \mu_{\alpha_j}.
\end{align*}
Observe that
\begin{equation*}
    \sum_{i = 1}^{T-1} \phi_i  \phi_i^{\top} = \id.
\end{equation*}
Using this we have,
\begin{align*}
    y_t - y_{t-1} 
 & =   C B u_{t-1} + \sum_{n = 1}^{d_A}  
 (C_n B_n^{\top}) u_{(t-2):0} \mu_{\alpha_n} \\
 & =  C B u_{t-1} + \sum_{n = 1}^{d_A}  
(C_n B_n^{\top})  u_{(t-2):0}  \left( \sum_{i = 1}^{T}  \phi_{i} \phi_{i}^{\top}  \right) \mu_{\alpha_n}  \\
  & =  C B u_{t-1} + \sum_{i = 1}^{T}  \sum_{n = 1}^{d_A} \phi_{i}^{\top} \mu_{\alpha_n} 
 (C_n B_n^{\top}) u_{(t-2):0}  \phi_{i} .
\end{align*}
Recalling the definition of $\mtrue$ and $v_i = \sigma_{i-1}^{1/4} \phi_{i-1}$ we therefore have established $(2)$:
\begin{equation*}
    y_t - y_{t-1} = \mtrue_1 u_{(t-1):0} e_1 + \sum_{i = 2}^{T-1} \mtrue_i u_{(t-1):0} v_i.
\end{equation*}
Next we aim to prove $(3)$. We consider 
\begin{equation*}
    \norm{ \sum_{i = k+1}^{T} \mtrue_{i} u_{(t-2):0}  v_{i}}. 
\end{equation*}
By Lemma 13.4 in \cite{hazan2022introduction} there is some universal constant $c'$ such that,
\begin{equation*}
   \max_{\alpha \in [0,1]} \abs{\phi_{i}^{\top} \mu_{\alpha}} \leq c' T^2 \exp(-i/\log(T)).
\end{equation*}
So,
\begin{align*}
    \norm{ \mtrue_i u_{(t-2):0} v_i }  & = \norm{ \sum_{n = 1}^{d_A} \sigma_{i-1}^{-1/4} \phi_{i-i}^{\top} \mu_{\alpha_n}  (C_n B_n^{\top}) u_{(t-2):0}  \left( \sigma_{i-1}^{1/4} \phi_{i-1} \right) } \\
    & = \norm{ \sum_{n = 1}^{d_A}  \phi_{i-i}^{\top} \mu_{\alpha_n}  (C_n B_n^{\top}) u_{(t-2):0}  \phi_{i-1}  } \\
    & \leq  d_A  ( c' T^2 \exp(-(i-1)/\log(T)))  \norm{ C_n B_n^{\top}} \norm{ \phi_{i-1} }_1  \\
    & \leq c' d_A T^{3/2} \exp(-(i-1)/\log(T)))\norm{C} \norm{B}.
\end{align*}
Therefore, 
\begin{align*}
  \norm{\sum_{i = k+1}^{T} \mtrue_{i} u_{(t-1):0}  \phi_{i}} &  \leq c' d_A T^{5/2} \exp(-k/\log(T)))\norm{C} \norm{B}
\end{align*}
Therefore as long as 
\begin{equation*}
  k  \geq  \log(T) \log \left( \frac{T^{5/2} c' d_A \norm{C} \norm{B} }{\epsilon}\right),
\end{equation*}
then 
\begin{equation*}
    \norm{\sum_{i = k+1}^{T} \mtrue_{i} u_{(t-1):0}  \phi_{i}} \leq \frac{\epsilon}{T}.
\end{equation*}
Next we note that the proof of $(4)$ that $\norm{\mtrue_i} \leq \norm{C} \norm{B}$ is proven in Lemma D.1 of \cite{hazan2017learning}. Similarly, the proof of $(5)$ that $\norm{v_i}_1 \leq \log(T)$ is proven by Lemma~\ref{lemma:phi_is_sparse} from \cite{hazan2017learning}.
Finally we prove $(6)$. Since $h(\alpha) = \alpha - 1$ and $\ell_1 =1 $, we have
\begin{align}
\label{eqn:alpha_terms}
    \max_{\alpha(A)} \left \{ \abs{ h(\alpha) \alpha^{L- \ell_1 - 1} (1 - \alpha^{T - L + 1}) (1 -\alpha)^{-1}} \right \} & = \max_{\alpha(A) } \alpha^{L-2} (1 - \alpha^{T-L + 1}).
\end{align}
To bound Eq.~\ref{eqn:alpha_terms}, consider the case where $\alpha$ is bounded away from $1$. Suppose $\alpha = 1 - \delta$, then
\begin{align*}
    (1 - \delta)^{L-2} \leq \frac{1}{T^p} \iff \log\left( \frac{1}{1 - \delta} \right)  \geq \frac{p \log(T)}{L-2}.
\end{align*}
Observe that for $\delta \in [0, 1]$, $\log(1/(1-\delta)) \geq \delta/2$. Therefore, if
\begin{align*}
    \delta \geq \frac{2p \log(T)}{L-2},
\end{align*}
we are guaranteed that $\alpha^{L-2} \leq 1/T^p$. Next consider when $\alpha$ is very close to $1$; suppose $\alpha \geq 1 - \frac{1}{T^p T}$ for $p < 1/2$. Then using that $(1 - x)^q \geq 1- 2qx$ for $x \in [0,1]$ we have
\begin{equation*}
    \alpha^{T - L+1} \geq \left( 1 - \frac{1}{T^p T} \right)^{T - L+1} \geq 1 - 2  \frac{T - L+1}{T^p T} \implies 1 - \alpha^{T - L+1} \leq 2  \frac{T - L+1}{T^p T} \leq \frac{2}{T^p}.
\end{equation*}
Plugging in $p = 1/4$ we conclude that 
\begin{equation*}
   \alpha^{L-2} (1 - \alpha^{T - L+1 }) \leq T^{-1/4} \qquad \textrm{ for any } \alpha \in \left[ 0, 1 - \frac{ \log(T)}{2(L-2)} \right] \cup \left[  1 - \frac{1}{2 T^{5/4}}, 1  \right].
\end{equation*}
\end{proof}

The following lemma comes from \cite{hazan2017learning}.
\begin{lemma}[Hazan, Singh, Zhang]
\label{lemma:phi_is_sparse}
    Let $(\sigma_j, \phi_j)$ be the $j$-th largest eigenvalue-eigenvector pair of the $T \times T$ Hankel matrix. Then,
    \begin{equation*}
        \norm{\phi_j}_1 \leq O\left( \frac{\log(T)}{\sigma_j^{1/4}}\right).
    \end{equation*}
\end{lemma}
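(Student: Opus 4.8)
The plan is to reproduce the argument of \cite{hazan2017learning}, whose mechanism trades the fast decay of the diagonal of the Hankel matrix against the unit $\ell_2$-normalization of the eigenvector, scale by scale. Write the $T\times T$ Hankel matrix as $H_T=\int_0^1\mu_\alpha\mu_\alpha^\top\,d\alpha$ with $\mu_\alpha(i)=(1-\alpha)\alpha^{i-1}$, and fix the eigenpair $H_T\phi_j=\sigma_j\phi_j$, $\|\phi_j\|_2=1$. I would first record three elementary facts: (i) reading off the $i$-th coordinate of $H_T\phi_j=\sigma_j\phi_j$ gives the integral representation $\phi_j(i)=\sigma_j^{-1}\int_0^1\mu_\alpha(i)\,\langle\mu_\alpha,\phi_j\rangle\,d\alpha$; (ii) the normalization forces $\int_0^1\langle\mu_\alpha,\phi_j\rangle^2\,d\alpha=\phi_j^\top H_T\phi_j=\sigma_j$; and (iii) a Beta-function computation gives $\int_0^1\mu_\alpha(i)^2\,d\alpha=\int_0^1(1-\alpha)^2\alpha^{2i-2}\,d\alpha=\tfrac{2}{(2i-1)(2i)(2i+1)}=\Theta(i^{-3})$ --- it is this cubic decay (from the $(1-\alpha)$ weight in $\mu_\alpha$) that makes the exponent $1/4$ the right one. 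Cauchy--Schwarz applied to (i), using (ii) and (iii), then yields the entrywise estimate $|\phi_j(i)|\le\sigma_j^{-1}\big(\int\mu_\alpha(i)^2\big)^{1/2}\big(\int\langle\mu_\alpha,\phi_j\rangle^2\big)^{1/2}=O(i^{-3/2}\sigma_j^{-1/2})$, while trivially $|\phi_j(i)|\le\|\phi_j\|_2=1$.

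The second step is to sum over dyadic coordinate blocks $B_m=\{i:2^m\le i<2^{m+1}\}$, $0\le m\le\lceil\log_2 T\rceil$. On each block I have two competing bounds for $\sum_{i\in B_m}|\phi_j(i)|$: Cauchy--Schwarz with $\|\phi_j\|_2\le 1$ gives at most $\sqrt{|B_m|}\le 2^{m/2}$, while the decay estimate gives at most $|B_m|\cdot O(2^{-3m/2}\sigma_j^{-1/2})=O(2^{-m/2}\sigma_j^{-1/2})$. Summing $\min(2^{m/2},\,O(2^{-m/2}\sigma_j^{-1/2}))$ over all scales, the two geometric tails meet at the crossover scale $2^m\asymp\sigma_j^{-1/2}$ and each contribute $O(\sigma_j^{-1/4})$; if $\sigma_j$ is so small that this crossover exceeds $T$ then every block sits in the $2^{m/2}$ regime, the total is $O(\sqrt T)$, and this is itself $O(\sigma_j^{-1/4})$. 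Either way $\|\phi_j\|_1=O(\sigma_j^{-1/4})$, which already implies the stated bound; the extra $\log T$ is slack, and in particular it can absorb the polynomial-in-$T$ prefactors that appear if one instead derives the entrywise estimate from the cruder eigenvector bounds of \cite{hazan2022introduction} (as in the use of Lemma 13.4 there within Lemma~\ref{lemma:application_of_vanilla_to_general}).

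The calculation is essentially routine; the one point that needs care is the final combination. If one keeps only the pointwise estimate $|\phi_j(i)|\le\min(1,\,O(i^{-3/2}\sigma_j^{-1/2}))$ and sums coordinate by coordinate, one gets merely $\|\phi_j\|_1=O(\sigma_j^{-1/3})$, which does \emph{not} imply the claim, since $\sigma_j^{-1/3}\le\log(T)\sigma_j^{-1/4}$ fails for small $\sigma_j$; the improvement to $\sigma_j^{-1/4}$ requires using the $\ell_2$ mass bound at the level of whole dyadic blocks rather than at individual coordinates. Handling the tail regime $\sigma_j=O(T^{-2})$ separately via the naive estimate $\|\phi_j\|_1\le\sqrt T\,\|\phi_j\|_2$ then closes the argument.
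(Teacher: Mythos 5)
Your argument is correct, and it takes a genuinely different route from the one the paper relies on. The paper does not prove this lemma itself but cites Lemma E.5 of \cite{hazan2017learning}, whose proof bounds the $2\to 1$ operator norm of $H_T^{1/4}$ by an induction that doubles $T$ at each step (this mechanism is visible in the paper's adaptation for $N_T$ in Lemma~\ref{lemma:nu_l1_norm_filter}, which re-verifies the base case $\norm{N_4^{1/4}}_{2\to 1}<2$ and the entrywise bound on the error block); each doubling costs an additive constant, which is where the $\log T$ comes from, and the lemma follows from $\norm{\phi_j}_1=\sigma_j^{-1/4}\norm{H_T^{1/4}\phi_j}_1\le\sigma_j^{-1/4}\norm{H_T^{1/4}}_{2\to 1}$. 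Your proof instead works directly with the eigenvector identity $\phi_j(i)=\sigma_j^{-1}\int_0^1\mu_\alpha(i)\langle\mu_\alpha,\phi_j\rangle\,d\alpha$, and your three ingredients check out: $\int_0^1\langle\mu_\alpha,\phi_j\rangle^2\,d\alpha=\phi_j^\top H_T\phi_j=\sigma_j$, the Beta integral $\int_0^1(1-\alpha)^2\alpha^{2i-2}\,d\alpha=\tfrac{2}{(2i-1)(2i)(2i+1)}=\Theta(i^{-3})$, and hence $\abs{\phi_j(i)}=O(i^{-3/2}\sigma_j^{-1/2})$ by Cauchy--Schwarz. The dyadic-block summation of $\min(2^{m/2},\,O(2^{-m/2}\sigma_j^{-1/2}))$ is also handled correctly, including your important observation that summing the pointwise minimum coordinate-by-coordinate only gives $O(\sigma_j^{-1/3})$ and that the block-level $\ell_2$ budget is what rescues the exponent $1/4$. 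What your approach buys: it avoids matrix fourth roots and the induction entirely, and it actually yields the stronger bound $\norm{\phi_j}_1=O(\sigma_j^{-1/4})$ with no $\log T$ factor, since the two geometric tails already converge at the crossover scale. What the paper's (Hazan--Singh--Zhang) approach buys: the $2\to 1$ bound on $H_T^{1/4}$ is a statement about the matrix rather than about individual eigenvectors, so it transfers essentially verbatim to the modified Hankel matrices such as $N_T$ with only the base case and the entrywise error estimate re-checked --- which is exactly the reuse the appendix performs.
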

\section{Length Generalization for Spectral Filtering Using Two Autoregressive Components}
\label{appendix:two_vanilla}
The proof of Theorem~\ref{thm:lengthgeneralization_two_auto} ultimately comes from Theorem~\ref{thm:general_length_gen} and its proof in Appendix~\ref{appendix:general}. Theorem~\ref{thm:general_length_gen} abstracts the necessary assumptions needed to obtain a length generalization guarantee. In Lemma~\ref{lemma:application_two_auto} we prove that Algorithm~\ref{alg:sf_two} satisfies these assumptions. 
\begin{proof}[Proof of Theorem~\ref{thm:lengthgeneralization_two_auto}]
    By Lemma~\ref{lemma:application_two_auto} and the assumptions made in the statement of Theorem~\ref{thm:lengthgeneralization_two_auto}, we may apply Theorem~\ref{thm:general_length_gen} to Algorithm~\ref{alg:sf_two} to get that \begin{equation*}
    \sum_{t = 1}^T \ell_t(M^t, L) - \min_{M^* \in \K_{\norm{C} \norm{B}}} \sum_{t=1}^T \ell_t(M^* , T) \leq \left( 12 k^{3/2} \norm{C}^2  \norm{B}^2 \log^2(T) + 8 \norm{C}^2 \norm{B}^2 \right) \sqrt{T}.
\end{equation*}
\end{proof}

\begin{lemma}[Length Generalization Using Two Autoregressive Components]
\label{lemma:application_two_auto}
Recall that in Algorithm~\ref{alg:sf_two} we define
\begin{equation*}
\tilde{\mu}_{\alpha,T} \defeq (\alpha - 1)^2 \begin{bmatrix}  1 & \alpha &  \dots & \alpha^{T} \end{bmatrix}^{\top} \in \reals^{T}
\end{equation*}
and
and $N_T = \int_{\alpha \in [0,1]} \tilde{\mu}_{\alpha,T} \tilde{\mu}_{\alpha,T}^{\top} d \alpha$ and we let $\tilde{\phi}_1, \dots, \tilde{\phi}_{T-2}$ be the orthonormal eigenvectors of $N_{T-2}$ with eigenvalues $\tilde{\sigma}_1, \dots, \tilde{\sigma}_{T-2}$. 
    Algorithm~\ref{alg:sf_two} is equivalent to Algorithm~\ref{alg:general_sf} with the following:
    \begin{enumerate}[label=(\alph*)]
    \item $p_t(y_{t-1:1}) = 2 y_{t-1} - y_{t-2}$
    \item $v_1 = e_1$, $v_2 = e_2$ and for $i \geq 3$, $v_i = (0, 0, \sigma_{i-2}^{1/4} \tilde{\phi}_{i-2})$ 
\end{enumerate}

Define $\mtrue$ as follows:
\begin{equation*}
    \mtrue_1 \defeq CB,
\end{equation*}
\begin{equation*}
    \mtrue_2 \defeq C(A-2I) B,
\end{equation*}
and for $i \geq 3$,  
\begin{equation*}
    \mtrue_{i}  \defeq \sum_{n = 1}^{d_A} \left( \sigma_{i}^{-1/4} \tilde{\phi}_{i}^{\top} \tilde{\mu}_{\alpha_n} \right) (C_n B_n^{\top}).
\end{equation*}
Then the following properties hold 
\begin{enumerate}
\item For $h(A) = (A-I)^2$ and $\ell_1 = 2$
\begin{equation*}
     y_t - p_t(y_{t-1:1})  =  \sum_{i = 1}^{\ell_1} \mtrue_i u_{t-i} + \sum_{i = 1}^{t-\ell_1} C A^i h(A) B u_{t - \ell_1 - i}.
\end{equation*}
\item $y_t - p_t(y_{t-1:1}) = \sum_{i = 1}^T \mtrue_i u_{t-1:1} v_i$.
\item For $k = \Omega( \log(T d_A \norm{C} \norm{B}/\epsilon))$,
\begin{equation*}
    \norm{\sum_{i = k+1}^T \mtrue_i u_{t-1:1} v_i} \leq \epsilon/T.
\end{equation*}
\item For any $i \in [T]$
\begin{equation*}
    \norm{\mtrue_i} \leq \norm{C} \norm{B}.
\end{equation*}
\item  For any $i \in [T]$, $\norm{v_i}_1 \leq \log(T)$ and $\{v_i \}_{i \in [T]}$ are orthonormal.
\item Finally if the spectrum of $A$ lies in the interval \begin{equation*}
    \left[ 0, 1 - \frac{ \log(T)}{2(L-2)} \right] \cup \left[  1 - \frac{1}{2 T^{1/4}}, 1  \right],
\end{equation*}
then
    \begin{equation*}
    \max_{\alpha(A)} \left \{ \abs{ h(\alpha) \alpha^{L- \ell_1 - 1} (1 - \alpha^{T - L + 1}) (1 -\alpha)^{-1}} \right \} \leq \frac{1}{T^{1/4}}.
\end{equation*}  
\end{enumerate}

\end{lemma}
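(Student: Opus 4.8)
The plan is to follow the proof of Lemma~\ref{lemma:application_of_vanilla_to_general} essentially line for line, replacing the telescoping of first-order differences $y_t - y_{t-1}$ by that of second-order differences $y_t - 2y_{t-1} + y_{t-2}$. Points $(a)$ and $(b)$ are immediate from comparing the update rule of Algorithm~\ref{alg:sf_two} with that of Algorithm~\ref{alg:general_sf}: the autoregressive part becomes $p_t(y_{t-1:1}) = 2y_{t-1} - y_{t-2}$, the first two feature coordinates carry $u_{t-1}$ and $u_{t-2}$ (hence $v_1 = e_1$, $v_2 = e_2$), and the remaining coordinates carry the rescaled eigenvectors $\tilde\sigma_j^{1/4}\tilde\phi_j$ of $N_{T-2}$ (the standard reparameterization $M_i \leftrightarrow \tilde\sigma_i^{1/4}M_i$ reconciles the $\tilde\sigma^{1/4}$ rescaling with the orthonormality used downstream). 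For point $(1)$, write $y_t = \sum_{\ell \ge 1} CA^{\ell-1}Bu_{t-\ell}$ and collect the coefficient of each $u_{t-\ell}$ in $y_t - 2y_{t-1} + y_{t-2}$: the identity $A^{\ell-1} - 2A^{\ell-2} + A^{\ell-3} = A^{\ell-3}(A-I)^2$ produces the bulk term $\sum_i CA^{i}(A-I)^2 B\, u_{t-2-i}$ with $h(A) = (A-I)^2$ and $\ell_1 = 2$, while the $\ell = 1,2$ terms give exactly $\mtrue_1 = CB$ and $\mtrue_2 = C(A-2I)B$.

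Point $(2)$ then follows as in the vanilla case: diagonalize $A$, observe that $(\alpha_n - 1)^2\sum_{j}\alpha_n^{j}u_{t-3-j} = u_{(t-3):0}\,\tilde\mu_{\alpha_n}$, insert the resolution of identity $\sum_i \tilde\phi_i\tilde\phi_i^{\top} = I$ over the eigenbasis of $N_{T-2}$, and read off $\mtrue_i$ for $i \ge 3$ from the resulting expression. For point $(3)$ the $\tilde\sigma^{\pm 1/4}$ factors cancel in $\mtrue_i\, u_{t-1:1}v_i$, leaving $\|\mtrue_i u_{t-1:1}v_i\| \le d_A\,\max_\alpha|\tilde\phi_{i-2}^{\top}\tilde\mu_\alpha|\,\|C\|\|B\|\,\|\tilde\phi_{i-2}\|_1$; combining the $N_{T-2}$-analog of Lemma~13.4 of \cite{hazan2022introduction}, namely $\max_\alpha |\tilde\phi_j^{\top}\tilde\mu_\alpha| \le \mathrm{poly}(T)\,e^{-\Omega(j/\log T)}$ (valid because $N_{T-2}$ is again a Hankel matrix with exponentially decaying spectrum), with the $N_{T-2}$-analog of Lemma~\ref{lemma:phi_is_sparse}, namely $\|\tilde\phi_j\|_1 \le O(\log(T)/\tilde\sigma_j^{1/4})$, the tail over $i > k$ is at most $\epsilon/T$ once $k = \Omega(\log T \cdot \log(T d_A\|C\|\|B\|/\epsilon))$. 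Points $(4)$ and $(5)$ are bookkeeping: $\|\mtrue_1\|, \|\mtrue_2\| \le 2\|C\|\|B\|$ (since $\|A\|\le 1$ gives $\|A-2I\|\le 2$) and $\|\mtrue_i\| \le \|C\|\|B\|$ for $i \ge 3$ by the argument of Lemma~D.1 of \cite{hazan2017learning} applied to $N_{T-2}$, while the $v_i$ are pairwise orthogonal with $\|v_i\|_2 \le 1$ and $\|v_i\|_1 \le \log T$ (the first two trivially, the rest because $\tilde\sigma_j^{1/4}\|\tilde\phi_j\|_1 \le O(\log T)$).

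The one genuinely new estimate is point $(6)$, and it is where the gain from the second autoregressive term appears. With $h(\alpha) = (1-\alpha)^2$ and $\ell_1 = 2$, the quantity to control simplifies to $(1-\alpha)\,\alpha^{L-3}\,(1-\alpha^{T-L+1})$ — one extra factor of $(1-\alpha)$ relative to the vanilla quantity $\alpha^{L-2}(1-\alpha^{T-L+1})$. For $\alpha$ bounded away from $1$ (i.e. $1 - \alpha \ge \log(T)/(2(L-2))$) the factor $\alpha^{L-3} \le e^{-(1-\alpha)(L-3)}$ is already $\le T^{-1/4}$ for $T$ large, exactly as in the vanilla proof; for $\alpha$ near $1$, the extra $(1-\alpha)$ makes the whole product at most $1-\alpha$, so it now suffices to require $1 - \alpha \le \tfrac12 T^{-1/4}$ rather than the vanilla condition $1 - \alpha \le \tfrac12 T^{-5/4}$. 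This single observation is what shrinks the excluded ``bad'' interval near $1$ from width $\sim T^{-5/4}$ to width $\sim T^{-1/4}$, and hence what yields the improved spectral hypothesis of Theorem~\ref{thm:lengthgeneralization_two_auto}. The main obstacle I anticipate is making the exponential-decay bound for the eigenvectors of $N_{T-2}$ precise — transferring Lemma~13.4 of \cite{hazan2022introduction} from $H_T$ to $N_{T-2}$ — since the extra weighting $(1-\alpha)^2$ alters the constants in the Hankel spectral-decay estimate; every other step is a transcription of the vanilla argument under the substitution $(A-I) \mapsto (A-I)^2$.
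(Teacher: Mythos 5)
Your proposal follows the paper's proof essentially step for step: the same second-order telescoping giving $h(A)=(A-I)^2$ and $\ell_1=2$, the same resolution-of-identity argument over the eigenbasis of $N_{T-2}$ for point $(2)$, and the same observation for point $(6)$ that the extra factor of $(1-\alpha)$ is precisely what relaxes the near-$1$ condition from width $T^{-5/4}$ to width $T^{-1/4}$. The one piece you flag as the anticipated obstacle --- transferring the spectral-decay and $\ell_1$ estimates from $H_T$ to $N_{T-2}$ --- is exactly what the paper supplies in Lemmas~\ref{lemma:nu_alpha_phi_small}, \ref{lemma:exp_decay_nu_eigs}, and \ref{lemma:nu_l1_norm_filter}, via a Lipschitz bound on $\alpha \mapsto (\tilde{\phi}_i^{\top}\tilde{\mu}_{\alpha})^2$ combined with a trace bound on $N_T$ giving eigenvalue decay $\tilde{\sigma}_j \lesssim c^{-j/\log T}$; so your outline is correct and complete modulo those auxiliary lemmas.
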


\begin{proof}
    Suppose $y_t$ evolves as an LDS. By definition, there exist matrices $(A, B, C, D)$ such that
\begin{equation*}
    y_{t} = \sum_{i = 1}^t C A^{i-1} B u_{t-i},
\end{equation*}
where we assume $D = I$ and $A$ is diagonal without loss of generality.  Let $\alpha_1, \dots, \alpha_{d_A}$ denote the eigenvalues of $A$.  and let $u_{t:0}$ be the $d_{\mathrm{in}} \times T$ (padded) matrix $u_{t:0}  = \begin{bmatrix}
        u_t & 
        u_{t-1} & 
        \dots & 
        u_0 & 0 \end{bmatrix}$.
Then we have $(1)$:
\begin{align*}
    y_t - 2 y_{t-1} + y_{t-2} & = CB u_{t-1} + C(A-2I)Bu_{t-2} + \sum_{i = 0}^{t-3} C  A^i(A^2 - 2A + I) B u_{t-3-i}.
\end{align*}
Let $\alpha_1, \dots, \alpha_{d_A}$ denote the eigenvalues of $A$. We observe the following equality:
\begin{align*}
    \sum_{i = 0}^{t-3} C  A^i(A^2 - 2A + I) B u_{t-3-i}
    & =  \sum_{i = 0}^{t-3} C  \sum_{n = 1}^{d_A} \alpha_n^i (\alpha_n -1)^2 e_n e_n^{\top} B u_{t-3-i} \\
    & = \sum_{n = 1}^{d_A}   \left( C e_n e_n^{\top} B\right)    \sum_{i = 0}^{t-3} \alpha_n^i (\alpha_n -1)^2 u_{t-3-i} \\
    & = \sum_{n = 1}^{d_A}   \left( C_n B_n^{\top} \right)     u_{(t-3):0} \tilde{\mu}_{\alpha_n}.
\end{align*}
Observe that
\begin{equation*}
    \sum_{i = 1}^{T-2} \tilde{\phi}_i  \tilde{\phi}_i^{\top} = \id.
\end{equation*}
Using this we have,
\begin{align*}
    \sum_{i = 0}^{t-3} C  A^i(A^2 - 2A + I) B u_{t-3-i} 
 & =  \sum_{n = 1}^{d_A}   \left( C_n B_n^{\top} \right)     u_{(t-3):0} \tilde{\mu}_{\alpha_n} \\
 & = \sum_{n = 1}^{d_A}   \left( C_n B_n^{\top} \right)     u_{(t-3):0} \left(  \sum_{i = 1}^{T-2} \tilde{\phi}_i  \tilde{\phi}_i^{\top}  \right) \tilde{\mu}_{\alpha_n} \\
 & = \sum_{i = 1}^{T-2} \left( \sum_{n = 1}^{d_A}  \tilde{\phi}_i^{\top} \tilde{\mu}_{\alpha_n} \left( C_n B_n^{\top} \right)    \right)  u_{(t-3):0}  \tilde{\phi}_i \\
 & = \sum_{\ell = 3}^{T} \mtrue_{\ell} u_{(t-1):0}  v_{\ell}.
\end{align*}
Therefore we have established $(2)$. Next we aim to prove $(3)$. We consider 
\begin{equation*}
    \norm{ \sum_{i = k+1}^{T} \mtrue_{i} u_{(t-1):0}  v_{i}}. 
\end{equation*}
Combining Lemma~\ref{lemma:nu_alpha_phi_small} and Lemma~\ref{lemma:exp_decay_nu_eigs} gives us that there is some constant $c'$ such that,
\begin{equation*}
   \max_{\alpha \in [0,1]} \abs{\tilde{\phi}_{i}^{\top} \tilde{\mu}_{\alpha}} \leq c'  \exp(-i/4\log(T)).
\end{equation*}
So,
\begin{align*}
    \norm{ \mtrue_i u_{(t-1):0} v_i }  & = \norm{ \sum_{n = 1}^{d_A} \sigma_{i-1}^{-1/4} \tilde{\phi}_{i-i}^{\top} \tilde{\mu}_{\alpha_n}  (C_n B_n^{\top}) u_{(t-1):0}  \left( \sigma_{i-1}^{1/4} \tilde{\phi}_{i-1} \right) } \\
    & = \norm{ \sum_{n = 1}^{d_A}  \tilde{\phi}_{i-i}^{\top} \tilde{\mu}_{\alpha_n}  (C_n B_n^{\top}) u_{(t-2):0}  \tilde{\phi}_{i-1}  } \\
    & \leq  d_A   \exp(-(i-1)/4\log(T))  \norm{ C_n B_n^{\top}} \norm{ \phi_{i-1} }_1  \\
    & \leq c' d_A \sqrt{T}  \exp(-(i-1)/4\log(T)) \norm{C} \norm{B}.
\end{align*}
Therefore, 
\begin{align*}
  \norm{\sum_{i = k+1}^{T} \mtrue_{i} u_{(t-1):0}  v_{i}} &  \leq c' d_A T^{3/2}  \exp(-i/4\log(T)) \norm{C} \norm{B}.
\end{align*}
Therefore as long as 
\begin{equation*}
  k  \geq  4 \log(T) \log \left( \frac{T^{3/2} c' d_A \norm{C} \norm{B} }{\epsilon}\right),
\end{equation*}
then 
\begin{equation*}
    \norm{\sum_{i = k+1}^{T} \mtrue_{i} u_{(t-1):0}  v_{i}} \leq \frac{\epsilon}{T}.
\end{equation*}
To prove $(4)$ we note that the statement is obvious for $i \leq 2$. For $i \geq 3$ the proof from Lemma D.1 of \cite{hazan2017learning} directly applies due to Lemma~\ref{lemma:nu_alpha_phi_small}. Next, Lemma~\ref{lemma:nu_l1_norm_filter} proves $(5)$.  Finally we prove $(6)$. Next, Lemma~\ref{lemma:nu_l1_norm_filter} proves $(5)$.  Finally we prove $(6)$. Since we have $h(\alpha) = (\alpha-1)^2$ and $\ell = 2$,
\begin{align}
\label{eqn:alpha_terms_vanilla_two}
    \max_{\alpha(A)} \left \{ \abs{ h(\alpha) \alpha^{L- 3} (1 - \alpha^{T - L + 1}) (1 -\alpha)^{-1}} \right \}  & =  \max_{\alpha(A)} \left \{ (1 - \alpha)\alpha^{L- 3} (1 - \alpha^{T - L + 1})  \right \}.
\end{align}
To bound Eq.~\ref{eqn:alpha_terms_vanilla_two}, consider the case where $\alpha$ is bounded away from $1$. Suppose $\alpha = 1 - \delta$, then
\begin{align*}
    (1 - \delta)^{L-3} \leq \frac{1}{T^p} \iff \log\left( \frac{1}{1 - \delta} \right)  \geq \frac{p \log(T)}{L - 3}.
\end{align*}
Observe that for $\delta \in [0, 1]$, $\log(1/(1-\delta)) \geq \delta/2$. Therefore, if
\begin{align*}
    \delta \geq \frac{2p \log(T)}{L - 3},
\end{align*}
we are guaranteed that $\alpha^{L-3} \leq 1/T^p$. Next consider when $\alpha$ is very close to $1$. To ensure that Eq.~\ref{eqn:alpha_terms_tensor_version} is bounded by $1/T^p$ we only require
\begin{equation*}
    \alpha \geq 1 - \frac{1}{T^p}.
\end{equation*}
Plugging in $p = 1/4$, we conclude that Eq.~\ref{eqn:alpha_terms_tensor_version} is bounded by $T^{-1/4}$ if 
\begin{equation*}
\alpha_n \in \left[ 0, 1 - \frac{ \log(T)}{2(L-3)} \right] \cup \left[  1 - \frac{1}{T^{1/4}}, 1  \right] \textrm{ for all } n \in [d_A].
\end{equation*}
\end{proof}

\subsection{Properties of the Hankel Matrix for Two Autoregressive Terms}
In Algorithm~\ref{alg:sf_two} we define
\begin{equation*}
\tilde{\mu}_{\alpha} \defeq (\alpha - 1)^2 \begin{bmatrix}  1 & \alpha &  \dots & \alpha^{T} \end{bmatrix}^{\top} \in \reals^{T}
\end{equation*}
and
\begin{equation*}
N_{T} = \int_{\alpha \in [0,1]} \tilde{\mu}_{\alpha} \tilde{\mu}_{\alpha}^{\top} d \alpha.
\end{equation*}
In what follows we present and prove several lemmas needed for the proof of Theorem~\ref{thm:lengthgeneralization_two_auto}.
\begin{lemma}[Properties of $N_T$]
\label{lemma:nu_alpha_phi_small}
    For any $\alpha \in [0,1]$ and $1 \leq i \leq T$,
    \begin{equation*}
       \max_{\alpha \in [0,1]} \abs{\phi_i^{\top} \tilde{\mu}_{\alpha}} \leq 6^{1/4} \sigma_i^{1/4}.
    \end{equation*}
\end{lemma}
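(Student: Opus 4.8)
The plan is to upgrade a pointwise ($L^\infty$) bound on the polynomial $\alpha \mapsto \phi_i^\top\tilde\mu_\alpha$ to an $L^2$ bound, which is governed \emph{exactly} by the eigenvalue $\sigma_i$. Write $f(\alpha) \defeq \phi_i^\top\tilde\mu_\alpha$, a polynomial in $\alpha$. Since $N_T = \int_0^1 \tilde\mu_\alpha\tilde\mu_\alpha^\top\,d\alpha$ and $\phi_i$ is its $i$-th unit eigenvector, $\int_0^1 f(\alpha)^2\,d\alpha = \phi_i^\top N_T\phi_i = \sigma_i$. So the task is to show that a polynomial with small $L^2[0,1]$ norm also has small $L^\infty[0,1]$ norm, exploiting that $f$ cannot oscillate too wildly.

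First I would prove a Sobolev-type reduction: for every $\alpha_0,\beta\in[0,1]$, $f(\alpha_0)^2 = f(\beta)^2 + \int_\beta^{\alpha_0} 2 f(s)f'(s)\,ds$, so by Cauchy--Schwarz $f(\alpha_0)^2 \le f(\beta)^2 + 2\|f\|_{L^2}\|f'\|_{L^2}$; averaging over $\beta\in[0,1]$ (which makes boundary maxima, where $f'\neq 0$, harmless) gives $\max_{\alpha_0} f(\alpha_0)^2 \le \|f\|_{L^2}^2 + 2\|f\|_{L^2}\|f'\|_{L^2} = \sigma_i + 2\sqrt{\sigma_i}\,\|f'\|_{L^2}$. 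It then remains to bound $\|f'\|_{L^2}$ by an absolute constant: $\|f'\|_{L^2}^2 = \int_0^1 (\phi_i^\top\tilde\mu_\alpha')^2\,d\alpha = \phi_i^\top\bigl(\int_0^1 \tilde\mu_\alpha'(\tilde\mu_\alpha')^\top d\alpha\bigr)\phi_i \le \bigl\|\int_0^1 \tilde\mu_\alpha'(\tilde\mu_\alpha')^\top d\alpha\bigr\|$. Here the $(1-\alpha)^2$ prefactor in $\tilde\mu_\alpha$ is crucial: the entrywise derivative of $(1-\alpha)^2\alpha^j$ is $(1-\alpha)\alpha^{j-1}\bigl(j-(j+2)\alpha\bigr)$, and the extra $(1-\alpha)$ factor cancels the $j$- (hence $\sqrt T$-) growth one would otherwise see, so $\sup_{\alpha\in[0,1]}\|\tilde\mu_\alpha'\|_2$ and $\int_0^1\|\tilde\mu_\alpha'\|_2^2\,d\alpha$ are $T$-independent constants. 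Likewise $\mathrm{tr}(N_T) = \int_0^1\|\tilde\mu_\alpha\|_2^2\,d\alpha = \int_0^1 \tfrac{(1-\alpha)^3}{1+\alpha}\,d\alpha \le \tfrac14$, so $\sigma_i\le\tfrac14$ and therefore $\sigma_i \le \tfrac12\sqrt{\sigma_i}$. Substituting, $\max_\alpha(\phi_i^\top\tilde\mu_\alpha)^2 \le \sqrt{\sigma_i}\bigl(\tfrac12 + 2\|f'\|_{L^2}\bigr) \le \sqrt{6}\,\sqrt{\sigma_i}$, which is exactly the claimed $6^{1/4}\sigma_i^{1/4}$.

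The main obstacle is purely quantitative: to land on the constant $\sqrt 6$ one must carefully evaluate $\int_0^1\|\tilde\mu_\alpha'\|_2^2\,d\alpha$ — a finite sum of Beta-type integrals $\int_0^1 (1-\alpha)^2\alpha^{2j-2}\bigl(j-(j+2)\alpha\bigr)^2 d\alpha$ — and indeed bound the \emph{operator} norm of $\int_0^1 \tilde\mu_\alpha'(\tilde\mu_\alpha')^\top d\alpha$ (which is smaller than the trace and is what the stated constant needs). Everything else — the Sobolev inequality, the identification $\int_0^1 f^2 = \sigma_i$, and the normalization $\sigma_i\le\tfrac14$ — is immediate. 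This argument parallels the classical pointwise bound for the vanilla Hankel matrix, where $\mu_\alpha = (1-\alpha)[1,\alpha,\dots]$ plays the role of $\tilde\mu_\alpha = (1-\alpha)\mu_\alpha$, and the same template (Sobolev reduction $+$ $T$-independent derivative bound $+$ trace normalization) is what one would then adapt to obtain the exponential eigenvalue decay of $N_T$ used elsewhere in the appendix.
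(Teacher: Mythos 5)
Your overall strategy---identify $\int_0^1(\phi_i^\top\tilde\mu_\alpha)^2\,d\alpha=\phi_i^\top N_T\phi_i=\sigma_i$ and then upgrade $L^2$ control to $L^\infty$ control via a derivative bound---is the right one, but your route is genuinely different from the paper's and, as proposed, cannot reach the stated constant. The paper never bounds $\norm{f'}_{L^2}$; it instead shows that the \emph{squared} function $f_w(\alpha)=(w^\top\tilde\mu_\alpha)^2$ is $6$-Lipschitz for every unit vector $w$ (in $(f_w)'=4(\alpha-1)^3g^2+2(\alpha-1)^4gg'$ the prefactors $(\alpha-1)^3,(\alpha-1)^4$ exactly cancel the geometric sums bounding $g=\sum_jw_j\alpha^{j-1}$ and $g'$), and then invokes the elementary extremal fact that a nonnegative $L$-Lipschitz function on $[0,1]$ with integral $A$ has maximum at most $\sqrt{LA}$; with $L=6$ and $A=\sigma_i$ this gives $\sqrt{6\sigma_i}$ on the nose. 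In other words, the paper controls $(f^2)'=2ff'$ pointwise, pairing the decay of $f$ against the growth of $f'$, rather than controlling $f'$ alone in $L^2$.

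The concrete gap is exactly the step you defer. Your Sobolev reduction plus $\sigma_i\le\tfrac14$ requires $\tfrac12+2\norm{f'}_{L^2}\le\sqrt6$, i.e.\ $\norm{f'}_{L^2}\le(\sqrt6-\tfrac12)/2\approx0.975$, and you propose to obtain this from the operator norm of $M\defeq\int_0^1\tilde\mu'_\alpha(\tilde\mu'_\alpha)^\top d\alpha$. But the first entry of $\tilde\mu_\alpha$ is $(1-\alpha)^2$, so the $(1,1)$ entry of $M$ is $\int_0^14(1-\alpha)^2\,d\alpha=4/3$, and since $M\succeq0$ its operator norm is at least every diagonal entry; hence $\norm{M}\ge4/3$ and the best your bound can yield is $\norm{f'}_{L^2}\le\sqrt{4/3}\approx1.155$, giving a final constant of about $(\tfrac12+2\sqrt{4/3})^{1/2}\approx1.68$ rather than $6^{1/4}\approx1.565$. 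So the asymptotic content $\max_\alpha\abs{\phi_i^\top\tilde\mu_\alpha}=O(\sigma_i^{1/4})$ survives (and that is all the downstream lemmas truly need), but the lemma as stated is not proved by this route: you would need either the paper's pointwise Lipschitz argument or a finer bound on $\phi_i^\top M\phi_i$ that exploits the structure of $\phi_i$ rather than passing through $\norm{M}$.
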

\begin{proof}
    We have
    \begin{align*}
        \int_{\alpha \in [0,1]} \left( \phi_i^{\top} \tilde{\mu}_{\alpha} \right)^2 d \alpha 
        & = \phi_i^{\top} \left(  \int_{\alpha \in [0,1]}  \tilde{\mu}_{\alpha} \tilde{\mu}_{\alpha}^{\top}  d \alpha\right) \phi_i \\
        & = \phi_i^{\top} N_T \phi_i = \sigma_i.
    \end{align*}
    Next we observe that for $f_w(\alpha) \defeq \left( w^{\top} \tilde{\mu}_{\alpha} \right)^2 $, where $w$ is any unit-norm vector, we have that $f_w$ is $6$-Lipschitz on $[0,1]$. Indeed,
    \begin{align*}
        f_w'(\alpha) & = \frac{d}{d \alpha}  (\alpha - 1)^4 \left(  \sum_{i = 1}^T w_i \alpha^{i-1} \right)^2 \\
        & =  2 (\alpha - 1)^4 \left(  \sum_{i = 1}^T w_i \alpha^{i-1} \right) \left( \sum_{i = 2}^T (i-1) w_i \alpha^{i-2} \right) + 4 \left(  \sum_{i = 1}^T w_i \alpha^{i-1} \right)^2(\alpha - 1)^3 \\
        & \leq 2  (\alpha - 1)^4 \left( \frac{1 - \alpha^T}{1 - \alpha} \right) \left( \sum_{i = 1}^{T-1} i  \alpha^{i-1} \right) + 4 \left(  \frac{1 - \alpha^T}{1 - \alpha}\right)^2(\alpha - 1)^3 \\
        & =  2  (\alpha - 1)^4 \left( \frac{1 - \alpha^T}{1 - \alpha} \right) \left( \frac{1 - T \alpha^{T-1} + (T-1) \alpha^T}{(1 - \alpha)^2} \right) + 4 \left(  \frac{1 - \alpha^T}{1 - \alpha}\right)^2(\alpha - 1)^3 \\
        & =  2   \left(1 - \alpha^T \right) \left( 1 - T \alpha^{T-1} + (T-1) \alpha^T \right) + 4 \left(  1 - \alpha^T \right)^2(\alpha - 1) \\
        & \leq 2 + 4 = 6.
    \end{align*}
Consider any non-negative L-Lipschitz function $f$ that reaches some maximum value $g_{\textrm{max}}$ over $[0,1]$. The function $f$ which satisfies $L$-Lipschitzness, attains $g_{\textrm{max}}(f) $ and also has minimum possible area $A(f) \defeq \int_{\alpha \in [0,1]} f(\alpha) d \alpha$ is 
\begin{align*}
    f^*(\alpha) & = \begin{cases}
        L \alpha, & \textrm{ for } \alpha \in [0,\alpha^*] \\
        \max \left \{ g_{\textrm{max}} - L(\alpha - \alpha^*), 0 \right \} , & \textrm{ for } \alpha \in [\alpha^*, 1]
    \end{cases} \\
    & = \begin{cases}
        L \alpha, & \textrm{ for } \alpha \in [0,\alpha^*] \\
         g_{\textrm{max}} - L(\alpha - \alpha^*) , & \textrm{ for } \alpha \in [\alpha^*, \alpha^* + \frac{g_{\textrm{max}}}{L}] \\
         0, & \textrm{ for } \alpha \in [\alpha^* + \frac{g_{\textrm{max}}}{L}, 1]
    \end{cases}.
\end{align*}
Indeed, any oscillation away from this piecewise linear function would either increase the total area or violate the Lipschitz constraint. For this to be a valid construction we must have $L \alpha^* = g_{\textrm{max}}$ and therefore the minimum corresponding area is
\begin{equation*}
    A(f^*) = \int_{\alpha \in [0,1]} f^*(\alpha) d \alpha = \frac{1}{2} (\alpha^*)(L \alpha^*) + \frac{1}{2} (g_{\textrm{max}}/L) g_{\textrm{max}} = \frac{g_{\textrm{max}}^2}{L}.
\end{equation*}
And therefore for any function $f$ we have $g_{\textrm{max}}(f)  \leq \sqrt{L A(f)}$. Using this for $f_{\phi_i}(\alpha)$ we have
\begin{equation*}
    \max_{\alpha \in [0,1]} f_{\phi_i}(\alpha) = \max_{\alpha \in [0,1]} (\phi_i^{\top} \tilde{\mu}_{\alpha})^2 \leq \sqrt{6  \int_{\alpha \in [0,1]} \left( \phi_i^{\top} \tilde{\mu}_{\alpha} \right)^2 d \alpha } = \sqrt{6 \sigma_i}.
\end{equation*}
We conclude by noting
\begin{equation*}
    \max_{\alpha \in [0,1]} \abs{\phi_i^{\top} \tilde{\mu}_{\alpha}} = \sqrt{  \max_{\alpha \in [0,1]} (\phi_i^{\top} \tilde{\mu}_{\alpha})^2} \leq 6^{1/4} \sigma_i^{1/4}.
\end{equation*}
\end{proof}

\begin{lemma}[Adapted from Lemma $E.2$ from \cite{hazan2017learning}]
\label{lemma:exp_decay_nu_eigs}
    Let $\sigma_j$ be the $j$-th top singular value of $N_T$. Then for all $T \geq 10$ we have
    \begin{equation*}
        \sigma_j \leq \min \left( \frac{3}{2}, K \cdot c^{-j /\log(T)} \right),
    \end{equation*}
    where $c = e^{\pi^2/4} \approx 11.79$ and $K \leq 10^6$ is an absolute constant. 
\end{lemma}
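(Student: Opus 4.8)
The plan is to reduce the statement to the eigenvalue‑decay bound for the ordinary Hankel matrix $H_T$ of Eq.~\ref{eqn:Hankel_def} (Lemma E.2 of \cite{hazan2017learning}), via a Loewner‑domination argument. The observation that makes this work is that the two impulse‑response vectors differ only by a scalar: the $i$‑th coordinate of $\mu_{\alpha,T}$ is $(1-\alpha)\alpha^i$ while that of $\tilde{\mu}_{\alpha,T}$ is $(1-\alpha)^2\alpha^i$, so $\tilde{\mu}_{\alpha,T} = (1-\alpha)\,\mu_{\alpha,T}$ as vectors. Hence
\[
  N_T \;=\; \int_0^1 \tilde{\mu}_{\alpha,T}\tilde{\mu}_{\alpha,T}^\top\, d\alpha \;=\; \int_0^1 (1-\alpha)^2\, \mu_{\alpha,T}\mu_{\alpha,T}^\top\, d\alpha ,
\]
and since $0 \le (1-\alpha)^2 \le 1$ on $[0,1]$ while each $\mu_{\alpha,T}\mu_{\alpha,T}^\top$ is positive semidefinite, we get $H_T - N_T = \int_0^1 \alpha(2-\alpha)\,\mu_{\alpha,T}\mu_{\alpha,T}^\top\, d\alpha \succeq 0$, i.e. $0 \preceq N_T \preceq H_T$.

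Given this domination I would finish as follows. Both matrices are symmetric positive semidefinite, so their singular values equal their eigenvalues, and the Courant--Fischer min--max characterization gives $\sigma_j(N_T) \le \sigma_j(H_T)$ for every $j$; applying the exponential decay bound for $H_T$ from \cite{hazan2017learning} then yields $\sigma_j(N_T) \le K\, c^{-j/\log T}$ with $c = e^{\pi^2/4}$ and $K$ an absolute constant. For the uniform bound it suffices to control $\sigma_1(N_T) = \|N_T\|$: for any unit vector $w$, Cauchy--Schwarz gives $(w^\top \tilde{\mu}_{\alpha,T})^2 = (1-\alpha)^4\big(\sum_{i=0}^T w_i \alpha^i\big)^2 \le (1-\alpha)^4\sum_{i=0}^T \alpha^{2i} \le (1-\alpha)^3/(1+\alpha)$, so
\[
  \|N_T\| \;=\; \max_{\|w\| = 1}\int_0^1 (w^\top \tilde{\mu}_{\alpha,T})^2\, d\alpha \;\le\; \int_0^1 (1-\alpha)^3\, d\alpha \;=\; 1/4 \;<\; 3/2 .
\]
Taking the minimum of the two estimates gives $\sigma_j(N_T) \le \min\{3/2,\, K c^{-j/\log T}\}$.

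The only substantive input is the exponential‑decay bound for the positive Hankel matrix; the rest is the scalar identity $\tilde{\mu}_{\alpha,T} = (1-\alpha)\mu_{\alpha,T}$ together with monotonicity of eigenvalues under the Loewner order, both of which are routine. Accordingly I expect the only real obstacle to be a minor constant‑bookkeeping point: one must confirm that the rate $c = e^{\pi^2/4}$ and the constant $K \le 10^6$ quoted from \cite{hazan2017learning} are stated for (or can be adjusted to) the matrix dimension actually in play ($T+1$ here, and $T-2$ when this lemma is invoked for Algorithm~\ref{alg:sf_two}) under the hypothesis $T \ge 10$; since such a change perturbs $\log T$ only by an additive $O(1)$, the generously large $K$ in the statement is precisely there to absorb it. If one preferred to avoid citing \cite{hazan2017learning}, the fallback would be to re‑derive the decay directly from the displacement structure of $N_T$ (a rank‑$2$ perturbation of itself under conjugation by diagonal matrices) together with Zolotarev‑number estimates, exactly as in that reference; the domination argument simply lets us inherit that analysis wholesale.
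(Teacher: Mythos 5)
Your proof is correct, and it takes a genuinely different route from the paper's. The paper proves the lemma by re-running the displacement-rank/Zolotarev argument of Lemma E.2 of \cite{hazan2017learning} on $N_T$ itself, with the only modification being a recomputation of the trace bound from the explicit entry formula $(N_T)_{ij} = \tfrac{24}{(i+j-1)\cdots(i+j+3)}$, which yields $\sigma_1 \le \mathrm{tr}(N_T) < 3/2$; the exponential decay is then asserted to follow because "the remainder of the proof is an exact copy." You instead exploit the scalar identity $\tilde{\mu}_{\alpha,T} = (1-\alpha)\mu_{\alpha,T}$ to get the Loewner domination $0 \preceq N_T \preceq H_T$, and then inherit the decay of the eigenvalues of $H_T$ wholesale via Weyl's monotonicity theorem. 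This is cleaner: it requires no re-verification that the low-displacement-rank structure survives the change of weight from $(1-\alpha)$ to $(1-\alpha)^2$, and it collapses the proof to one positivity observation plus a citation. It also gives a sharper uniform bound ($\|N_T\| \le 1/4$ via your Cauchy--Schwarz computation, versus the paper's $3/2$ via the trace), though nothing downstream uses the improvement. What the paper's self-contained route buys in exchange is independence from the exact statement of the $H_T$ bound — it would in principle allow a better rate or constant specific to $N_T$ — but no such improvement is claimed, so your domination argument delivers the stated result with less work. Your closing caveat about dimension bookkeeping (the matrix in play being $(T+1)$- or $(T-2)$-dimensional) is the right thing to flag and is indeed absorbed by the slack in $K$ and the additive $O(1)$ shift in $\log T$; the paper is equally casual on this point.
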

\begin{proof}
    The proof provided in \cite{hazan2017learning} applies directly to $N_T$ with only one necessary modification to bound the trace. Observe that
    we have
\begin{align*}
    (N_{T})_{ij} & = \int_{\alpha \in [0,1]} (\alpha-1)^4 \alpha^{i+j-2} d \alpha \\
    & = \int_{\alpha \in [0,1]} \alpha^{i+j} - 2 \alpha^{i+j-1} + \alpha^{i+j-2} d \alpha \\
    & = \frac{24}{(i + j - 1) (i + j) (i + j + 1) (i + j + 2) (i + j + 3)}.
\end{align*}
Therefore,
    \begin{equation*}
        \sigma_j \leq \mathrm{tr}(N_T) = \sum_{i = 1}^T \frac{24}{(2i - 1) (2i) (2i + 1) (2i + 2) (2i+3)} \leq \sum_{i = 1}^T \frac{24}{(2i)^5 } = \frac{3}{4} \sum_{i = 1}^T \frac{1}{i^5} < \frac{3}{2}.
    \end{equation*}
    The remainder of the proof is an exact copy of the proof of Lemma $E.2$ with $3/4$ replaced by $3/2$.
\end{proof}

\begin{lemma}[Controlling the $\ell_1$ norm of the filters]
\label{lemma:nu_l1_norm_filter}
Let $(\sigma_j, \phi_j)$ be the $j$-th largest eigenvalue-eigenvector pair of $N_T$. Then for $T \geq 4$,
\begin{equation*}
    \norm{\phi_j}_1 \leq O \left( \frac{\log T}{\sigma_j^{1/4}}\right) .
\end{equation*}
\end{lemma}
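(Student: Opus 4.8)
The plan is to adapt, essentially verbatim, the proof of the analogous bound for the ordinary Hankel matrix --- Lemma~\ref{lemma:phi_is_sparse} from \cite{hazan2017learning} --- replacing the impulse-response weight $(1-\alpha)$ by $(1-\alpha)^2$ throughout and feeding in the two $N_T$-specific facts already established above: the uniform bound $\max_{\alpha\in[0,1]}|\phi_j^\top\tilde\mu_\alpha|\le 6^{1/4}\sigma_j^{1/4}$ of Lemma~\ref{lemma:nu_alpha_phi_small}, and the eigenvalue decay $\sigma_j\le\min(3/2,Kc^{-j/\log T})$ of Lemma~\ref{lemma:exp_decay_nu_eigs}.

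First I would use the eigenvector equation to write $\phi_j=\sigma_j^{-1}N_T\phi_j=\sigma_j^{-1}\int_0^1\tilde\mu_\alpha\,g_j(\alpha)\,d\alpha$ with $g_j(\alpha):=\phi_j^\top\tilde\mu_\alpha$, hence coordinatewise $(\phi_j)_i=\sigma_j^{-1}\int_0^1(1-\alpha)^2\alpha^{i-1}g_j(\alpha)\,d\alpha$. The relevant auxiliary estimates are then: (i) $|g_j(\alpha)|\le 6^{1/4}\sigma_j^{1/4}$ pointwise; (ii) $\int_0^1 g_j(\alpha)^2\,d\alpha=\phi_j^\top N_T\phi_j=\sigma_j$, so the super-level set $\{\alpha:|g_j(\alpha)|>\tau\}$ has Lebesgue measure at most $\sigma_j/\tau^2$; and (iii) the elementary moment bounds $\int_0^1(1-\alpha)^2\alpha^{i-1}d\alpha=\tfrac{2}{i(i+1)(i+2)}$, $\max_{\alpha\in[0,1]}(1-\alpha)^2\alpha^{i-1}\le\tfrac{4}{(i+1)^2}$, together with $\|\tilde\mu_\alpha\|_1=(1-\alpha)(1-\alpha^{T+1})\le 1-\alpha$. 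The heart of the proof is a per-coordinate estimate for $|(\phi_j)_i|$ obtained by splitting the $\alpha$-integral both by the height of $g_j$ (controlling $\{|g_j|>\tau\}$ via (ii) and applying (i) there, while bounding $g_j$ against its $L^2$ mass on the complement) and by the distance of $\alpha$ from $1$, then optimizing $\tau$; the trivial bound $|(\phi_j)_i|\le\|\phi_j\|_2=1$ is invoked for the leading coordinates. Summing the resulting bound over $i\in[T-1]$ leaves a harmonic-type partial sum (this is where the $\log T$ enters), and the powers of $\sigma_j$ combine to the claimed $\sigma_j^{-1/4}$; in the regime where $\sigma_j$ is so small that $\log T/\sigma_j^{1/4}$ already exceeds $\sqrt T$, which itself bounds $\|\phi_j\|_1$, the inequality is vacuous. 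Finally I would check that passing from $(1-\alpha)$ to $(1-\alpha)^2$ only shrinks $\|\tilde\mu_\alpha\|_1$ and each moment weight, so every inequality of the $H_T$ argument survives with at worst an absolute constant change and the constants $6^{1/4},K,c$ get absorbed into $O(\cdot)$; the hypothesis $T\ge 4$ is the mild condition needed for these constant manipulations.

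The step I expect to be the main obstacle is exactly the one that drives the exponent of $\sigma_j$ down from the naive $-\tfrac12$ --- which a single Cauchy--Schwarz using only $\int_0^1 g_j^2=\sigma_j$ and $\|\tilde\mu_\alpha\|_1\le 1$ already yields --- to the sharp $-\tfrac14$: this forces the pointwise bound (i) and the $L^2$-mass bound (ii) to be balanced against each other at the correct scale, both in the threshold $\tau$ and in where the coordinate sum is truncated, and it is precisely the delicate portion of the HSZ argument being transplanted. Everything else, including the two-autoregressive reweighting and the summation, is routine bookkeeping once that estimate is in hand.
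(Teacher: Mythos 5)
Your proposal is correct in substance but takes a genuinely different route from the paper. The paper does not re-derive the bound from the integral representation: it ports the proof of Lemma E.5 of \cite{hazan2017learning} wholesale, i.e.\ the dyadic induction showing $\|N_T^{1/4}\|_{2\to 1}=O(\log T)$ (whence $\|\phi_j\|_1=\sigma_j^{-1/4}\|N_T^{1/4}\phi_j\|_1\le\sigma_j^{-1/4}\|N_T^{1/4}\|_{2\to 1}$), and the only $N_T$-specific work it records is re-checking the entrywise bound $24/T^5\le 2/T^3$ on the error matrix $E$ appearing in that induction and verifying the base case $\|N_4^{1/4}\|_{2\to 1}<2$. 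Your coordinatewise argument from $(\phi_j)_i=\sigma_j^{-1}\int_0^1(1-\alpha)^2\alpha^{i-1}g_j(\alpha)\,d\alpha$ avoids matrix fourth roots and the induction entirely, and it does close --- in fact more simply than you suggest: a single Cauchy--Schwarz in $\alpha$ gives $|(\phi_j)_i|\le\sigma_j^{-1}\cdot O(i^{-5/2})\cdot\sigma_j^{1/2}$ using $\int_0^1(1-\alpha)^4\alpha^{2i-2}\,d\alpha=O(i^{-5})$ and $\int_0^1 g_j^2=\sigma_j$, and truncating at $N\asymp\sigma_j^{-1/4}$ with $\sum_{i\le N}|(\phi_j)_i|\le\sqrt N$ balances the two pieces at $\|\phi_j\|_1=O(\sigma_j^{-1/8})$, which is exactly the sharper exponent the paper mentions in passing and which implies the stated bound with room to spare. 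Two corrections to your narrative, neither fatal: (i) there is no ``harmonic-type partial sum'' and no $\log T$ arises on your route --- the $N_T$ moment weights decay fast enough that every coordinate sum converges, so the $\log T$ in the statement is pure slack for your argument (it is intrinsic only to the paper's inductive route, where it counts dyadic scales); and (ii) the level-set decomposition of $g_j$ against the pointwise bound of Lemma~\ref{lemma:nu_alpha_phi_small}, with an optimized threshold $\tau$, is unnecessary machinery here --- the $L^2$ identity and the moment bound suffice, and the pointwise bound is needed elsewhere in the paper but not for this lemma. What your approach buys is a short, self-contained proof (and a stronger exponent); what the paper's buys is that essentially nothing beyond the error-matrix and base-case checks needs to be written down.
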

\begin{proof}
This proof is a copy from the proof of Lemma E.5 in \cite{hazan2017learning} with only one noted modification. We note that $E$ as defined in their proof is entrywise bounded (for $T \geq 4$) by $24/T^5 \leq 2/T^3$ (which is the stated bound they use for their matrix of interest). We also must show the base case is true for $T_0 = 4$ instead of $T_0 = 2$. We have
\begin{align*}
    \norm{N_4^{1/4}}_{2 \to 1} = \sup_{x: \norm{x}_2 \leq 1} \norm{N_4^{1/4} x}_1 \leq \sum_{i,j = 1}^4 \abs{\left(N_4^{1/4}\right)_{ij}} < 2.
\end{align*}

We note that a tighter result is actually true for $N_T$ in that $\norm{\phi_j}_1 \leq O \left( \frac{\log T}{\sigma_j^{1/8}}\right) $. However, we omit this statement and proof because we don't leverage it for a tighter result overall.
    
\end{proof}

In Algorithm~\ref{alg:sf_two} we define
\begin{equation*}
\tilde{\mu}_{\alpha} \defeq (\alpha - 1)^2 \begin{bmatrix}  1 & \alpha &  \dots & \alpha^{T} \end{bmatrix}^{\top} \in \reals^{T}
\end{equation*}
and
\begin{equation*}
N_{T} = \int_{\alpha \in [0,1]} \tilde{\mu}_{\alpha} \tilde{\mu}_{\alpha}^{\top} d \alpha.
\end{equation*}
We have
\begin{align*}
    (N_{T})_{ij} & = \int_{\alpha \in [0,1]} (\alpha-1)^4 \alpha^{i+j-2} d \alpha \\
    & = \int_{\alpha \in [0,1]} \alpha^{i+j} - 2 \alpha^{i+j-1} + \alpha^{i+j-2} d \alpha \\
    & = \frac{24}{(i + j - 1) (i + j) (i + j + 1) (i + j + 2) (i + j + 3)}.
\end{align*}

\section{Proof for Length Generalization via Tensorized Spectral Filtering}
\label{appendix:tensor_proof}

The proof of Theorem~\ref{thm:lengthgeneralization_tensors} ultimately comes from Theorem~\ref{thm:general_length_gen} and its proof in Appendix~\ref{appendix:general}. Theorem~\ref{thm:general_length_gen} abstracts the necessary assumptions needed to obtain a length generalization guarantee. In Lemma~\ref{lemma:length_general_tensor} we prove that Algorithm~\ref{alg:tensor_sf} satisfies these assumptions. 
\begin{proof}[Proof of Theorem~\ref{thm:lengthgeneralization_two_auto}]
    By Lemma~\ref{lemma:length_general_tensor} and the assumptions made in the statement of Theorem~\ref{thm:lengthgeneralization_tensors}, we may apply Theorem~\ref{thm:general_length_gen} to Algorithm~\ref{alg:tensor_sf} to get that \begin{equation*}
    \sum_{t = 1}^T \ell_t(M^t, L) - \min_{M^* \in \K_{\norm{C} \norm{B}}} \sum_{t=1}^T \ell_t(M^* , T) \leq \left( 12 k^{3/2} \norm{C}^2  \norm{B}^2 \log^2(T) + 8 \norm{C}^2 \norm{B}^2 \right) \sqrt{T}.
\end{equation*}
\end{proof}

\begin{lemma}[Length Generalization for Tensorized Spectral Filtering]
\label{lemma:length_general_tensor}
Recall that in Algorithm~\ref{alg:tensor_sf} we let $T' = \left(\lceil \sqrt{T-2} \rceil \right)^2 + 2$ and $\phi_1, \dots, \phi_{\sqrt{T'-2}}$ be the orthonormal eigenvectors of $H_{\sqrt{T'-2}}$ with eigenvalues $\sigma_1, \dots, \sigma_{\sqrt{T'-2}}$. 
    Algorithm~\ref{alg:tensor_sf} is equivalent to Algorithm~\ref{alg:general_sf} with the following:
    \begin{enumerate}[label=(\alph*)]
    \item $p_t(y_{t-1:1}) = 2 y_{t-1} - y_{t-2}$
    \item $v_1 = e_1$, $v_2 = e_2$ and for $i, j \geq 1$,
    \begin{equation*}
        v_{i + k(j-1) + 2} = (0, 0, \sigma_{i}^{1/4} \phi_{i} \otimes \sigma_{j}^{1/4} \phi_j).
    \end{equation*}
\end{enumerate}
Define $\mtrue$ as follows:
\begin{equation*}
    \mtrue_1 \defeq CB,
\end{equation*}
\begin{equation*}
    \mtrue_2 \defeq C(A-2I) B,
\end{equation*}
and for $i,j \geq 1$,  
\begin{equation*}
    \mtrue_{i + k(j-1) + 2}  \defeq \sum_{n = 1}^{d_A} \frac{1 - \alpha_n}{1 - \alpha_n^{\sqrt{T}}} \left( \sigma_{i}^{-1/4} \phi_{i}^{\top} \mu_{\alpha_n} \right)  \left( \sigma_{j}^{-1/4} \phi_{j}^{\top} \mu_{\alpha_n^{\sqrt{T}}} \right)   (C_n B_n^{\top}).
\end{equation*}
Then the following properties hold 
\begin{enumerate}
\item For $h(A) = (A-I)^2$ and $\ell_1 = 2$
\begin{equation*}
     y_t - p_t(y_{t-1:1})  =  \sum_{i = 1}^{\ell_1} \mtrue_i u_{t-i} + \sum_{i = 1}^{t-\ell_1} C A^i h(A) B u_{t - \ell_1 - i}.
\end{equation*}
\item $y_t - p_t(y_{t-1:1}) = \sum_{i = 1}^T \mtrue_i u_{t-1:1} v_i$.
\item For $k = \Omega \left( \log(T) \cdot \log \left(d_A T \norm{C} \norm{B}/\epsilon  \right) \right) $,
\begin{equation*}
    \norm{\sum_{i = k+1}^T \mtrue_i u_{t-1:1} v_i} \leq \epsilon/T.
\end{equation*}
\item For any $i \in [T]$
\begin{equation*}
    \norm{\mtrue_i} \leq \norm{C} \norm{B}.
\end{equation*}
\item  For any $i \in [T]$, $\norm{v_i}_1 \leq \log^3(T)$ and $\{v_i \}_{i \in [T]}$ are orthonormal.
\item Finally if the spectrum of $A$ lies in the interval \begin{equation*}
    \left[ 0, 1 - \frac{ \log(T)}{2(L-2)} \right] \cup \left[  1 - \frac{1}{2 T^{1/4}}, 1  \right],
\end{equation*}
then
    \begin{equation*}
    \max_{\alpha(A)} \left \{ \abs{ h(\alpha) \alpha^{L- \ell_1 - 1} (1 - \alpha^{T - L + 1}) (1 -\alpha)^{-1}} \right \} \leq \frac{1}{T^{1/4}}.
\end{equation*}  
\end{enumerate}

\end{lemma}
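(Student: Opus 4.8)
The plan is to verify the six enumerated properties and then invoke Theorem~\ref{thm:general_length_gen} directly; the bulk of the argument parallels the two–autoregressive analysis of Lemma~\ref{lemma:application_two_auto}, with the tensor structure entering only in properties (2), (3) and (5). Since $p_t(y_{t-1:1}) = 2y_{t-1} - y_{t-2}$ exactly as in Algorithm~\ref{alg:sf_two}, unrolling the noiseless $(A,B,C,I)$-LDS gives
\[ y_t - 2y_{t-1} + y_{t-2} = CBu_{t-1} + C(A-2I)Bu_{t-2} + \sum_{i=0}^{t-3} C A^i (A-I)^2 B u_{t-3-i}, \]
which is property (1) with $h(A) = (A-I)^2$ and $\ell_1 = 2$. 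Property (6) depends only on $h$ and $\ell_1$, so it is verbatim Lemma~\ref{lemma:application_two_auto}(6), and property (4) is immediate for $i \le 2$ (using $A \succeq 0$, $\|A\|\le 1$) and for $i \ge 3$ follows from the argument of Lemma~D.1 of \cite{hazan2017learning} once one notes $\max_{\alpha\in[0,1]}|\phi_i^\top\mu_{\alpha}| \le O(\sigma_i^{1/4})$ and $\tfrac{1-\alpha_n}{1-\alpha_n^{\sqrt T}}\le 1$.

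The heart of the argument is property (2). Set $L = \ceil{\sqrt{T-2}}$, so $T'-2 = L^2$. The starting point is the rescaled tensor factorization underlying Eq.~\ref{eqn:shalom2}: after zero-padding the impulse response to length $L^2$,
\[ (\alpha-1)^2\,[1,\alpha,\dots,\alpha^{L^2-1}]^\top = \frac{1-\alpha}{1-\alpha^{L}}\;\mu_{\alpha,L} \otimes \mu_{\alpha^{L},L}, \]
which holds since $\mu_{\alpha,L}\otimes\mu_{\alpha^L,L} = (1-\alpha)(1-\alpha^L)[1,\alpha,\dots,\alpha^{L^2-1}]^\top$. Substituting this into the tail sum from property (1) (the padding columns introduced in passing from $T$ to $T'$ are zero, hence harmless), inserting the completeness relation $\sum_i \phi_i\phi_i^\top = \id$ in \emph{each} of the two tensor factors, and regrouping yields
\[ \sum_{i=0}^{t-3} C A^i (A-I)^2 B u_{t-3-i} = \sum_{i,j}\Bigl(\sum_{n=1}^{d_A} \tfrac{1-\alpha_n}{1-\alpha_n^{\sqrt T}}(\phi_i^\top\mu_{\alpha_n,L})(\phi_j^\top\mu_{\alpha_n^{\sqrt T},L})(C_nB_n^\top)\Bigr) u_{(t-3):0}\,(\phi_i\otimes\phi_j), \]
and matching the $\sigma^{1/4}$ normalizations built into $v$ and $\mtrue$ turns this into the claimed expansion $y_t - p_t(y_{t-1:1}) = \sum_i \mtrue_i u_{t-1:1} v_i$. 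Property (5) is then immediate from multiplicativity of the $\ell_1$ norm under Kronecker products, $\|\sigma_i^{1/4}\phi_i \otimes \sigma_j^{1/4}\phi_j\|_1 = \|\sigma_i^{1/4}\phi_i\|_1\,\|\sigma_j^{1/4}\phi_j\|_1 = O(\log^2 L) = O(\log^2 T)$ by Lemma~\ref{lemma:phi_is_sparse}, while orthogonality of the $v_i$ follows because tensor products of distinct orthonormal eigenvector pairs are orthogonal and $e_1,e_2$ are supported on the first two coordinates where all the tensor blocks vanish.

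It remains to prove property (3), the truncation bound, where the double index requires some care. Applying Lemma~13.4 of \cite{hazan2022introduction} at dimension $L\asymp\sqrt T$ gives $\max_{\alpha\in[0,1]}|\phi_i^\top\mu_{\alpha,L}| \le c' L^2 \exp(-i/\log L)$, so — using $\tfrac{1-\alpha_n}{1-\alpha_n^{\sqrt T}}\le 1$, $\|u_{t-3:0}\|_\infty\le 1$, and the cancellation of the $\sigma^{\pm 1/4}$ factors — the $(i,j)$ summand of $\sum_{(i,j)}\mtrue_{i+k(j-1)+2} u_{t-1:1} v_{i+k(j-1)+2}$ has norm at most $O\bigl(\poly(T)\,\|C\|\|B\|\,\exp(-(i+j)/\log L)\bigr)$. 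Summing this over all pairs with $i>k$ or $j>k$ and comparing with a convergent geometric series shows the truncation error is at most $\epsilon/T$ as soon as $k = \Omega\bigl(\log L\cdot\log(d_A T\|C\|\|B\|/\epsilon)\bigr) = \Omega\bigl(\log T\cdot\log(d_A T\|C\|\|B\|/\epsilon)\bigr)$, which is exactly the stated condition. The step I expect to be the main obstacle is property (2): threading the rescaling factor $\tfrac{1-\alpha}{1-\alpha^{\sqrt T}}$ through the double insertion of the completeness relation while simultaneously tracking the non-perfect-square padding (the $T' = (\ceil{\sqrt{T-2}})^2+2$ bookkeeping), so that the recovered matrices $\mtrue_{i+k(j-1)+2}$ come out with exactly the claimed closed form; once (2) is in place, (3) is a routine double geometric-series estimate and the remaining properties are either imported from Lemma~\ref{lemma:application_two_auto} or follow from elementary facts about Kronecker products.
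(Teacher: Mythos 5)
Your proposal is correct and follows essentially the same route as the paper's proof: the same LDS unrolling for (1), the same rescaled tensor factorization $\tilde\mu_\alpha = \frac{1-\alpha}{1-\alpha^{\sqrt{T}}}\mu_{\alpha,\sqrt{T}}\otimes\mu_{\alpha^{\sqrt{T}},\sqrt{T}}$ combined with the completeness relation $\sum_{i,j}(\phi_i\otimes\phi_j)(\phi_i\otimes\phi_j)^\top = I$ for (2), the Lemma~13.4 decay bound at dimension $\sqrt{T'-2}$ for (3), Lemma~D.1 for (4), Kronecker multiplicativity of the $\ell_1$ norm for (5), and the verbatim two-autoregressive argument for (6). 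The only cosmetic differences are that you track the $T'$ padding more explicitly and use the product decay $e^{-(i+j)/\log L}$ where the paper uses a single decay factor plus a count of the terms; both yield the stated bound.
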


\begin{proof}[Proof of Lemma~\ref{lemma:length_general_tensor}]
    Suppose $y_t$ evolves as an LDS. By definition, there exist matrices $(A, B, C)$ such that
\begin{equation}
\label{eqn:y_t_dyn}
    y_{t} = \sum_{i = 1}^t C A^{i-1} B u_{t-i}.
\end{equation}
Assume w.l.o.g. that $A$ is diagonal (otherwise replace $C$ with $CP$ and $B$ with $P^{-1}B$ where $P$ diagonalizes $A$). Observe that $(1)$ is true since unraveling Eq.~\ref{eqn:y_t_dyn} gives 
\begin{align*}
    y_t - 2 y_{t-1} + y_{t-2} & = CB u_{t-1} + C(A-2I)Bu_{t-2} + \sum_{i = 0}^{t-3} C  A^i(A^2 - 2A + I) B u_{t-3-i}.
\end{align*}
Let 
\begin{equation*}
\tilde{\mu}_\alpha^T \defeq (1 - \alpha)^2[1, \alpha, \alpha^2, \dots, \alpha^{T-1}],
\end{equation*}
and let
\begin{equation*}
    \mu_{\alpha}^T \defeq (1 - \alpha) [1, \alpha, \alpha^2, \dots, \alpha^{T-1}].
\end{equation*}
Observe that when $T$ is a perfect square,
\begin{equation*}
    \tilde{\mu}_\alpha^T = \frac{1-\alpha}{1-\alpha^{\sqrt{T}}} \mu_{\alpha}^{\sqrt{T}} \otimes \mu_{\alpha^{\sqrt{T}}}^{\sqrt{T}}.
\end{equation*}
Let $\alpha_1, \dots, \alpha_{d_A}$ denote the eigenvalues of $A$. Recall we set $T' = \left( \lceil \sqrt{T-2} \rceil \right)^2 + 2$ so that $\sqrt{T'-2}$ is an integer. We observe the following equality:
\begin{align*}
    \sum_{i = 0}^{t-3} C  A^i(A^2 - 2A + I) B u_{t-3-i}
    & =  \sum_{i = 0}^{t-3} C  \sum_{n = 1}^{d_A} \alpha_n^i (\alpha_n -1)^2 e_n e_n^{\top} B u_{t-3-i} \\
    & = \sum_{n = 1}^{d_A}   \left( C e_n e_n^{\top} B\right)    \sum_{i = 0}^{t-3} \alpha_n^i (\alpha_n -1)^2 u_{t-3-i} \\
    & = \sum_{n = 1}^{d_A}   \left( C_n B_n^{\top} \right)     u_{(t-3):0} \tilde{\mu}_{\alpha_n}^{T'-2}  \\
    & = \sum_{n = 1}^{d_A}  \frac{1 - \alpha_n}{1 - \alpha_n^{\sqrt{T'-2}}} \left( C_n B_n^{\top} \right)     u_{(t-3):0} \left( \mu_{\alpha_n}^{\sqrt{T'-2}} \otimes \mu_{\alpha_n^{\sqrt{T'-2}}}^{\sqrt{T'-2}} \right).
\end{align*}
Recall that $\left \{ \phi_i \right \}$ are the orthonormal eigenvectors of the Hankel matrix $H_{\sqrt{T'-2}}$. Observe that $\left \{ \phi_i \otimes \phi_j \right \}$ is an orthonormal set, inheriting this property from $\left \{ \phi_i \right \}$. Therefore, 
\begin{align*}
     u_{(t-3):0} \left( \mu_{\alpha_n} \otimes \mu_{\alpha_n^{\sqrt{T'-2}}} \right) & =  u_{(t-3):0} \left( \sum_{i,j = 1}^{\sqrt{T'-2}} (\phi_i \otimes \phi_j) (\phi_i \otimes \phi_j)^{\top} \right) \left( \mu_{\alpha_n} \otimes \mu_{\alpha_n^{\sqrt{T'-2}}} \right) \\
     & = \sum_{i,j = 1}^{\sqrt{T'-2}} \phi_i^{\top}\mu_{\alpha_n} \phi_j^{\top} \mu_{\alpha_n^{\sqrt{T'-2}}} u_{(t-3):0}  (\phi_i \otimes \phi_j) 
\end{align*}Therefore, letting 
\begin{equation*}
    M_{ij} \defeq \sum_{n = 1}^{d_A} \frac{1 - \alpha_n}{1 - \alpha_n^{\sqrt{T'-2}}}  \phi_i^{\top} \mu_{\alpha_n} \phi_j^{\top} \mu_{\alpha_n^{\sqrt{T'-2}}} \left( C_n B_n^{\top} \right),
\end{equation*}
\begin{align*}
     \sum_{n = 1}^{d_A}   \frac{1 - \alpha_n}{1 - \alpha_n^{\sqrt{T'-2}}} \left( C_n B_n^{\top} \right)     u_{(t-3):0} \left( \mu_{\alpha_n} \otimes \mu_{\alpha_n^{\sqrt{T'-2}}} \right) 
 & = \sum_{i,j = 1}^{\sqrt{T'-2}} M_{ij}  u_{(t-3):0}  (\phi_i \otimes \phi_j) \\
 & = \sum_{i,j = 1}^{\sqrt{T'-2}} \mtrue_{i + k(j-1) + 2} u_{(t-3):0}  ( \sigma_i^{-1/4} \phi_i \otimes \sigma_j^{-1/4} \phi_j) \\
 & = \sum_{\ell = 3}^{T} \mtrue_{\ell} u_{(t-1):1}  v_{\ell}.
\end{align*}
This concludes the proof of $(2)$. To prove $(3)$ we use Lemma 13.4 in \cite{hazan2022introduction}: there is some universal constant $c'$ such that,
\begin{equation*}
   \max_{\alpha \in [0,1]} \abs{\phi_{i}^{\top} \mu_{\alpha}} \leq c' (\sqrt{T'-2})^2 \exp(-i/\log(\sqrt{T'-2})) = c' T\exp(-2i/\log(T)).
\end{equation*}
Suppose 
\begin{equation*}
    k \geq \frac{1}{2} \log(T) \cdot \log \left(c' d_A T^{7/2} \norm{C} \norm{B}/\epsilon  \right) 
\end{equation*}
Let $\ell = i + k(j-1) + 2$ and suppose either $i \geq k$ or $j \geq k$ (and therefore $\ell \geq k + 2$). Then, using that for $T > 3$, $\max_{\alpha \in [0,1]} (1-\alpha)/(1 - \alpha^{\sqrt{T'-2}}) \leq 1$, 
\begin{align*}
    \norm{ \mtrue_{\ell} u_{(t-1):0} v_{\ell} }  & = \norm{ \sum_{n = 1}^{d_A} \frac{1 - \alpha_n}{1 - \alpha_n^{\sqrt{T}}}\left( \sigma_{i}^{-1/4} \phi_{i}^{\top} \mu_{\alpha_n} \right)  \left( \sigma_{j}^{-1/4} \phi_{j}^{\top} \mu_{\alpha_n^{\sqrt{T}}} \right)   (C_n B_n^{\top}) u_{(t-3):0}  \left( \sigma_{i}^{1/4} \phi_{i} \otimes \sigma_j^{1/4} \phi_j \right) } \\
    & \leq d_A \left( c' (T'-2) \exp(-2k/\log(T'-2)) \right) \norm{C} \norm{B} \norm{u_{(t-3):0}}_{\infty} \norm{\phi_i \otimes \phi_j}_1 \\
    & \leq  c' d_A T^{3/2} \exp(-2k/\log(T)) \norm{C} \norm{B} \\
    & \leq \epsilon/T^2.
\end{align*}
Therefore,
\begin{align*}
     \norm{ \sum_{\ell = k + 2}^{T'-2} \mtrue_{\ell} u_{(t-1):1} v_{\ell} } \leq \epsilon/T.
\end{align*}
The proof of $(4)$ comes from Lemma D.1 of \cite{hazan2017learning}. The proof extends directly to 
\begin{equation*}
    \mtrue_{ij} \defeq \sum_{n = 1}^{d_A} \frac{1 - \alpha}{1 - \alpha^{\sqrt{T'-2}}} \phi_i^{\top} \mu_{\alpha_n} \phi_j^{\top} \mu_{\alpha_n^L} (C_n B_n^{\top}),
\end{equation*}
since it proceeds by bounding $\max_{i \in [k]} \max_{\alpha \in [0,1]} \phi_i^{\top} \mu_{\alpha}$ by Lemma E.4. To prove $(5)$ we use Lemma~\ref{lemma:phi_is_sparse} from \cite{hazan2017learning}.

Therefore for $v_3, \dots, v_T$ we have for some universal constant $c > 0$,
\begin{align*}
    \norm{ v_{i + k(j-1) + 2}}_1 & = \norm{ \sigma_i^{1/4} \phi_i \otimes \sigma_j^{1/4} \phi_j }_1  \\
    & \leq \sigma_i^{1/4} \sigma_j^{1/4} \norm{\phi_i}_1 \norm{\phi_j}_1 \\
    & \leq c \log^2(T) \\
    & \leq \log^3(T),
\end{align*}
where the last inequality holds for $T$ large enough. \\
Finally we prove $(6)$. Since we have $h(\alpha) = (\alpha-1)^2$ and $\ell = 2$,
\begin{align}
\label{eqn:alpha_terms_tensor_version}
    \max_{\alpha(A)} \left \{ \abs{ h(\alpha) \alpha^{L- 3} (1 - \alpha^{T - L + 1}) (1 -\alpha)^{-1}} \right \}  & =  \max_{\alpha(A)} \left \{ (1 - \alpha)\alpha^{L- 3} (1 - \alpha^{T - L + 1})  \right \}.
\end{align}
To bound Eq.~\ref{eqn:alpha_terms_tensor_version}, consider the case where $\alpha$ is bounded away from $1$. Suppose $\alpha = 1 - \delta$, then
\begin{align*}
    (1 - \delta)^{L-3} \leq \frac{1}{T^p} \iff \log\left( \frac{1}{1 - \delta} \right)  \geq \frac{p \log(T)}{L - 3}.
\end{align*}
Observe that for $\delta \in [0, 1]$, $\log(1/(1-\delta)) \geq \delta/2$. Therefore, if
\begin{align*}
    \delta \geq \frac{2p \log(T)}{L - 3},
\end{align*}
we are guaranteed that $\alpha^{L-3} \leq 1/T^p$. Next consider when $\alpha$ is very close to $1$. To ensure that Eq.~\ref{eqn:alpha_terms_tensor_version} is bounded by $1/T^p$ we only require
\begin{equation*}
    \alpha \geq 1 - \frac{1}{T^p}.
\end{equation*}
Plugging in $p = 1/4$, we conclude that Eq.~\ref{eqn:alpha_terms_tensor_version} is bounded by $T^{-1/4}$ if 
\begin{equation*}
\alpha_n \in \left[ 0, 1 - \frac{ \log(T)}{2(L-3)} \right] \cup \left[  1 - \frac{1}{T^{1/4}}, 1  \right] \textrm{ for all } n \in [d_A].
\end{equation*}
   
\end{proof}

\end{document}